\documentclass{article}

\usepackage[preprint]{neurips_2026}  

\usepackage{graphicx}
\usepackage{svg}

\usepackage{amsmath}
\usepackage{dsfont}\providecommand{\mathbbm}[1]{\mathds{#1}} 
\usepackage{amssymb}
\usepackage{amsthm}
\usepackage{xcolor}
\usepackage{bm}
\usepackage[disable]{todonotes}
\usepackage{subcaption}
\usepackage{booktabs}

\usepackage[utf8]{inputenc}
\usepackage[T1]{fontenc}
\usepackage{hyperref}
\usepackage{url}

\usepackage{nicefrac}
\usepackage{microtype}

\usepackage{tikz}
\usetikzlibrary{arrows.meta,positioning,calc}
\usepackage{penn_colors}

\usepackage{soul} 
\sethlcolor{yellow}

\newtheorem{theorem}{Theorem}
\newtheorem{lemma}[theorem]{Lemma}
\newtheorem{corollary}[theorem]{Corollary}
\newtheorem{assumption}{Assumption}
\newtheorem{proposition}[theorem]{Proposition}
\newtheorem{definition}{Definition}
\newtheorem{remark}{Remark}

\input{mysymbol.sty}
\title{Size Transferability of Graph Transformers with Convolutional Positional Encodings}

\author{%
  Javier Porras-Valenzuela\thanks{Corresponding author: \texttt{jporras@seas.upenn.edu}} \\
  Department of Electrical and Systems Engineering\\
  University of Pennsylvania\\
  Philadelphia, PA, USA \\
  \And
  Zhiyang Wang \\
  Halıcıoğlu Data Science Institute\\
  University of California San Diego\\
  La Jolla, CA, USA \\
  \And
  Teresa Shang \\
  Department of Electrical and Systems Engineering\\
  University of Pennsylvania\\
  Philadelphia, PA, USA \\
  \And
  Yusu Wang \\
  Halıcıoğlu Data Science Institute\\
  University of California San Diego\\
  La Jolla, CA, USA \\
  \And
  Alejandro Ribeiro \\
  Department of Electrical and Systems Engineering\\
  University of Pennsylvania\\
  Philadelphia, PA, USA \\
}

\begin{document}

\maketitle

\begin{abstract}
    Attention-based architectures have achieved remarkable success across data modalities, motivating the rise of Graph Transformers (GTs). GTs typically encode structural information via positional encodings (PEs), which may be either embedding vectors added to the input, or modifications to the attention kernel. Given the challenges of learning from large graphs, a crucial question is whether GTs are \textit{size transferable}, that is, whether they can generalize to larger graphs than those seen during training. In this work, we study GTs through the lens of manifold limit models and establish that GTs can \textit{inherit} the transferability properties of their positional encodings. In particular, if the positional encodings are transferable, the GT is also transferable. Our theory supports a broad class of popular positional encodings and attention kernels. 
    Motivated by this theory, we consider a scalable and efficient GT with neighborhood-masked attention and GNN-based positional encodings, which are transferable, stable, and expressive. We complement our theory with extensive experiments on standard graph benchmarks, where we show that our GT exhibits transferable behavior on par with GNNs. We showcase a real-world application of sparse GT in a shortest path distance estimation task over terrain manifolds. Our results provide new insights into GTs and suggest practical directions for efficient training in large-scale settings.
\end{abstract}


\section{Introduction} \label{sec:intro}

Transformers have recently been adapted to graph‑structured data by injecting graph information through positional or structural encodings while retaining global self‑attention—yielding Graph Transformers (GTs). GTs have delivered highly competitive results in several domains, including but not limited to large‑scale molecular property prediction \citep{ying2021transformers}, biomedical knowledge graphs \citep{hu2020heterogeneous}, and long-range data benchmarks \citep{dwivedi2022long}. GTs perform attention on sets of nodes beyond their local neighborhoods, and incorporate structural information via positional encodings (PEs). A central insight across the GT literature is that encoding graph structure explicitly using PEs enables effective attention-based architectures.

Examples of PEs are embedding vectors added to node features, such as top$-k$ eigenvectors of the graph Laplacian~\citep{dwivedi2020generalization}, learned spectral encodings drawn from the Laplacian spectrum (SAN)~\citep{kreuzer2021rethinking} or learned mappings of the random walk matrix~\citep{dwivediGraphNeuralNetworks2021}. Other GTs use relative (pairwise) PEs, such as GraphiT (diffusion‑kernels)~\citep{mialon2021graphit}, the Graphormer family (shortest‑path and centrality biases)~\citep{ying2021transformers}, random walk approaches~\citep{maGraphInductiveBiases2023,geislerTransformersMeetDirected2023a}, or GKAT (heat kernel-based)~\citep{choromanskiBlockToeplitzMatricesDifferential2022}. Mask-based relative encodings replace quadratic attention with structured sparsity to trade off globality and scalability, for example, Exphormer (local neighborhood + expander graphs)~\citep{shirzad2023exphormer}, and UnifiedGT ($k$-hop neighborhood)~\citep{linUnifiedGTUniversalFramework2024}. We refer to~\citep{black2024comparing} for an in-depth analysis on the properties of absolute and relative positional encodings on GTs.


%

A key property that enables large scale graph machine learning is the ability of an architecture to generalize to larger graphs than those seen during training, known as \textit{size-transferability}. Extensive theory from spectral graph signal processing establishes that graph filters and GNNs—under continuity conditions—are stable to perturbations and transferable across graphs sampled from the same limit model \citep{wang2024geometric, wang2024stability, ruiz2023transferability, ruiz2020graphon, keriven2020convergence}. In contrast, analogous theoretical foundations pertaining GT transferability are still largely unexplored. Given the prohibitive data collection costs for large training graphs, combined with the additional computational complexity of GTs, it is paramount to develop GTs that can be transferable across different graph sizes under theoretical guarantees.

In this work, we focus on a GT with graph convolutional PEs. We then develop a theoretical analysis that links the transferability of GTs to the transferability of their positional encodings. In particular, we show that GTs can \textit{inherit} size-transferability from their PEs. Beyond establishing transferability for this specific architecture, our analytical framework also applies to a broader class of attention kernels and positional encoders. This theory validates the efficient recipe of training GTs on small graphs, then transferring to larger graphs while keeping attention regularized, achieving sub‑linear performance difference and substantial computational savings.

The main contributions are as follows:
\begin{itemize}
    \item [\textbf{[C1]}] We provide the first theoretical guarantees of transferability for GTs in Section~\ref{sec:transference}. Namely, we show that GTs inherit the transferability properties of their PEs, enabling competitive performance across different scales of graphs sampled from an underlying manifold without retraining. These guarantees cover GTs with GNN PEs, as well as a wide range of popular PEs, such as Laplacian eigenvectors and Random Walk PEs.
    \item [\textbf{[C2]}] In Section~\ref{sec:experiments}, we carry out experiments where we verify the transferability of various GT architectures on citation graphs and social network datasets (ArXiv-year, Reddit, snap-patents, MAG). We show that Dense and parse variants (attention masked) versions of GT, and two more GT architectures (GraphGPS and Exphormer) exhibit transferability properties.
    \item [\textbf{[C3]}] As an application of our theory, we showcase an experiment on shortest path distance estimation over terrain graphs, using a transferable and computationally efficient GT with attention masking.
\end{itemize}

\begin{figure*}
    \centering
    \includegraphics[width=0.85\linewidth]{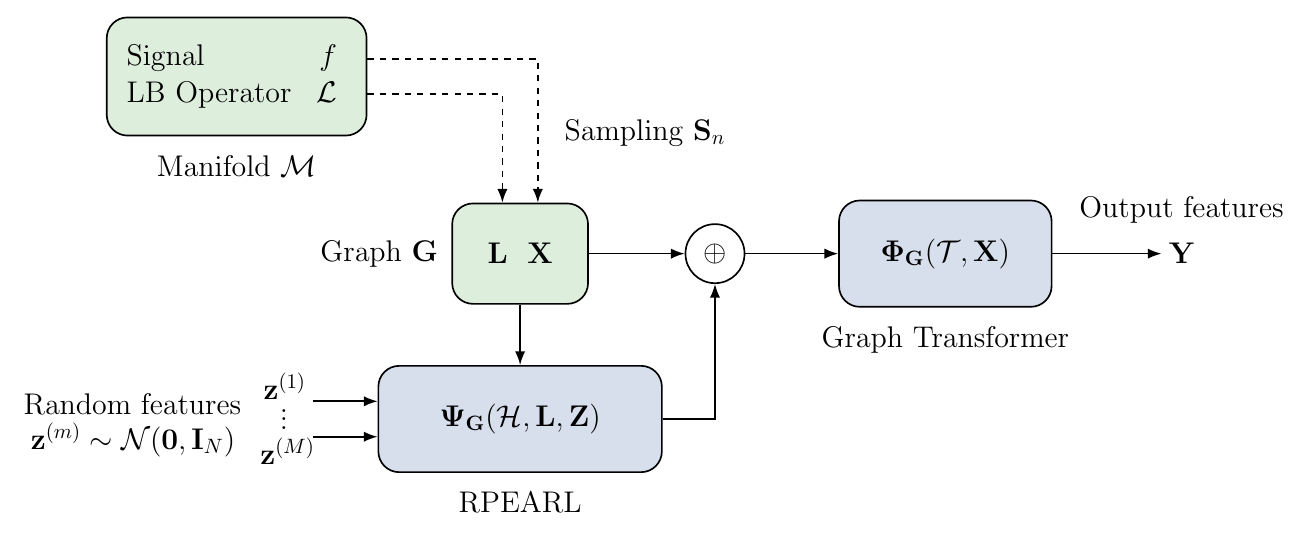}
    \caption{Diagram of Graph Transformer (GT) with RPEARL positional encodings. The graph $\bbG$ is sampled from manifold $\ccalM$. The graph structure is processed by RPEARL using a graph neural network. The positional encodings are added to the node features and passed to the graph transformer, which outputs node features $\bbY$.}
    \label{d1d}
\end{figure*}




\subsection{Related work}
\paragraph{Transferability and Generalization via Limit Models.} 
The convergence of GNNs to manifold neural networks (MNNs) has been leveraged extensively as an analytical tool to establish the transferability, stability, and statistical generalization of graph neural networks~\citep{wang2024geometric,wang2024stability,wangManifoldPerspectiveStatistical2024}. Transferability has also been explored with graphons as a limit model for graphs \citep{ruiz2020graphon, maskey2025generalization, maskey2023transferability}. More recently,~\citep{levin2025transferring} studied a general framework for transferability by relating small graph instances to limit objects. \citep{li2023transformers} presented the generalization analysis for general transformers without considering graph structural information.
\paragraph{Efficent training on large graphs.} 
GraphSage~\citep{hamiltonInductiveRepresentationLearning2017} scales GNNs to large-scale graphs by training on minibatches of neighborhood subgraphs. \citep{linUnifiedGTUniversalFramework2024} leverages GraphSage-style sampling. Based on graphon convergence results,~\citep{cervinoLearningTransferenceTraining2023} proposes training on growing graphs.
\paragraph{Length Generalization in Transformers.} 
Length generalization in large language models has been studied both empirically~\citep{zhouTransformersCanAchieve2024,anilExploringLengthGeneralization2022} and theoretically~\citep{huangFormalFrameworkUnderstanding2024}. It has been argued that PEs play an important role in achieving robust length generalization~\citep{kazemnejadImpactPositionalEncoding2023,pengYaRNEfficientContext2023} Yet, the characterization and necessary conditions for length generalization remain an open area of research. 
\paragraph{GNN-Transformer Hybrids.} Several works have proposed similar hybrid ``local‑global" models by combining message-passing GNNs with global attention. For instance, GraphGPS \citep{rampavsek2022recipe}, SGFormer~\citep{wuSGFormerSimplifyingEmpowering2023},  GraphTrans~\citep{wuRepresentingLongRangeContext2022}, and UnifiedGT~\citep{linUnifiedGTUniversalFramework2024}. Our theoretical results on the transferability of GTs also covers these hybrid architectures.

\section{Graph Transformers with Convolutional Positional Encodings}
\label{sec:RPEAL}

\paragraph{Preliminary notation.}
An undirected graph $\bbG = (\ccalV, \ccalE, \ccalW)$ contains a node set $\ccalV$ with $N$ nodes and an edge set $\ccalE \subseteq \ccalV\times \ccalV$. The weight function $\ccalW: \ccalE \rightarrow \reals$ assigns values to the edges. We define the graph Laplacian $\bbL =\bbD - \bbA$, where $\bbA \in \reals^{N\times N}$ is the weighted adjacency matrix and the diagonal matrix $\bbD$ is called the degree matrix. Node signals are vectors denoted as  $\bbx_i \in \reals^q$ for each node $i\in[1,N]$. We collect node signals into a graph signal matrix of column vectors $\bbX = [\bbx_1,\dots,\bbx_N]$.

A GT is a layered architecture that processes a graph signal $\bbX$, and outputs another signal $\bbY \in \reals^{q\times N}$ via an attention~\citep{vaswaniAttention2017} operation, where each layer computes
\begin{align}
    \bbY &= \bbPhi_\bbG(\ccalT,\bbX) = \bbV\bbX\ \text{softmax}\left[(\bbQ\bbX)^\top(\bbK\bbX)\right]^\top.\label{eq:gt_attn_trf}
\end{align}
The network $\bbPhi_\bbG$ is parameterized by the query, key and value linear maps $\bbQ, \bbK, \bbV \in \reals^{q \times q}$, collected in $\ccalT = \{\bbQ,\bbK,\bbV\}$. 
The operation $(\bbQ\bbX)^\top(\bbK\bbX)$ computes inner products between pairs of projected node features $\langle\bbQ\bbx_i,\bbK\bbx_j\rangle$, for all $i,j \in [1,N]$. Then, the softmax operation normalizes the pairwise inner products into probability vectors along each row. Finally, the output node features are computed as a linear combination of the projected values weighted by the attention coefficients.\footnote{For simplicity of presentation, we omit the transformer's feedforward layer, softmax normalization constant, and low-rank projections. We present our theory on a single-layer, single-head transformer, but its extension to these settings is straightforward.}

Structural information is introduced by addition to the input $\bbX_0 = \bbX_{\text{in}} + \bbPsi_\bbG(\ccalH,\bbL,\bbZ)$, where $\bbPsi_\bbG: \reals^{N\times p} \rightarrow \reals^{N\times q}$ is called the graph \textit{positional encoder} and $\bbX_{\text{in}}$ is the graph input signal. The operator $\bbPsi_\bbG$ maps each node into a positional encoding vector utilizing the structural information of $\bbL$, and another graph signal $\bbZ$. Given that $\bbPsi_\bbG$ is the only piece of the architecture that depends on the graph structure, the choice of $\bbPsi_\bbG$ is crucial for effective graph learning. Popular examples include top-$k$ eigenvectors of $\bbL$~\citep{dwivedi2020generalization}, or learnable embeddings based on powers of the random walk matrix~\citep{dwivediGraphNeuralNetworks2021}. 
\todo{Consider pointing to an appendix comparing positional encodings.}




\subsection{Positional encodings with Graph Neural Networks} 
To make the setup concrete, we first define GTs with a graph neural network (GNN) as a learnable positional encoder $\bbPsi_\bbG$.  A GNN is a layered architecture composed of graph convolutional filters followed by pointwise nonlinearities. One layer of a GNN is written as
\begin{align}
     \bbP = \sigma \bigg[\sum_{k=0}^{K-1} \bbH_k\bbZ\bbL^k\bigg] \label{eqn:gnn_matrix_form}
\end{align}
where $\bbZ \in \reals^{q\times N}$ are the input signals, and $\ccalH = \{\bbH_0,\dots,\bbH_{K-1}\}$ collects the filter coefficients. Equation~\eqref{eqn:gnn_matrix_form} performs a graph convolution of order $K$ on the graph signals by the graph Laplacian $\bbL$, followed by the pointwise nonlinearity $\sigma$. 

Consider taking as input single-row matrices of random node features, $\bbZ =[\bbz^{(m)}]^\top$, where $\bbz^{(m)}$ is sampled from an $N$-dimensional isotropic Gaussian. Random features play the role of node IDs, which are known to improve the expressiveness of GNNs by breaking their structural symmetries~\citep{youIdentityawareGraphNeural2021}. However, the resulting output for a single pass loses permutation equivariance. We can recover this property by drawing $M$ realizations $\{\bbz^{(m)}\}_{m=1}^M$ and taking the empirical average over independent passes of the GNN,
\begin{align}
        \bbPsi_\bbG(\ccalH,\bbL,\bbZ)  &=  \frac{1}{M}\sum_{m=1}^M\bbP^{(m)}, \qquad \bbP^{(m)} = \sigma \bigg[\sum_{k=0}^{K-1} \bbH_k\bbz^{(m)\top}\bbL^k\bigg],\label{eq:rpearl_pe}
\end{align}
where in this case the filters are $\bbH_k \in \reals^{q\times 1}$. The resulting PE $\bbPsi_\bbG(\ccalH,\bbL,\bbZ)$, named RPEARL, was shown by~\citep{kanatsoulisLearningEfficientPositional2024} to provably enhance the expressive power of GNNs beyond the WL-test while preserving key GNN properties (stability, scalability, and transferability)~\citep{ruiz2021graph}, making RPEARL a principled choice for learning PEs. Specifically, we will demonstrate the strong size-transferability of GTs leveraging RPEARL PEs both theoretically and empirically. However, we note that the theoretical results in the following section can also be applied to other PEs methods.

\section{Transferability Analysis of Graph Transformers via a Manifold Perspective} \label{sec:transference}
A central bottleneck of GTs lies in their efficient deployment on large-scale graphs. The quadratic complexity of attention relative to graph size makes naive scaling impractical, often requiring costly retraining. Transferability offers a principled way to overcome this limitation by enabling models to capture structural regularities that persist across graph sizes and instances. In particular, if a sequence of graphs admits a well-defined limit object, then a model trained on small-scale graphs can, in principle, be transferred to larger-scale graphs without retraining, while preserving its functional behavior. In this section, we show that the transferability of GTs is inherited from the transferability of their PEs. That is, once PEs are size-transferable functions over the graph limit, the induced GT naturally generalizes across graph scales. Building on this insight, we present the first transferability analysis of GTs from a manifold-based perspective, establishing a theoretical foundation for size-generalizable and scalable GTs. It is important to remark that this theory supports a general class of GTs with a variety of attention kernels and instantiations of PEs including, but not limited to GNNs. 

\subsection{Discrete graphs and operator limits}
We study transferability by relating graph-based computations to their continuum counterparts on a manifold. Let $\ccalM$ be a $d$-dimensional compact, smooth and differentiable Riemannian submanifold embedded in a $\mathsf{M}$-dimensional space $\reals^\mathsf{M}$ with finite volume. This induces a measure $\mu$ which has a non-vanishing Lipschitz continuous density $\rho$ with respect to the Riemannian volume over the manifold with $\rho:\ccalM\rightarrow(0,\infty)$, assumed to be bounded as $0<\rho_{min}\leq \rho(x) \leq \rho_{max}<\infty$ for all $x\in\ccalM$. 
The manifold data supported on each point $x\in\ccalM$ is defined by vector-valued functions $g:\ccalM\rightarrow \reals^p$ \citep{wang2024stability}. We use $L^2(\mu; \reals^p)$ to denote square-integrable $\reals^p$-valued functions over $\ccalM$ with respect to measure $\mu$.  

Given a set of $N$ i.i.d. samples $X_N = \{x_i\}_{i=1}^N$ drawn from $\mu$ over $\ccalM$, we construct a graph $\bbG_N = \bbG(\ccalV,\ccalE,\ccalW)$ on these $N$ sampled points $X_N$, where each point $x_i$ is a vertex of graph $\bbG_N$, i.e. $\ccalV = X_N$. Each pair of vertices $(x_i,x_j)$ is connected with an edge while the weight attached to the edge $\ccalW(x_i,x_j)$ is determined by a kernel function $K_\epsilon$. The kernel function is decided by the Euclidean distance $\|x_i-x_j\|$ between these two points. The graph Laplacian denoted as $\bbL_N$ can be calculated based on the weight function \citep{merris1995survey}. The constructed graph Laplacian with an appropriate kernel function has been proved to approximate the Laplace operator $\ccalL$ of $\ccalM$ \citep{calder2022improved, belkin2008towards, dunson2021spectral}. In this paper, we implement the normalized Gaussian kernel definition in \citep{dunson2021spectral}.

To compare discrete and continuous computations over the sampled graphs and over the manifold, we introduce two linear mappings that connect data from sampled graph $\bbG_N$ and manifold $\ccalM$ and back, i.e. sampling operator as $\bbS_N : L^2(\mu;\reals^p) \rightarrow \reals^{N\times p}$ and interpolation operator as $\bbI_N :    \reals^{N\times p} \rightarrow L^2(\mu;\reals^p)$. We assume that these operators are bounded and consistent \citep{wang2024geometric, levie2021transferability}, i.e. 
\begin{equation}
    \lim_{N\to \infty}\|\bbI_N \bbS_N g - g\|_{L^2(\mu)}=0, \forall\;g\in L^2(\mu;\mathbb R^p).
\end{equation}
With these mapping operators, the convergence of graph Laplacians $\bbL_N$ over constructed graphs $\bbG_N$ to the continuous operator $\ccalL$ can be written as
\begin{equation}
    \lim_{N\to \infty} \|\bbI_N \bbL_N \bbS_N g -  \ccalL g\|_{L^2(\mu)} = 0,
\end{equation}
for all $g\in L^2(\mu; \reals^p)$.

\subsection{Transferable Functions over Manifolds}
We can now define transferability as a property of functions defined through graph operators that remain well-defined in the continuum limit. Let $\bbPsi$ be a map defined on any Hilbert space with a bounded linear operator. We write this dependence explicitly as $\bbPsi_\ccalM(\ccalH,\ccalL, g)\in L^2(\mu; \reals^q)$ with $g\in L^2(\mu;\reals^p)$ as a manifold-level realization, and 
$\bbPsi_\bbG(\ccalH, \bbL_N, \bbS_N g) \in \reals^{N\times q}$ with $\bbS_N g \in \reals^{N \times p}$ as a graph-level realization. The same functional form $\bbPsi$ is thus instantiated on discrete graphs and on the continuum, with dependence on the underlying domain entering solely through the operator $\bbL_N$ or $\ccalL$. Both cases share the parametrization $\ccalH$.

\begin{definition}[Transferable functions over Manifolds]\label{def:transferability} Assume that $\bbPsi$ is permutation equivariant and uniformly Lipschitz in its inputs.
    $\bbPsi$ is said to be size-generalizable with respect to the operator convergence $\bbL_N \to \ccalL$ if 
for every admissible input function $g\in L^2(\mu;\mathbb R^p)$,
\begin{equation}
   \lim_{N\to \infty} \big\| \bbI_N \bm\Psi_\bbG(\ccalH, \bbL_N, \bbS_N g) - \bm\Psi_\ccalM(\ccalH, \mathcal L, g) \big\|_{L^2(\mu)}= 0.
\end{equation}
\end{definition}

    

Definition~\ref{def:transferability} formalizes the notion of transferability as a function whose output converges to the limit object operating on the manifold. As a consequence, a transferable function $\bm\Psi_{\bbG}$ can be trained or defined on finite graphs of one size and deployed on other graph instances—possibly of different sizes or resolutions—as long as the associated operators converge to the same manifold limit $\ccalL$.

This operator-centric perspective underlies our analysis of graph transformers in the sequel, where we show that their transferability is inherited from the transferability of the positional encodings that define their operator dependence. 

\subsection{Transferable Graph Transformers}
Under the transferability framework established above, we now turn to characterizing transferability for the following generalization of GT to its counterpart limit, a manifold transformer.

\begin{definition}[Generalized GT] \label{def:ggt}
    A GT layer is given by 
    \begin{align}
        \bbPhi_\bbG(\ccalT,\bbX)_i = \frac{\sum_{j=1}^N \gamma_{ij} \bbV \bbx_j}{\sum_{j=1}^N \gamma_{ij}}, \label{eqn:kernel_gt_attn_trf}
    \end{align}
    
    where $\gamma_{ij} = \gamma(\bbx_i,\bbx_j)$, and  $\gamma: L^2(\mu;\reals^D) \times L^2(\mu;\reals^D) \rightarrow \reals_+$ is called the kernel function. 
\end{definition}
\begin{definition}[Manifold transformer]\label{def:mt}
    A manifold transformer (MT) layer is given by
    \begin{align}
        \bm\Phi_\ccalM(\ccalT,f)(x) &=
        \frac{\int_\mathcal{M} \tilde\gamma_{xy} \bbV f( y)\ d\mu( y)}
        {\int_\mathcal{M} \tilde\gamma_{xy}\ d\mu( y)} ,\label{eqn:mt_attn}
    \end{align}
    for all $x\in \ccalM$, with $\tilde\gamma_{xy}:=\gamma(f(x),f(y))$, and the input $f \in L^2(\mu:\reals^p)$ defined as the positional encoding of a manifold signal $g \in L^2(\mu:\reals^q)$, that is, $f(x) = \bm\Psi_\ccalM(\ccalH,\ccalL,g)(x)$.
\end{definition}
The manifold transformer is a map $\bm\Phi_\ccalM: L^2(\mu;\reals^q) \rightarrow L^2(\mu;\reals^p)$ resulting of the composition of $f$ (the positional encoding) with manifold attention. The vector $f(x)$ corresponds to the output of the positional encodings in the manifold domain. The manifold attention of Equation~\eqref{eqn:mt_attn} is the continuous analogue of Equation~\eqref{eqn:kernel_gt_attn_trf}, with infinitely many attention coefficients for each point in the manifold.

\begin{remark}
    The softmax graph attention of Equation \eqref{eq:gt_attn_trf} is the particular case of Definition~\ref{def:ggt} where attention coefficients are given by $\gamma_{ij} = \exp{\langle \bbQ \bbx_i, \bbK\bbx_j\rangle}$. 
\end{remark}

Notice that the attention weights form probability densities over each manifold point $\gamma_x(y)$. Thus, manifold attention is an expectation over $\bbV f(y)$ with distributions $\gamma_x$ supported on the manifold, $\bbPhi(\ccalT,f)(x) = \int_\ccalM \bbV f(y) \gamma_x(y)\ d\mu(y) = \mathbb{E}_{y\sim \gamma_x} \left[\bbV f(y)\right]$. Therefore, the special case of softmax attention can be interpreted as the expectation over values with distribution $\gamma_x$ a Gibbs measure with energies $E_x(y) = -\langle \bbQ \bbx_i, \bbK\bbx_j\rangle$.

The following set of assumptions are required to ensure the convergence of GTs to MTs.

\begin{assumption}[Bounded linear operators] \label{assm:linear_ops}
 $\bbQ$, $\bbK$, and $\bbV$ are bounded linear operators with constants $C_Q, C_K, C_V > 0$, i.e., $\|\bbQ \bbx\| \le C_Q \|\bbx\|$, $\|\bbK \bbx\| \le C_K \|\bbx\|$, $\|\bbV \bbx\| \le C_V \|\bbx\|$, $\forall\, \bbx \in \reals^q$. We write $C_\text{QKV} = C_QC_KC_V$.
\end{assumption}

\begin{assumption}[Bounded and normalized Lipschitz positional encoding]\label{assm:normalized_lipschitz_pe}
For all $x,y \in \ccalM$, the positional encoding function satisfies $\|\bbPsi_\ccalM(\ccalH,\ccalL,g)(x) - \bbPsi_\ccalM(\ccalH,\ccalL,g)(y)\| \leq |x - y|$, and is bounded as $\|\bbPsi_\ccalM(\ccalH,\ccalL,g)(x)\| \leq 1$.
\end{assumption}

Assumption \ref{assm:normalized_lipschitz_pe} is a mild assumption on the PE limit object. Assumption \ref{assm:linear_ops} can be seen as a regularization of transformer operators. These can be realized in practice via penalty terms during training process. In addition to these standard assumptions, we introduce the following two assumptions on the graph transformer's positional encoding and kernel functions.  

\begin{assumption}[Transferable positional encoding] \label{assm:transferable_pe}
    The positional encoding function $\bbPsi_\bbG(\ccalH,\bbL_N,\bbS_N g)$ is transferable and converges to a limit object $\bbPsi_\ccalM(\ccalH,\ccalL, g):\ccalM \rightarrow \reals^q$. For each point $x_i\in X_N$, the output difference between a transferable positional encoding and its limit object is denoted by $\|\bbPsi_\bbG(\ccalH,\bbL_N,\bbS_N g)(x_i) - \bbPsi_\ccalM(\ccalH,\ccalL,g)(x)\| \leq \Delta_{\bbPsi(\bbG)}$.
\end{assumption}

\begin{assumption}[Valid kernel function]
    \label{assm:kernel}
    The kernel function $\gamma_{xy}$ satisfies the following: (i) Nonnegative: For any $x,y \in \ccalM$, $\gamma_{xy}\geq 0$. (ii) Bounded: $\gamma_{xy} \leq C$ for some constant $C$, (iii) For $x,y, x',y' \in \ccalM$ and signal $f$, it holds that $\|\gamma_{xy} - \gamma_{x'y'} \| \leq L_\gamma \big(\|f(x)-f(x')\| + \|f(y)-f(y')\|\big)$.
\end{assumption}

Assumption~\ref{assm:transferable_pe} is the cornerstone requirement that enables GT transferability. It is readily satisfied in the case where $\bbPsi_\bbG$ is a GNN, which follows from established convergence results of GNNs to MNNs, shown via manifold convergence analysis in previous works~\citep{wang2024geometric}. Building on this foundation, in Appendix~\ref{app:gnn_mnn_cvg_proof} we present a convergence theorem from the literature, tailored to our setting. Similarly, this assumption is also satisfied by other positional encodings related to the graph Laplacian, such as Laplacian eigenvectors and random walk positional encodings. 

Finally, Assumption~\ref{assm:kernel} requires a well-defined kernel function. The exponential kernel used in conventional attention satisfies this assumption, which we prove in Proposition~\ref{prop:exp_kernel} (see Appendix~\ref{app:lemmas_props}), with $L_\gamma = e^B$, where $B := \sup_{u,v\in\ccalM}\ \langle \bbQ f(u),\bbK f(v) \rangle$.

We can now present our main theorem on the convergence of GT to MT, which we will use to show the transferability of GT.

\begin{theorem} (Point-wise Convergence of GT to MT) \label{thm:gt_mt_convergence}
    For each node $x_i\in X_N$, under Assumptions \ref{assm:linear_ops} --- \ref{assm:kernel}, the pointwise output difference between a GT and MT, with probability at least $1-\delta$, is bounded by
    
    \begin{align}
        &\| \bm\Phi_\bbG(\ccalT,\bbX)(x_i) - \bm\Phi_\ccalM(\ccalT,f)(x_i) \|_2 \nonumber\leq          \Delta_{\bm\Phi(\bbX)} \\
        &\qquad\qquad\qquad\qquad\qquad\qquad\qquad\qquad= \big[C_V + 4L_\gamma^2C_{\text{QKV}}\big]\Delta_{\bbPsi(\bbG)} \nonumber\\
        &\qquad\qquad\qquad\qquad\qquad\qquad\qquad\qquad+\ \big[ L_\gamma C_V + 2 L_\gamma^2 C_{\text{QKV}} \big] r,
    \end{align}
    where $r$ is the radius of the balls containing the partitions of $\ccalM$ induced by $X_N$ (Appendix~\ref{app:induce}), which satisfies $r \leq A\left(\frac{\log N}{N}\right)^{1/d}$ if $d\geq 3$ and $r \leq A(\log N)^{3/4} / N^{1/2}$ if $d=2$, with $A$ a constant related to the geometry of $\ccalM$ and $d$ the intrinsic dimension of the manifold.
\end{theorem}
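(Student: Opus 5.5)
The plan is to compare the discrete attention output at node $x_i$ with the manifold attention at the same point by introducing an intermediate object: the manifold transformer evaluated on the piecewise-constant signal obtained by interpolating the graph positional encodings over the partition $\{V_j\}_{j=1}^N$ of $\ccalM$ induced by $X_N$ (Appendix~\ref{app:induce}). Each cell $V_j$ has measure $\mu(V_j)\approx 1/N$ and is contained in a ball of radius $r$ around $x_j$. Writing $\bbx_j=\bbPsi_\bbG(\ccalH,\bbL_N,\bbS_N g)(x_j)$ and $f=\bbPsi_\ccalM(\ccalH,\ccalL,g)$, the discrete sums $\frac1N\sum_j\gamma(\bbx_i,\bbx_j)\bbV\bbx_j$ and $\frac1N\sum_j\gamma(\bbx_i,\bbx_j)$ are read as integrals of step functions over the cells. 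This turns the difference $\bm\Phi_\bbG-\bm\Phi_\ccalM$ into a difference of two ratios of integrals over the same measure.

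First I would use the elementary ratio decomposition
\[
\frac{a}{b}-\frac{a'}{b'}=\frac{a-a'}{b}+\frac{a'(b'-b)}{bb'}.
\]
Here $a,b$ are the numerator and denominator for the GT and $a',b'$ those for the MT, all at $x_i$. The second term is controlled by $\|a'/b'\|\le C_V\sup\|f\|\le C_V$ times $|b-b'|/b$. The first term is controlled by $\|a-a'\|/b$.

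Each of $\|a-a'\|$ and $|b-b'|$ I would split into two parts by inserting the intermediate kernel $\gamma(f(x_i),f(x_j))$ and values $\bbV f(x_j)$ on cell $V_j$.

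The \emph{PE error} part is bounded using Assumption~\ref{assm:kernel}(iii) and Assumption~\ref{assm:transferable_pe}. This gives $|\gamma(\bbx_i,\bbx_j)-\gamma(f(x_i),f(x_j))|\le 2L_\gamma\Delta_{\bbPsi(\bbG)}$ and $\|\bbV\bbx_j-\bbV f(x_j)\|\le C_V\Delta_{\bbPsi(\bbG)}$.

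The \emph{discretization} part compares $f(x_j)$ with $f(y)$ for $y\in V_j$. It uses the Lipschitz bound of Assumption~\ref{assm:normalized_lipschitz_pe}, giving $\|f(x_j)-f(y)\|\le r$. The kernel therefore changes by at most $L_\gamma r$ and the value by at most $C_V r$.

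Products are handled with the boundedness $\|f\|\le1$, the bound $\gamma\le C$, and the operator norms of Assumption~\ref{assm:linear_ops}. Collecting terms should give coefficients of the form $C_V+(\text{const})L_\gamma^2C_{\text{QKV}}$ for $\Delta_{\bbPsi(\bbG)}$ and $L_\gamma C_V+(\text{const})L_\gamma^2C_{\text{QKV}}$ for $r$. For the softmax case I expect the $C_{\text{QKV}}$ factors to enter through the Lipschitz constant of $\langle\bbQ u,\bbK v\rangle$ (Proposition~\ref{prop:exp_kernel}).

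The main obstacle is the denominator. I need a lower bound on $b$ and $b'$, so that dividing does not blow up, and I want the constants to match the stated form. I would lower bound $b'=\int\gamma(f(x_i),f(y))\,d\mu(y)$ using positivity of the kernel together with the Lipschitz and boundedness properties. For the exponential kernel, for instance, $\gamma\ge e^{-B}$, so $1/b\le e^{B}\le L_\gamma$ once $\mu(\ccalM)$ is normalized. This is presumably where the squares $L_\gamma^2$ arise. I would then verify the resulting bound against the stated coefficients.

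Finally, the high-probability statement and the rate for $r$ are inherited rather than proved. With probability at least $1-\delta$, the i.i.d. samples $X_N$ induce a partition whose cells are contained in balls of radius $r\le A(\log N/N)^{1/d}$ for $d\ge3$, or $A(\log N)^{3/4}/N^{1/2}$ for $d=2$. This is the optimal-transport / Voronoi-cell covering result cited in Appendix~\ref{app:induce}, and it also gives cell masses $\mu(V_j)=1/N$, which justifies the sum-as-integral identification.
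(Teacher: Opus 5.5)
Your proposal follows essentially the same route as the paper. It uses the same intermediate built from $\gamma(f(x_i),f(x_j))$ and $\bbV f(x_j)$ on the cells $V_j$ with $\mu(V_j)=1/N$, the same split into a PE-error part controlled by $\Delta_{\bbPsi(\bbG)}$ and a discretization part controlled by $r$, and the same source of the $L_\gamma^2$ factors, namely a kernel Lipschitz constant times the inverse-denominator bound $N/D_\bbG\le e^{B}$. The only organizational difference is that you apply the ratio identity once to the full difference, while the paper passes through the auxiliary ratio $\bar{\bm\Phi}_\ccalM$ and decomposes each piece separately.

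As in the paper, that denominator bound cannot come from Assumption~\ref{assm:kernel} alone. A nonnegative, bounded, Lipschitz kernel can be arbitrarily close to zero, and rescaling $\gamma$ leaves both transformers unchanged while shrinking $L_\gamma$. So the argument really applies to the exponential kernel, with $\gamma\ge e^{-B}$ and $L_\gamma=e^{B}$.
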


The proof of Theorem \ref{thm:gt_mt_convergence} is available in Appendix \ref{app:gt_mt_cvg_proof}. 
This result proves that as the number of nodes sampled in graph $\bbG$ increases, the output of GT tends to converge to the underlying MT with a rate $\ccalO\left(\left({\ \log N/N}\right)^{1/d}\right)$.
The dependence of this result on the linear operator bounds $C_{\text{QKV}}$ and $C_V$, as well as the bound on the positional encoding $\Delta_{\bbPsi(\bbG)}$, suggests that smoother linear operators $\bbQ,\bbK,\bbV$ can improve the convergence rate. 



Theorem \ref{thm:gt_mt_convergence} indicates that the pointwise output difference of GT and MT decays as the size of the sampled graph $N$ grows. This implies that we can train a GT on a small graph $\bbG_1$ with $N_1$ nodes, and \textit{transfer} it to a larger graph $\bbG_2$, with $N_2 > N_1$, with guarantees that the approximation gap to the manifold transformer's output is bounded. This implication is meaningful in practice given the $\ccalO(N^2)$ computational cost of GT -- we can train on a relatively smaller graph and ensure good performance on larger graphs. We state this below in the following corollary:

\begin{corollary} (Transferability of GTs) \label{cor:gt_transferability}
    Let graphs $\bbG_1$ and $\bbG_2$ constructed by points sampled from $\ccalM$, with $N_1$, $N_2$ nodes respectively, and graph signals $\bbX_1$ and $\bbX_2$.  Then, it holds, with probability $1-\delta$, 
    \begin{align}
   \nonumber      \frac{1}{\mu(\ccalM)} &|| \bbI_{N_1}\bm\Phi_{\bbG_1}(\ccalT,\bbX_1) - \bbI_{N_2}\bm\Phi_{\bbG_2}(\ccalT,\bbX_2)||_{L^{1,2}(\ccalM)} \leq \Delta_{\bm\Phi(\bbX_1)}  + \Delta_{\bm\Phi(\bbX_2)} \nonumber \\
   &\qquad\qquad\qquad\qquad\qquad\qquad\qquad\qquad + 4L_\gamma^2C_{\text{QKV}} \left( r_1 + r_2\right),
    \end{align}
    where $r_1$ and $r_2$ are the partition radii of $\bbG_1$ and $\bbG_2$ (Appendix~\ref{app:induce}), bounded as in Theorem~\ref{thm:gt_mt_convergence}.
\end{corollary}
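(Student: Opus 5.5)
The plan is to insert the manifold transformer output $\bm\Phi_\ccalM(\ccalT,f)$ as an intermediate object and apply the triangle inequality in $L^{1,2}(\ccalM)$:
\begin{align*}
\|\bbI_{N_1}\bm\Phi_{\bbG_1} - \bbI_{N_2}\bm\Phi_{\bbG_2}\|_{L^{1,2}} \le \|\bbI_{N_1}\bm\Phi_{\bbG_1} - \bm\Phi_\ccalM\|_{L^{1,2}} + \|\bm\Phi_\ccalM - \bbI_{N_2}\bm\Phi_{\bbG_2}\|_{L^{1,2}}.
\end{align*}
Here $L^{1,2}$ means integrating the pointwise Euclidean norm against $\mu$. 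I take $\bbI_N$ to be the piecewise-constant interpolation over the Voronoi-type partition $\{V_i\}$ induced by $X_N$ (Appendix~\ref{app:induce}). Each cell is contained in a ball of radius $r$ around $x_i$, so $(\bbI_N\bm\Phi_\bbG)(x) = \bm\Phi_\bbG(\ccalT,\bbX)(x_i)$ for $x \in V_i$.

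Each of the two terms then splits cellwise. For $x\in V_i$,
\begin{align*}
\|\bm\Phi_\bbG(x_i) - \bm\Phi_\ccalM(x)\| \le \|\bm\Phi_\bbG(x_i) - \bm\Phi_\ccalM(x_i)\| + \|\bm\Phi_\ccalM(x_i) - \bm\Phi_\ccalM(x)\|.
\end{align*}
Theorem~\ref{thm:gt_mt_convergence} bounds the first piece by $\Delta_{\bm\Phi(\bbX)}$ with probability at least $1-\delta$. I would apply it to both samples, splitting $\delta$ between the two events or assuming the stated probability refers to the joint event.

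The second piece is a Lipschitz estimate for the manifold transformer in its query point. I write $\bm\Phi_\ccalM(x) = \int \tilde\gamma_{xy}\bbV f(y)\,d\mu(y)/\int\tilde\gamma_{xy}\,d\mu(y)$. Varying $x$ to $x_i$ changes the numerator by at most $L_\gamma\|f(x)-f(x_i)\|\,C_V\mu(\ccalM)$, using Assumptions~\ref{assm:kernel} and~\ref{assm:normalized_lipschitz_pe} with $\|f\|\le1$. It changes the denominator by at most $L_\gamma\|f(x)-f(x_i)\|\mu(\ccalM)$. The quotient rule $\|a/b-a'/b'\|\le \|a-a'\|/b + \|a'\|\,|b-b'|/(bb')$ then gives a bound of order $L_\gamma^2 C_{\text{QKV}}\,|x-x_i|\le 4L_\gamma^2C_{\text{QKV}}\, r$.

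This bound needs a lower bound on the normalizer $\int\tilde\gamma_{xy}\,d\mu$. That lower bound is where the $L_\gamma$ factors absorb constants: for the exponential kernel it follows from $\gamma\ge e^{-B}$, which is exactly how Proposition~\ref{prop:exp_kernel}'s constants $L_\gamma = e^{B}$ and $C_{\text{QKV}}$ (via $B \le C_QC_K$) enter. I would mirror the constant bookkeeping of the proof of Theorem~\ref{thm:gt_mt_convergence} so that the coefficient comes out as $4L_\gamma^2C_{\text{QKV}}$.

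Integrating the pointwise bound $\Delta_{\bm\Phi(\bbX_k)} + 4L_\gamma^2C_{\text{QKV}}r_k$ over $\ccalM$ gives at most $\mu(\ccalM)$ times it. Dividing by $\mu(\ccalM)$ and summing over $k=1,2$ yields the stated inequality.

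The main obstacle I expect is the Lipschitz-in-$x$ step. It requires a uniform positive lower bound on the attention normalizer, which Assumption~\ref{assm:kernel} does not state explicitly. It also requires matching the exact constant $4L_\gamma^2C_{\text{QKV}}$. I would handle both by tracking the constants exactly as in Appendix~\ref{app:gt_mt_cvg_proof}. If the stated constant cannot be matched, the fallback is to carry the Lipschitz constant symbolically, since this affects only the constant and not the structure of the argument.
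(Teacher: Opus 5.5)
Your proposal is correct and follows the paper's proof essentially step for step. Both insert $\bm\Phi_\ccalM(\ccalT,f)$, split each $L^{1,2}$ term cellwise over $\{V_i\}$, bound the node term by Theorem~\ref{thm:gt_mt_convergence}, and bound the query-point variation $\|\bm\Phi_\ccalM(\ccalT,f)(x_i)-\bm\Phi_\ccalM(\ccalT,f)(x)\|$ using the kernel's Lipschitz property (Lemma~\ref{lem:mc_v_mc_cor}) and the exponential-kernel lower bound $e^{-B}\mu(\ccalM)$ on the normalizer; the paper gets $2e^{2B}C_{\text{QKV}}\,r_k$ for that term, already below the stated $4L_\gamma^2C_{\text{QKV}}\,r_k$ with $L_\gamma=e^{B}$, so the constant needs no further matching.
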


Finally, we can state a concrete instance of the bound of Corollary~\ref{cor:gt_transferability} given by the architecture specified in Section~\ref{sec:RPEAL} with softmax attention and RPEARL PEs.

\begin{corollary}[Transferability of GT with RPEARL PEs]\label{cor:rpearl_transferability}
    Let $\bbPhi$ be the GT in Equation~\eqref{eq:gt_attn_trf}, and $\bbPsi$ the RPEARL PE defined in Equation ~\eqref{eq:rpearl_pe}. Then, it holds, with probability $1-\delta$,
    \begin{align}
        &\| \bm\Phi_\bbG(\ccalT,\bbX)(x) - \bm\Phi_\ccalM(\ccalT,f)(x) \|_2  \leq \big[C_V + 4L_\gamma^2C_{\text{QKV}}\big]  \Bigg(C_\ccalM \left(\frac{\log (C/\delta)}{N}\right)^{\frac{1}{d+4}} \nonumber\\
        &\qquad\qquad\qquad\qquad +\sqrt\frac{\log (1/\delta)}{N}\Bigg)
        +\ \big[ L_\gamma C_V + 2 L_\gamma^2 C_{\text{QKV}} \big] A\left(\frac{\log N}{N}\right)^{1/d}.
    \end{align}
\end{corollary}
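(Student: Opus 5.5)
The plan is to specialize Theorem~\ref{thm:gt_mt_convergence} to softmax attention with RPEARL encodings. That means checking each assumption for this architecture and then substituting explicit rates for $\Delta_{\bbPsi(\bbG)}$ and for $r$.

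\textbf{Step 1: the attention kernel.} By the Remark following Definition~\ref{def:mt}, softmax attention is the instance $\gamma_{ij}=\exp\langle \bbQ\bbx_i,\bbK\bbx_j\rangle$ of Definition~\ref{def:ggt}. Proposition~\ref{prop:exp_kernel} shows that this kernel satisfies Assumption~\ref{assm:kernel} with $L_\gamma=e^B$. The constant $B$ is finite: Assumption~\ref{assm:normalized_lipschitz_pe} gives $\|f(x)\|\le 1$, and with Assumption~\ref{assm:linear_ops} this yields $B\le C_QC_K$. Assumptions~\ref{assm:linear_ops} and~\ref{assm:normalized_lipschitz_pe} themselves are regularity conditions on the trained operators and on the limit encoding. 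I take them as given, noting that a bounded Lipschitz $\sigma$ with normalized filters gives the Lipschitz and boundedness properties of the manifold RPEARL.

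\textbf{Step 2: a quantitative version of Assumption~\ref{assm:transferable_pe} for RPEARL.} For each realization $m$, $\bbP^{(m)}$ is a one-layer GNN with polynomial filter $\sum_k \bbH_k \bbz^{(m)\top}\bbL_N^k$ followed by $\sigma$. I will invoke the GNN-to-MNN convergence theorem of Appendix~\ref{app:gnn_mnn_cvg_proof}, adapted from \citep{wang2024geometric}. It bounds the pointwise gap between the GNN on $\bbG_N$ and the corresponding MNN by $C_\ccalM(\log(C/\delta)/N)^{1/(d+4)}$ with probability $1-\delta$. This rate comes from the spectral convergence of the Gaussian-kernel Laplacian in \citep{dunson2021spectral}, together with Lipschitz continuity of the filters and of $\sigma$.

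The Gaussian inputs $\bbz^{(m)}$ should be read as samples $\bbS_N$ of a white-noise manifold signal. The manifold PE is then the expectation $\mathbb{E}_z[\bbP(z)]$, and averaging over $M$ passes is a triangle inequality. The remaining gap, between the empirical average and this expectation, is what I expect to supply the $\sqrt{\log(1/\delta)/N}$ term. I would bound it with a Hoeffding-type inequality, using that $\sigma$ is bounded or Lipschitz, so that each $\bbP^{(m)}$ is a Lipschitz function of a Gaussian and concentrates. A union bound with $\delta$ split between the two events, with constants absorbed into $C$, gives
\[
\Delta_{\bbPsi(\bbG)}\le C_\ccalM\Big(\tfrac{\log(C/\delta)}{N}\Big)^{1/(d+4)}+\sqrt{\tfrac{\log(1/\delta)}{N}}.
\]

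\textbf{Step 3: substitution.} Plug this $\Delta_{\bbPsi(\bbG)}$ and the radius bound $r\le A(\log N/N)^{1/d}$ (case $d\ge3$) from Theorem~\ref{thm:gt_mt_convergence} into that theorem's bound, which gives the stated inequality.

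The main obstacle is Step 2. First, the concentration term scales as $1/\sqrt{N}$ rather than $1/\sqrt{M}$, so the randomness must be tied to the sampling of nodes. Second, the random-feature PE must be matched to a deterministic limit object. If the direct argument fails, I would absorb the Monte Carlo error into the sampling-concentration step of \citep{wang2024geometric}, which already produces a $\sqrt{\log(1/\delta)/N}$ term.
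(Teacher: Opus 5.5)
Your Steps 1 and 3 coincide with the paper. Its proof simply substitutes the GNN-to-MNN bound for $\Delta_{\bbPsi(\bbG)}$, and $r\le A(\log N/N)^{1/d}$, into Theorem~\ref{thm:gt_mt_convergence}.

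The gap is in Step 2, where you misplace the origin of the $\sqrt{\log(1/\delta)/N}$ term. In the paper that term is already part of the GNN-to-MNN bound (Theorem~\ref{thm:gnn_mnn_convergence}). It is the node-sampling concentration of $\langle \bbS_N g,\bbS_N\bm\phi_i\rangle$ around $\langle g,\bm\phi_i\rangle_\ccalM$, while the other term comes from spectral convergence of $\bbL_N$ to $\ccalL$; there is no Monte Carlo step. The Monte Carlo step you propose cannot produce the stated bound anyway. Hoeffding or Gaussian concentration for an average of $M$ independent passes gives a deviation of order $\sqrt{\log(1/\delta)/M}$, which is controlled by the number of passes and not by $N$. ``Absorbing'' it into the $N^{-1/2}$ sampling term would therefore require $M\gtrsim N$. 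Your white-noise limit object also fails the hypotheses of the GNN-to-MNN theorem. That theorem needs the input to be $\bbS_N g$ for a bandlimited, normalized-Lipschitz $g\in L^2(\mu)$. White noise is not an $L^2$ function and cannot be sampled pointwise, so you would not even have a per-pass bound to average.

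The repair is to drop the expectation altogether:
\begin{itemize}
\item Take the limit encoding to be the same $M$-average of MNN outputs, $\frac{1}{M}\sum_{m}\bbPsi_\ccalM(\ccalH,\ccalL,g^{(m)})$.
\item Apply the Appendix bound to each pass; a union bound over the $M$ passes only changes constants inside the logarithms.
\item Use the triangle inequality: an average of errors each at most $\Delta_{\bbPsi(\bbG)}$ is at most $\Delta_{\bbPsi(\bbG)}$.
\end{itemize}
This is what the paper does implicitly. You are right that reading i.i.d.\ Gaussian features as samples $\bbS_N g^{(m)}$ of Lipschitz signals is an idealization the paper does not justify. The white-noise device does not repair that, however.

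Note also that the Appendix theorem as written gives $C\epsilon^2$ with $\epsilon\ge(\log N/N)^{1/(2d+12)}$, not $(\log(C/\delta)/N)^{1/(d+4)}$. The latter exponent is taken from the cited GNN--MNN literature, so you should not attribute it to the Appendix result.
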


Corollary~\ref{cor:rpearl_transferability} verifies the transferability of the architecture discussed in Section~\ref{sec:RPEAL}. It directly follows from Theorem~\ref{thm:gt_mt_convergence}, instantiating $\Delta_{\bbPsi(\bbG)}$ as the output difference between GNN and MNN, shown in previous works and adapted to this work in Appendix~\ref{app:gnn_mnn_cvg_proof}.

\begin{figure*}[t]
    \centering
    \includegraphics{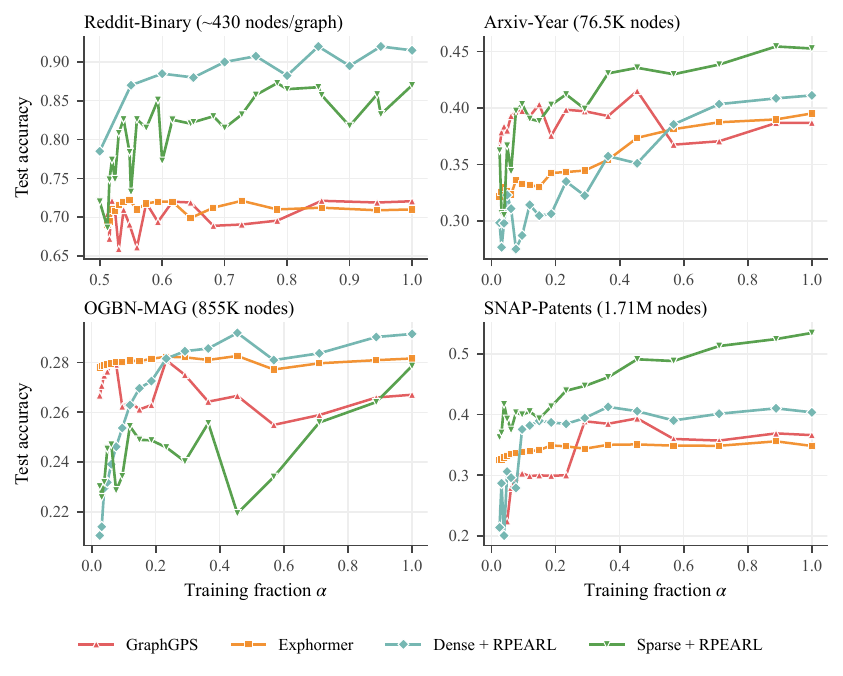}
    \caption{Transferability plots measured by accuracy (higher is better $\uparrow$).The $x$-axis represents the train graph sizes as a proportion of the largest graph $(\alpha)$, and the $y$-axis is the test accuracy at the full-sized graph. The titles show dataset name and largest graph size.}
    \label{fig:transferability_main}
\end{figure*}

\section{Experiments} \label{sec:experiments}
\subsection{Transferability on graph datasets} \label{sec:experiments_transferability_graph_ds}

Corollary \ref{cor:gt_transferability} implies that the performance gap of GTs versus the ideal MT should decay as graph sizes increase, a consequence that we now turn to validating empirically. From a graph dataset, we subsample a large testing graph $\bbG_{\text{TST}}$ with size $N_{\text{TST}}$, then we draw independent training subsample graphs $\bbG_{\text{TR}}$ with sizes $N_{\text{TR}} = \alpha N_{\text{TST}}$. We train models using different training fractions $\alpha = \{0.05, 0.1, \dots, 1.0\}$. Our theory predicts that as $\alpha$ increases, the performance of GTs on $\bbG_{\text{TST}}$ approximates that of a GT trained on the full graph. 

\paragraph{Datasets.} We evaluate on four node classification datasets. Here we present SNAP-Patents \citep{limLargeScaleLearning2021} and ArXiv-year~\citep{limLargeScaleLearning2021}. Results for OGBN-MAG~\citep{wangMicrosoftAcademicGraph2020} and REDDIT-BINARY~\citep{yanardagDeepGraphKernels2015} are available in Appendix \ref{app:additional_results}, which show similar transferability patterns in cases where GCN and GT's accuracy is comparable. We note that due to the small size of Reddit's graphs (430 nodes per graph), the smallest meaningful downsample is 50\% of the average graph size. 

\paragraph{Models.} We consider a variety of graph transformers from the literature.
We denote the fully-connected attention of Equation~\eqref{eq:gt_attn_trf} with RPEARL encodings as Dense GT. For computational efficiency, we also consider a GT with $k$-hop neighborhood mask, which we call Sparse GT + RPEARL (see Equation Equation~\eqref{eqn:sgt_vec_attn} on Appendix~\ref{app:sparse_gt} for the explicit formulation). Additionally, we compare with other transformer backbones: Exphormer ~\citep{shirzad2023exphormer} and GraphGPS~\citep{rampavsek2022recipe}. In Appendix~\ref{app:other_pes}, we provide additional results with different positional encodings and different GNN architectures for the PE. In Appendix~\ref{app:additional_results} we also provide extended results comparing with GNN and MLP baselines. 

\paragraph{Transferability of GT architectures across datasets.} Figure~\ref{fig:transferability_main} shows the test performance of GTs with increasing training fractions on large test graphs. Dense GT and Sparse GT accuracy with a small fraction of the training nodes is comparable to their peak accuracy with the largest training fraction, which indicates strong transferability. For instance, sparse GT achieves close to 90\% of its peak test accuracy with training graphs of 10.5\% of the size. We observe similar patterns on MAG, Reddit and SNAP. Moreover, GraphGPS and Exphormer also exhibit transferability across datasets, albeit at a lower peak performance than RPEARL-based GTs. We further explore the effect of different PE designs on transferability, as well as additional ablations on the $k$ neighborhood parameter of sparse GT in Appendix~\ref{app:additional_results}.


\begin{figure*}[t]
    \includegraphics{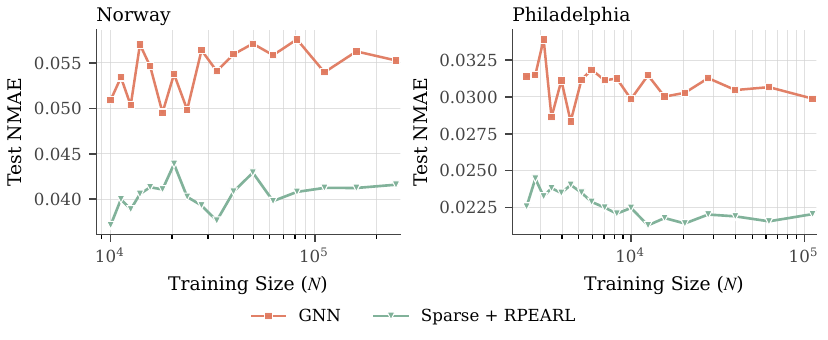}
    \caption{Transferability on terrain graphs measured by normalized mean absolute error (NMAE), lower is better $\downarrow$. The $x$-axis corresponds to the number of nodes for each training resolution, and the $y$-axis is the accuracy on the highest resolution test graph. }
    \label{fig:tg_transferability}
\end{figure*}

\paragraph{SGT improves transference and training efficiency.} We observe that a GT with neighborhood masking and RPEARL PEs exhibits a 58.63\% improvement over a dense GT without PEs when trained with $\alpha=0.3$ and tested on $\alpha=1.0$. The $k$-hop neighborhood mask of SGT trades off computational complexity and global information, resulting up to 100x training speedups relative to DGT, while retaining comparable performance (see Table~\ref{tab:runtimes} in Appendix~\ref{app:runtime}). Beyond this tradeoff, we observe improved test performance in two datasets (SNAP, ArXiv-year).



\subsection{Terrain graph application} \label{sec:experiments_terrain_graph}


A suitable application of our theoretical results emerges in the domain of distance estimation over terrain graphs. Terrains graphs are point cloud approximations of a physical manifold. Some time critical applications require low latency Shortest Path Distance (SPD) estimations between two points on a terrain, which is challenging in high-resolution point clouds with millions of nodes. It is also required that SPD estimations are computed accurately over various resolutions. This has been tackled in the deep learning as a \textit{metric learning} problem, where the task is to learn a latent space using a GNN where embedding distance approximates the true SPD between them. 

We follow the setup of~\citep{chenDecoupledNeuroGFShortest2025}.  Each dataset is comprised of high-resolution grid graph $\bbG$. We downsample $\bbG$ into coarser grid graphs $\bbG_r$ using strides $r\in\{4,5,\dots,40\}$. For each $\bbG_r$, we train SGT with an MSE loss on the $L_1$ approximation of node embeddings against its ground truth SPDs. After training, we evaluate SGT on the full resolution $\bbG$ on a new set of sample points. For hyperparameters and more implementation details, see Appendix~\ref{app:terrain_graphs}.

We present the results on the Norway~\citep{kartverket_dem_2025}, and Philadelphia~\citep{chenDecoupledNeuroGFShortest2025} in Figure~\ref{fig:tg_transferability} and provide a GNN baseline for reference. We observe that both architectures show strong transferability, achieving low errors on all training fractions. In particular, we observe that SGT trained with as little as $10,000$ nodes shows a comparable performance than training with an order of magnitude larger graphs. When averaged across fractions, we observe a 25.2\% NMAE improvement relative to GNNs in Norway, and a 26.4\% improvement in Philadelphia.
\section{Conclusions} \label{sec:conclusions}
We have presented a framework showing that graph transformers endowed with transferable positional encodings inherit their transferability. We also provided empirical verification that various GTs exhibit competitive performance on large graphs when trained on smaller graphs. These results are valuable for applications dealing with large graphs, where data collection and training costs are a major challenge. This notion of transferability inheritance can be applied to other graph limit objects, for example random graphs sampled from graphons. Furthermore, our results motivate future research directions into other structural, relative and absolute graph positional encodings that  may be proven to be transferable if they satisfy the conditions of Assumption \ref{assm:transferable_pe}.

\section{Limitations}
This work is scoped to the assumptions established in Section~\ref{sec:transference}. We have provided commentary discussing why we do not believe these assumptions are restrictive in practice. Additionally, we assume that all the graph datasets used in this work are sampled from an underlying, low-dimensional manifold, commonly known as the manifold hypothesis. There is empirical evidence suggesting that the spectrum of graph datasets is indeed supported on a low-dimensional subspace, for instance see~\citep{wangManifoldPerspectiveStatistical2024}, Appendices H, I.

\begin{ack}
\end{ack}

\clearpage

\bibliographystyle{plainnat}
\bibliography{transformer}

\newpage

\onecolumn
\section{Appendix}
\subsection{Manifold neural networks and GNN convergence}
\textbf{Laplace-Beltrami operator and manifold convolutions.} The manifold with probability density function $\rho$ is equipped with a weighted Laplace operator \citep{grigor2006heat}, generalizing the Laplace-Beltrami operator as
\begin{equation}
    \label{eqn:weight-Laplace}
    \ccalL_\rho f = -\frac{1}{2\rho} \text{div}(\rho^2 \nabla f),
\end{equation}
with $\text{div}$ denoting the divergence operator of $\ccalM$ and $\nabla$ denoting the gradient operator of $\ccalM$ \citep{bronstein2017geometric, gross2023manifolds}. The manifold convolution operation is defined relying on the Laplace operator $\ccalL$ \citep{wang2024stability}. For notational simplicity, we write $\ccalL_\rho = \ccalL$.
For a function $g\in L^2(\mu)$ as input,
a manifold convolutional filter \citep{wang2024stability} can be defined as
\begin{align} \label{eqn:manifold-convolution}
   f(x) = \bbh(\ccalL) g(x) =\sum_{k=0}^{K-1} h_ke^{-k\ccalL}g(x) \text{,}
\end{align}

\begin{align}\label{eqn:manifold-convolution2}
    f(x) = \bbh(\ccalL) g(x) = \sum_{k=0}^{K-1} h_k \ccalL^k g(x)
\end{align}
with $h_k\in\reals$ the filter parameter.

\paragraph{Manifold neural networks.} A manifold neural network (MNN) is constructed by cascading $L$ layers, each of which contains a bank of manifold convolutional filters and a pointwise nonlinearity $\sigma
:\reals \rightarrow\reals$. The output manifold function of each layer $l=1,2\cdots, L$ can be explicitly denoted as
\begin{equation}\label{eqn:mnn}
f_l^p(x) = \sigma\left( \sum_{q=1}^{F_{l-1}} \bbh_l^{pq}(\ccalL) f_{l-1}^q(x)\right),
\end{equation}
where $f_{l-1}^q$, $1 \leq q \leq F_{l-1}$ is the $q$-th input feature from layer $l-1$ and $f_l^p$, $1 \leq p \leq F_l$ is the $p$-th output feature of layer $l$. The input to the first layer is the manifold signal, $f_0^q = g^q$, and the MNN output is the last layer, $f = f_L$. 
We denote MNN  as a mapping $f = \bm\Psi_\ccalM(\ccalH,\ccalL,g)$
, where $\ccalH =\{\bbh_{l}^{pq}\}_{l,p,q}$ is a collective set of filter parameters in all the manifold convolutional filters.






\subsection{Proof of GNN to MNN convergence}
\label{app:gnn_mnn_cvg_proof}

We impose the following additional assumption for the GNN to MNN convergence proof.

\begin{assumption}[Normalized Lipschitz signals] \label{assm:manifold_signal}
The input manifold signals $g$ are normalized Lipschitz for all points $a,b \in \ccalM$, $\|g(b)-g(a)\| \leq \|b-a\|$.
\end{assumption}

\begin{assumption}[Normalized Lipschitz nonlinearity] \label{assm:nonlinearity}
The nonlinearity $\sigma$ is normalized Lipschitz continuous, i.e.,
\[
|\sigma(a) - \sigma(b)| \leq |a - b|, \qquad \sigma(0) = 0.
\]
\end{assumption}

\begin{assumption}[Non-amplifying filters] \label{assm:nonamplifying}
The filter frequency response $\hat{h}: \mathbb{R}^+ \to \mathbb{R}$ is non-amplifying. I.e., for all $\lambda \in (0,\infty)$, $\hat{h}$ satisfies $|\hat{h}(\lambda)| \leq 1$.
\end{assumption}

\begin{assumption}[Spectral continuity of the filter] \label{assm:filter}
The frequency response function of the filter satisfies
\[
|\hat{h}(\lambda)| = \mathcal{O}(\lambda^{-d}), 
\qquad 
|\hat{h}'(\lambda)| \leq C_L \lambda^{-d-1}, 
\quad \lambda \in (0,\infty),
\]
with $C_L$ a spectral continuity constant that regularizes the smoothness of the filter function.
\end{assumption}

Assumptions \ref{assm:manifold_signal} --- \ref{assm:filter} are mild assumptions on the properties of the underlying manifold, manifold signals, and filters, and are common in the analysis of Riemannian manifolds for GNN transferability. Building on this foundation, we present a convergence theorem from the literature, tailored to our setting.

\begin{theorem} \label{thm:gnn_mnn_convergence} (Point-wise Convergence of GNN to MNNs)
    For a graph $\bbG$ sampled from a manifold $\ccalM$, for each node $x_j \in X_N$, under Assumptions~\ref{assm:manifold_signal}--\ref{assm:filter}, it holds with probability $1-\delta$ that
    \begin{align}
              \|[&\bm\Psi_\bbG(\ccalH,\bbL_N,\bbS_N g)]_j - \bm\Psi_\ccalM(\ccalH,\ccalL,g)(x_j) \|_2 \nonumber \\
             &\qquad\leq \Delta_{\bbPsi(\bbG)} = 
             LF^{L-1}\left( C \epsilon^2 + \frac{\pi^2}{6} \sqrt{\frac{\log 1/\delta}{N}} \right),
    \end{align}
    where $C$ is a constant that depend on the geometry of the manifold and scales with $C_L$ and $\epsilon = \epsilon(N) \geq \left( \frac{\log N}{N}\right)^{\frac{1}{2d+12}}$.
\end{theorem}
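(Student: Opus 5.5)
The plan is to argue layer by layer, by induction on $l=1,\dots,L$, and to control the pointwise error at each node $x_j$ with a triangle inequality that splits it into a \emph{discretization} term and a \emph{propagation} term. In the discretization term, the same filter is applied to the same (sampled) input on the graph and on the manifold. In the propagation term, the same graph filter acts on two different inputs: the previous-layer graph output and the sampled previous-layer manifold output.

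\textbf{Step 1: one filter.} Fix a single convolutional filter $\bbh(\ccalL)$ satisfying Assumptions~\ref{assm:nonamplifying}--\ref{assm:filter} and a normalized Lipschitz input $g$ (Assumption~\ref{assm:manifold_signal}). I would bound $|[\bbh(\bbL_N)\bbS_N g]_j - \bbh(\ccalL)g(x_j)|$ spectrally.
\begin{itemize}
\item Expand $g$ in the eigenbasis $\{\lambda_i,\phi_i\}$ of $\ccalL$.
\item Truncate at a spectral index $M$. Spectral continuity, $|\hat h(\lambda)| = \ccalO(\lambda^{-d})$, together with Weyl's law $\lambda_i \sim i^{2/d}$ makes the high-frequency tail summable. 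This is where the $\sum_i i^{-2} = \pi^2/6$ constant comes from.
\item On the low-frequency part, invoke the eigenpair convergence of the normalized Gaussian kernel Laplacian from \citep{dunson2021spectral,calder2022improved}. Eigenvalue errors are of order $\epsilon^2$, and eigenvector errors in $\ell^\infty$ are of order $\epsilon^2$ with high probability. Combined with $|\hat h'(\lambda)|\le C_L\lambda^{-d-1}$, this gives a term $C\epsilon^2$ with $C\propto C_L$.
\item A Hoeffding/concentration argument, since $g$ and the eigenfunctions are bounded, gives the stochastic term $\sqrt{\log(1/\delta)/N}$.
\item The admissible bandwidth $\epsilon \ge (\log N/N)^{1/(2d+12)}$ is the condition under which the cited eigenvector $\ell^\infty$ estimates hold.
\end{itemize}

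\textbf{Step 2: one layer with $F$ input features.} Using Assumption~\ref{assm:nonlinearity} ($\sigma$ is $1$-Lipschitz with $\sigma(0)=0$), the nonlinearity does not increase the error. Summing over the $F$ input features multiplies the single-filter bound by $F$.

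\textbf{Step 3: propagation through later layers.} This term is controlled by the non-amplifying property, Assumption~\ref{assm:nonamplifying}. It gives $\|\bbh(\bbL_N)\|\le 1$, so the layer-$(l-1)$ error is transmitted with a factor of at most $F$ per layer.

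\textbf{Step 4: unrolling the recursion.} Let $E_l$ denote the layer-$l$ error. The recursion is $E_l \le F E_{l-1} + F\,(C\epsilon^2 + \tfrac{\pi^2}{6}\sqrt{\log(1/\delta)/N})$. Crudely bounding each of the $L$ summands by $F^{L-1}$ times the base error yields $LF^{L-1}(\cdot)$. A union bound over filters and layers is absorbed into $\delta$ by adjusting constants.

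\textbf{Main obstacle.} The subtle point is that the discretization step at layer $l$ requires the manifold feature $f_{l-1}$ to remain sufficiently regular, Lipschitz or with controlled spectral content, so that the spectral argument of Step 1 can be reapplied. I would handle this by showing that Lipschitzness is preserved: filters with bounded frequency response preserve Sobolev regularity, and $\sigma$ is $1$-Lipschitz. Failing that, I would take this regularity directly from the cited convergence results in \citep{wang2024geometric}, which is the route the statement's form suggests.
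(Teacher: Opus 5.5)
Your plan is essentially the paper's proof. It uses the same spectral bound for a single filter: eigenpair convergence from \citep{dunson2021spectral} for the $\epsilon^2$ terms, $|\hat{h}'(\lambda)|\le C_L\lambda^{-d-1}$ with Weyl's law for the $\pi^2/6$ sums, concentration of $\langle \bbS_N g,\bbS_N\bm\phi_i\rangle$ for the $\sqrt{\log(1/\delta)/N}$ term, and truncation at a bandwidth $J$. It then runs the same layer recursion through the normalized Lipschitz $\sigma$ and the non-amplifying filters, except that the paper simply reapplies the single-filter bound to $f_{l-1}^q$ and does not address the regularity issue you flag.

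The one bookkeeping difference is in how $LF^{L-1}$ arises. The paper scales the layer-$l$ discretization error by $\|\bbx_{l-1}^q\|\le F^{l-2}\|\bbx\|$ and treats the first-layer input as a single normalized signal, so $E_1\le\Delta_N$ and $E_l\le F^{l-1}\Delta_N+FE_{l-1}$. Your Step 2 instead applies a factor $F$ already at layer one, so your $L$ unrolled summands run up to $F^{L}$ times the base error, not $F^{L-1}$.
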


A detailed proof of Theorem \ref{thm:gnn_mnn_convergence} is available in~\ref{app:gnn_mnn_cvg_proof}. With this result, we are in possession of a bound for the output difference of GNN to MNN.

\begin{corollary}[Transferability of GNN positional encodings]
    \todo{State the GNN transferability bound?}
    GNN based positional encodings are transferable.
\end{corollary}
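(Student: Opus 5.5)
The plan is to obtain the corollary directly from Theorem~\ref{thm:gnn_mnn_convergence}, combined with Definition~\ref{def:transferability}. The theorem gives, with probability $1-\delta$ and for each node $x_j\in X_N$, the pointwise bound $\Delta_{\bbPsi(\bbG)} = LF^{L-1}\big(C\epsilon^2 + \tfrac{\pi^2}{6}\sqrt{\log(1/\delta)/N}\big)$. Here $\epsilon(N)\to 0$ as $N\to\infty$, since we may take $\epsilon$ at the threshold $(\log N/N)^{1/(2d+12)}$. Hence $\Delta_{\bbPsi(\bbG)}\to 0$. Two things remain to be checked:
\begin{itemize}
\item the structural hypotheses of Definition~\ref{def:transferability}, namely permutation equivariance and a uniform Lipschitz property;
\item the upgrade from a pointwise bound at the sample nodes to $L^2(\mu)$ convergence of $\bbI_N\bbPsi_\bbG - \bbPsi_\ccalM$.
\end{itemize}
For RPEARL we use the averaged-over-realizations form in \eqref{eq:rpearl_pe}. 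Each realization of the GNN satisfies the theorem, so the average inherits the same bound by the triangle inequality.

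\textbf{Step 1 (structural hypotheses).} Permutation equivariance follows because each layer $\sigma[\sum_k \bbH_k \bbZ \bbL^k]$ commutes with relabelings $\bbL\mapsto \bbP\bbL\bbP^\top$, $\bbZ\mapsto \bbZ\bbP^\top$. For RPEARL, equivariance holds in distribution, or in the $M\to\infty$ expectation. The Lipschitz property in the input uses Assumption~\ref{assm:nonlinearity}, which says $\sigma$ is $1$-Lipschitz. It also uses Assumption~\ref{assm:nonamplifying}, which says $|\hat h|\le 1$, so each filter is non-expansive in operator norm. Composing across $L$ layers with $F$ features gives a Lipschitz constant of at most $F^{L-1}$, uniformly in $N$.

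\textbf{Step 2 (pointwise to $L^2$).} Write
\[
\bbI_N\bbPsi_\bbG(\bbS_N g) - \bbPsi_\ccalM g = \bbI_N\big(\bbPsi_\bbG(\bbS_N g) - \bbS_N\bbPsi_\ccalM g\big) + \big(\bbI_N\bbS_N\bbPsi_\ccalM g - \bbPsi_\ccalM g\big).
\]
The second term vanishes by the consistency of the sampling and interpolation operators. For the first term, use the boundedness of $\bbI_N$ with the normalization in which $\|\bbI_N\bbv\|_{L^2}^2 \lesssim \frac1N\sum_j\|v_j\|^2$, for instance piecewise-constant interpolation on the Voronoi partition of Appendix~\ref{app:induce}. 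This bounds the first term by $\max_j \|[\bbPsi_\bbG]_j - \bbPsi_\ccalM(x_j)\|$, up to a constant.

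\textbf{Main obstacle.} The theorem holds with probability $1-\delta$ for each fixed node, so we need a uniform-in-$j$ statement. I would first apply a union bound over the $N$ nodes, replacing $\delta$ by $\delta/N$. This inflates the rate only by a $\sqrt{\log N}$ factor, which still tends to $0$. A Borel--Cantelli argument with $\delta_N = N^{-2}$ then upgrades this to almost sure convergence as $N\to\infty$. This yields the limit in Definition~\ref{def:transferability}, so GNN positional encodings are transferable and satisfy Assumption~\ref{assm:transferable_pe}.
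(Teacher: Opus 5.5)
Your argument is correct for a GNN encoder driven by a sampled Lipschitz signal $\bbS_N g$, but it takes a different route from the paper's. The paper's proof of this corollary is in substance the derivation of Theorem~\ref{thm:gnn_mnn_convergence} itself:
\begin{itemize}
\item it expands the filter in the eigenbasis of $\bbL_N$;
\item it splits the error into an eigenvalue term and an eigenvector term (both from Proposition~\ref{prop:spectralconver}), a concentration term for the inner products $\langle \bbS_N g,\bbS_N\bm\phi_i\rangle$, and a high-frequency tail beyond a truncation level $J$;
\item it obtains a vanishing limit by first fixing $J$ to make the tail small and then letting $N\to\infty$;
\item it pushes the filter bound through the layers to get the factor $LF^{L-1}$, and then identifies the resulting pointwise bound with transferability.
\end{itemize}

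You instead take that pointwise bound as a black box. What you add is the bridge to Definition~\ref{def:transferability} that the paper leaves implicit: equivariance, an $N$-uniform Lipschitz constant, the passage from errors at sample points to the $L^2(\mu)$ norm after piecewise-constant interpolation, and a precise mode of convergence for random graphs via Borel--Cantelli.

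The trade-off is as follows. The paper's route is where the convergence mechanism actually lives, and yours is only as strong as the theorem's statement. In particular, your ``$\Delta_{\bbPsi(\bbG)}\to 0$ because $\epsilon\to 0$'' treats $C$ as fixed. In the derivation, $C$ absorbs the $J$-dependence of $\Omega_1,\Omega_2$ and of the eigengap, so the honest version is the two-step limit. In exchange, your argument establishes the statement the definition actually asks for.

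Your union bound over nodes is harmless but unnecessary. The events used in the derivation are not node-specific: Proposition~\ref{prop:spectralconver} is already a maximum over $x_j\in X_N$, and the concentration step involves finitely many inner products. So the bound holds for all nodes simultaneously.

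The step that does not go through is the RPEARL remark. Theorem~\ref{thm:gnn_mnn_convergence} concerns inputs $\bbS_N g$ with $g$ normalized Lipschitz (Assumption~\ref{assm:manifold_signal}), and its proof additionally assumes $g$ bandlimited. A realization $\bbz^{(m)}$ of i.i.d.\ Gaussian node features is not $\bbS_N g$ for any such $g$, since arbitrarily close nodes carry independent $O(1)$ values. Its interpolation also has no $L^2(\mu)$ limit. So a single realization has no MNN counterpart $\bbPsi_\ccalM(\ccalH,\ccalL,g)$, and there is nothing for your triangle inequality to average.

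Transferability of RPEARL has to be argued for the averaged encoder itself. For the one-layer form \eqref{eq:rpearl_pe} with ReLU and unit-variance features, the $M\to\infty$ average at node $j$ is $\sqrt{[\bbh(\bbL_N)^2]_{jj}/(2\pi)}$. This is a functional of the diagonal of $\bbh(\bbL_N)^2$, and its convergence (under a suitable scaling of the noise) is a different statement from the one the theorem provides. Either drop that sentence and restrict to encoders driven by sampled Lipschitz signals, as the paper's proof does, or give a separate argument for it.
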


\begin{proof}
We first import the spectral point-wise convergence of graph Laplacian to Laplace-Beltrami operator from \citep{dunson2021spectral}. The spectral representation of manifold filters are similar to graph convolutional filter, while we consider the case in which the Laplace operator is self-adjoint, positive-semidefinite and the manifold $\ccalM$ is compact. In this case, $\ccalL$ has real, positive and discrete eigenvalues $\{\lambda_i\}_{i=1}^\infty$, written as $\ccalL \bm\phi_i =\lambda_i \bm\phi_i$ where $\bm\phi_i$ is the eigenfunction associated with eigenvalue $\lambda_i$. The eigenvalues are ordered in increasing order as $0=\lambda_1\leq \lambda_2\leq \lambda_3\leq \hdots$, and the eigenfunctions are orthonormal and form an eigenbasis of $L^2(\mu)$. When mapping a manifold signal onto the eigenbasis $ [\hat{g} ]_i=\langle g, \bm\phi_i\rangle_{ \ccalM} = \int_\ccalM g(x) \bm\phi_i(x)\text{d}\mu(x)$, the manifold convolution can be seen in the spectral domain as
\begin{align}
    [\hat{f}]_i = \sum_{k=0}^{K-1} h_k e^{-k\lambda_i}  [\hat{g}]_i\text{.}
\end{align}
Hence, the frequency response of manifold filter is given by $\hat{h}(\lambda)=\sum_{k=0}^{K-1} h_k e^{-k\lambda}$.
\begin{proposition}{\citep{dunson2021spectral}[Theorem~4]}
\label{prop:spectralconver}
For a sufficiently small $\epsilon>0$, if $N$ is sufficiently large so that $\epsilon = \epsilon(N) \geq \left( \frac{\log N}{N}\right)^{\frac{1}{2d+12}}$, then with probability greater than $1-N^{-2}$, for all $0 \leq i < J$, 
\begin{equation}
    |\lambda_{i,N} -\lambda_i| \leq \Omega_1\epsilon^2, 
    \max_{x_j \in X_N} | a_i [\bm\phi_{i,N}]_j -\bm\phi_i(x_j )| \leq  \Omega_2 \epsilon^2,
\end{equation}
with $\Omega_1$ and $\Omega_2$ related to the eigengap of $\ccalL$, $d$, and the diameter, the volume, the injectivity radius, the curvature and the second fundamental form of the manifold.
\end{proposition}

Because $\{x_1, x_2,\cdots,x_N\}$ is a set of randomly sampled points from $\ccalM$, based on Theorem 19 in \citep{von2008consistency} we can claim that
\begin{equation}
   \left|\langle \bbS_N g,\bbS_N \bm\phi_{i } \rangle   -\langle g,\bm\phi_i\rangle_\ccalM\right| = O\left(\sqrt{\frac{\log (1/\delta)}{N}}\right),
\end{equation}
where $\langle g,\bm\phi_i \rangle = \int_\ccalM g(x) \bm\phi_i(x) \text{d}\mu(x)$ is defined as the inner product over manifold $\ccalM$.
This also indicates that 
\begin{equation}
   \left|\|\bbS_N g \|^2-\|g\|^2_\ccalM\right| = O\left(\sqrt{\frac{\log (1/\delta)}{N}}\right),
\end{equation}
which indicates $\|\bbS_N g\|=\|g\|_\ccalM + O((\log (1/\delta)/N)^{1/4})$, where $\|g\|^2_\ccalM = \langle g, g\rangle_\ccalM$.
We suppose that the input manifold signal is $\lambda_J$-bandlimited with $J$ spectral components. We first write out the difference on each node $x_j\in X_N$ as
\begin{align}
    \|[\bbh(\bbL_N)\bbS_N g]_j - (\bbh(\ccalL) g)(x_j) \| 
    &= \Bigg\| \sum_{i=1}^N \hat{h}(\lambda_{i,N}) \langle \bbS_N g,\bm\phi_{i,N} \rangle [\bm\phi_{i,N}]_j  - \sum_{i=1}^J \hat{h}(\lambda_i)\langle g,\bm\phi_i\rangle_{\ccalM}  \bm\phi_i(x_j)  \Bigg\| \\ 
    &\leq  \Bigg\| \sum_{i=1}^J \hat{h}(\lambda_{i,N}) \langle \bbS_N g,\bm\phi_{i,N} \rangle [\bm\phi_{i,N}]_j - \sum_{i=1}^J \hat{h}(\lambda_i) \langle g,\bm\phi_{i}  \rangle_{\ccalM} \bm\phi_{i}(x_j) \nonumber \\
   &\qquad\qquad\qquad\qquad\qquad\qquad+ \sum_{i=J+1}^N \hat{h}(\lambda_{i,N}) \langle \bbS_N g,\bm\phi_{i,N} \rangle [\bm\phi_{i,N}]_j \Bigg\| \\
   & \leq \Bigg\| \sum_{i=1}^J \hat{h}(\lambda_{i,N}) \langle \bbS_N g,\bm\phi_{i,N} \rangle [\bm\phi_{i,N}]_j - \sum_{i=1}^J \hat{h}(\lambda_i) \langle g,\bm\phi_{i}  \rangle_{\ccalM}\bm\phi_{i}(x_j)\Bigg\| \nonumber \\  
   &\qquad\qquad\qquad\qquad\qquad\qquad+ \left\|\sum_{i=J+1}^N \hat{h}(\lambda_{i,N}) \langle \bbS_N g,\bm\phi_{i,N} \rangle [\bm\phi_{i,N}]_j\right\|.\label{eqn:1}
\end{align}

 The first part of \eqref{eqn:1} can be decomposed with the triangle inequality as  follows
 \begin{align}
     \nonumber &\bigg\| \sum_{i=1}^J \hat{h}(\lambda_{i,N}) \langle \bbS_N g,\bm\phi_{i,N}  \rangle [\bm\phi_{i,N}]_j - \sum_{i=1}^J \hat{h}(\lambda_i)\langle g,\bm\phi_i\rangle_{\ccalM} \bm\phi_i(x_j)  \bigg\|
     \\ 
     & \leq  \left\| \sum_{i=1}^J \left(\hat{h}(\lambda_{i,N})- \hat{h}(\lambda_i) \right) \langle \bbS_N g,\bm\phi_{i,N} \rangle [\bm\phi_{i,N}]_j \right\|  +\left\| \sum_{i=1}^J \hat{h}(\lambda_i)\left( \langle \bbS_N g,\bm\phi_{i,N} \rangle  [\bm\phi_{i,N}]_j - \langle g,\bm\phi_i \rangle_{\ccalM} \bm\phi_i(x_j) \right)  \right\| .\label{eqn:conv-1}
 \end{align}

In \eqref{eqn:conv-1}, the first part relies on the difference of eigenvalues and the second part depends on the eigenvector difference. 
The first term in \eqref{eqn:conv-1} is bounded with Cauchy-Schwartz inequality as
\begin{align}
\Bigg\| \sum_{i=1}^J (\hat{h}(\lambda_{i,N} ) - \hat{h}(\lambda_i)) \langle \bbS_N g,\bm\phi_{i,N} \rangle [\bm\phi_{i,N}]_j  \Bigg\|    
     &\leq \sum_{i=1}^J \left|\hat{h}(\lambda_{i,N} )-\hat{h}(\lambda_i)\right| |\langle \bbS_N g,\bm\phi_{i,N}  \rangle | \label{eqn:p1-1}\\
     &\leq \|\bbS_N g\| \sum_{i=1}^J |\hat{h}'(\lambda_i)| |\lambda_{i,N} -\lambda_i| \nonumber \\
     &\leq \|\bbS_N g\|  \sum_{i=1}^J C_L \Omega_1 \epsilon^2  \lambda_i^{-d} \label{eqn:p2-1}\\
     &\leq  \|\bbS_N g\|C_L \Omega_1 \epsilon^2  \sum_{i=1}^J  i^{-2} \label{eqn:p2}\\
     &\leq \left( \|g\|_\ccalM+ \left(\frac{\log (1/\delta)}{N}\right)^{\frac{1}{4}}\right) \Omega_1 \epsilon^2 \frac{\pi^2}{6} := A_1(N)
\end{align}
In \eqref{eqn:p1-1}, it depends on the inequality that $|[\bm\phi_{i,N}]_j| \leq \|\bm\phi_{i,N}\|_\infty \leq \|\bm\phi_{i,N}\|_2 = 1$. In \eqref{eqn:p2-1}, it depends on the filter continuity assumption.
In \eqref{eqn:p2}, we implement Weyl's law \citep{arendt2009weyl} which indicates that eigenvalues of Laplace operator scales with the order  $\lambda_i \sim i^{2/d}$. The last inequality comes from the fact that $\sum_{i=1}^\infty i^{-2}=\frac{\pi^2}{6}$.
The second term in \eqref{eqn:conv-1} can be bounded with the triangle inequality  as
{ 
\begin{align}
  & \nonumber \Bigg\| \sum_{i=1}^J \hat{h}(\lambda_i)\left( \langle \bbS_N g,\bm\phi_{i,N}  \rangle [\bm\phi_{i,N}]_j- \langle g,\bm\phi_i \rangle_{\ccalM} \bm\phi_i(x_j)\right)  \Bigg\|\\
   & \label{eqn:term1}\leq \Bigg\|  \sum_{i=1}^J \hat{h}(\lambda_i)  \left(\langle \bbS_N g,\bm\phi_{i,N} \rangle [\bm\phi_{i,N}]_j  - \langle \bbS_N g,\bm\phi_{i,N}  \rangle  \bm\phi_i(x_j)\right)\Bigg\|+ \left\| \sum_{i=1}^J  \hat{h}(\lambda_i) \left(\langle \bbS_N g,\bm\phi_{i,N} \rangle  \bm\phi_i(x_j) -\langle g,\bm\phi_i\rangle_\ccalM \bm\phi_i(x_j) \right) \right\|
\end{align}}
The first term in \eqref{eqn:term1} can be bounded with inserting the eigenfunction convergence result in Proposition \ref{prop:spectralconver} as
\begin{align}
& \nonumber \left\|  \sum_{i=1}^J \hat{h}(\lambda_i) \left(\langle \bbS_N g,\bm\phi_{i,N} \rangle [\bm\phi_{i,N}]_j  - \langle \bbS_N g,\bm\phi_{i,N} \rangle_{\ccalM} \bm\phi_i(x_j)\right)\right\|\\
& \leq \sum_{i=1}^{M} \left|\hat{h}(\lambda_i)\right|\|\bbS_N g\| \left|[\bm\phi_{i,N}]_j - \bm\phi_i(x_j)\right|\\
&\leq \sum_{i=1}^J (\lambda_i^{-d+1}) \Omega_2 \epsilon^2\left(\|g\|_\ccalM + \left(\frac{\log (1/\delta)}{N}\right)^{\frac{1}{4}}\right)\\
&\leq \Omega_2 \epsilon^2\sum_{i=1}^J (\lambda_i^{-d+1}) \left(\|g\|_\ccalM + \left(\frac{\log (1/\delta)}{N}\right)^{\frac{1}{4}}\right)\\
&:= A_2(J,N).
\end{align}
Considering the filter continuity assumption, the second term in \eqref{eqn:term1} can be written as
\begin{align}
     & \nonumber \Bigg\| \sum_{i=1}^J \hat{h}(\lambda_{i,N} ) (\langle \bbS_N g,\bm\phi_{i,N}\rangle  \bm\phi_i (x_j) -\langle g,\bm\phi_i\rangle_\ccalM \bm\phi_i(x_j) ) \Bigg\| \\
   &\leq \sum_{i=1}^J \left|\hat{h}(\lambda_{i,N}) \right| \left|\langle \bbS_N g,\bm\phi_{i,N} \rangle   -\langle g,\bm\phi_i\rangle_\ccalM\right||\bm\phi_i(x_j)|\\
   &\leq \sum_{i=1}^J (\lambda_{i,N}^{-d})\left|\langle \bbS_N g,\bm\phi_{i,N} \rangle   -\langle g,\bm\phi_i\rangle_\ccalM\right| \|\bm\phi_i\|\\
   &\label{eqn:A3}\leq 
   \sum_{i=1}^J \left(1 + \Omega_1 \epsilon^2\right)^{-d}  \lambda_i^{-d}\left|\langle \bbS_N g,\bm\phi_{i,N} \rangle   -\langle g,\bm\phi_i\rangle_\ccalM\right|  \\
   &\leq \frac{\pi^2}{6}\left|\langle \bbS_N g,\bm\phi_{i,N} \rangle   -\langle g,\bm\phi_i\rangle_\ccalM\right| :=A_3(N)
\end{align}

The term $\left|\langle \bbS_N g,\bm\phi_{i,N} \rangle   -\langle g,\bm\phi_i\rangle_\ccalM\right| $ can be decomposed by inserting a term $\langle \bbS_N g, \bbS_N \bm\phi_i\rangle $ as
\begin{align}
    \big|\langle \bbS_N g,\bm\phi_{i,N} \rangle  -\langle g,\bm\phi_i\rangle_\ccalM\big|  &\leq  \big|\langle \bbS_N g,\bm\phi_{i,N} \rangle  - \langle \bbS_N g, \bbS_N \bm\phi_i\rangle +  \langle \bbS_N g, \bbS_N \bm\phi_i\rangle  -\langle g,\bm\phi_i\rangle_\ccalM \big| \\
    &\leq   \big|\langle \bbS_N g,\bm\phi_{i,N} \rangle  - \langle \bbS_N g, \bbS_N \bm\phi_i\rangle\big|  + \big|\langle \bbS_N g, \bbS_N \bm\phi_i\rangle  -\langle g,\bm\phi_i\rangle_\ccalM\big|\\
    & \leq \|\bbS_N g\|\| \bm\phi_{i,N} - \bbS_N\bm\phi_i\|  + \big|\langle \bbS_N g, \bbS_N \bm\phi_i\rangle  -\langle g,\bm\phi_i\rangle_\ccalM\big|\\
    &\leq \bigg(\|g\|_\ccalM + \Big(\frac{\log (1/\delta)}{N}\Big)^{\frac{1}{4}}\bigg) \frac{C_{\ccalM,2}\lambda_i \sqrt{\epsilon}}{\theta_i}+ \sqrt{\frac{\log (1/\delta)}{N}}
\end{align}
Then equation \eqref{eqn:A3} can be bounded as 
\begin{align}
     & \nonumber \Bigg\| \sum_{i=1}^J \hat{h}(\lambda_{i,N} ) (\langle \bbS_N g,\bm\phi_{i,N}\rangle \bm\phi_i(x_j) -\langle g,\bm\phi_i\rangle_\ccalM \bm\phi_i(x_j) ) \Bigg\| \\
     &\leq 
   \sum_{i=1}^J (1 +\Omega_1\epsilon^2)^{-d}  (\lambda_i^{-d}) \left(\left(\|g\|_\ccalM + \left(\frac{\log (1/\delta)}{N}\right)^{\frac{1}{4}}\right) \frac{C_{\ccalM,2}\lambda_i {\epsilon}}{\theta_i} + \sqrt{\frac{\log (1/\delta)}{N}}\right)
   \\&\leq \frac{\pi^2}{6} \max_{i=1,\cdots,J}\frac{C_{\ccalM,2} \epsilon}{\theta_i}\left(\|g\|_\ccalM + \left(\frac{\log (1/\delta)}{N}\right)^{\frac{1}{4}}\right) + \frac{\pi^2}{6} \sqrt{\frac{\log (1/\delta)}{N}}
\end{align}

The second term in \eqref{eqn:1} can be bounded with the eigenvalue difference bound in Proposition \ref{prop:spectralconver} as
\begin{align}
     \left\|\sum_{i=J+1}^N \hat{h}(\lambda_{i,N}) \langle \bbS_N g,\bm\phi_{i,N} \rangle [\bm\phi_{i,N}]_j\right\|  
      &\leq \sum_{i=J+1}^N (\lambda_{i,N}^{-d})\left(\|g\|_\ccalM+\left(\frac{\log (1/\delta)}{N}\right)^{\frac{1}{4}}\right)\\
    &\leq \sum_{i=J+1}^\infty (\lambda_{i,N}^{-d})  \|g\|_\ccalM
    \\&\leq  \left(1 +\Omega_1\epsilon^2\right)^{-d} \sum_{i=J+1}^\infty (\lambda_i^{-d})  \|g\|_\ccalM\\
    &\leq  J^{-1}\|g\|_\ccalM:= A_4(J).
\end{align}

We note that the bound is made up by terms $A_1( N)+A_2(J,N)+A_3( N)+A_4(J)$, related to the bandwidth of manifold signal $J$ and the number of sampled points $N$. 
This makes the bound scale with the order
{ 
\begin{align}
\label{eqn:filter-bound}
 \|[\bbh(\bbL_N)\bbS_N g]_j -  \bbh(\ccalL) g (x_j)\|  \leq C_1' \epsilon^2 +  C_2' \epsilon \theta_J^{-1}  + C_3' \sqrt{\frac{\log(1/\delta)}{N}} + C_4' J^{-1},
\end{align}}
with $C_1' = C_L\Omega_1\frac{\pi^2}{6}\|g\|_\ccalM$, $C_2' = \Omega_2\frac{\pi^2}{6}$, $C_3' =\frac{\pi^2}{6}$ and $C_4' = \|g\|_\ccalM$.
As $N$ goes to infinity, for every $\delta >0$, there exists some $J_0$, such that for all $J>J_0$ it holds that $A_4(J)\leq \delta/2$. There also exists $n_0$, such that for all $N>n_0$, it holds that $A_1(N)+A_2(J_0, N)+A_3(N)\leq \delta/2$. We can conclude that the summations converge as $N$ goes to infinity. We see $J$ large enough to have $J^{-1}\leq \delta'$, which makes the eigengap $\theta_J$ also bounded by $\epsilon$. We combine the first two terms as 
\begin{align}
\|[\bbh(\bbL_N)\bbS_N g]_j -  \bbh(\ccalL) g(x_j)\|  \leq (C_1C_L+C_2) {\epsilon}^2  +\frac{\pi^2}{6} \sqrt{\frac{\log(1/\delta)}{N}},
\end{align}
with $C_1 = \Omega_1 \frac{\pi^2}{6}\|g\|_\ccalM$ and $C_2 =  \Omega_2 \frac{\pi^2}{6} \theta^{-1}_{\delta'^{-1}}$.
To bound the output difference of MNNs, we need to write in the form of features of the final layer
 \begin{align}
     \|[\bm\Psi_\bbG(\bbH,\bbL_N,\bbS_N g)]_j-\bm\Psi(\bbH,\ccalL, g)(x_j)\| &= \left\| \sum_{q=1}^{F } [\bbx_{L}^q]_j-\sum_{q=1}^{F } f_L^q (x_j)\right\| 
     \\ &\leq \sum_{q=1}^{F } \left\| [\bbx_{L}^q]_j - f_L^q(x_j) \right\|.
 \end{align}
By inserting the definitions, we have 
 \begin{align}
    \left\| [\bbx_{l}^p]_j - f_l^p(x_j) \right\| =\left\| \sigma\left(\left[\sum_{q=1}^{F } \bbh_l^{pq}(\bbL_N) \bbx_{l-1}^q\right]_j \right) -\sigma\left(\sum_{q=1}^{F } \bbh_l^{pq}(\ccalL) f_{l-1}^q(x_j)\right) \right\|
 \end{align}
 with $\bbx_{0}=\bbS_N g$ as the input of the first layer. With a normalized point-wise Lipschitz nonlinearity, we have
  \begin{align}
    \| [\bbx_{l}^p]_j - f_l^p(x_j) & \| \leq \left\|  \sum_{q=1}^{F } \left[\bbh_l^{pq}(\bbL_N) \bbx_{l-1}^q \right]_j   -  \sum_{q=1}^{F } \bbh_l^{pq}(\ccalL)  f_{l-1}^q(x_j)\right\|\\
    & \leq \sum_{q=1}^{F } \left\|    [\bbh_l^{pq}(\bbL_N) \bbx_{l-1}^q]_j    -   \bbh_l^{pq}(\ccalL)  f_{l-1}^q(x_j)\right\|
 \end{align}
 The difference can be further decomposed as
\begin{align}
   \nonumber  & \|    [\bbh_l^{pq}(\bbL_N)  \bbx_{l-1}^q]_j    -   \bbh_l^{pq}(\ccalL)  f_{l-1}^q(x_j) \| 
   \\  &\leq \|
[\bbh_l^{pq}(\bbL_N) \bbx_{l-1}^q]_j  - [\bbh_l^{pq}(\bbL_N) \bbS_N f_{l-1}^q]_j +[\bbh_l^{pq}(\bbL_N) \bbS_N f_{l-1}^q]_j -    \bbh_l^{pq}(\ccalL)  f_{l-1}^q(x_j)
    \|\\ 
   & \leq \left\|
    [\bbh_l^{pq}(\bbL_N) \bbx_{l-1}^q]_j  - [\bbh_l^{pq}(\bbL_N)  \bbS_N f_{l-1}^q]_j
    \right\|
   +
    \left\|
    [\bbh_l^{pq}(\bbL_N)  \bbS_N f_{l-1}^q]_j  -  \bbh_l^{pq}(\ccalL)  f_{l-1}^q(x_j)
    \right\|
\end{align}
The second term can be bounded with \eqref{eqn:filter-bound} and we denote the bound as $\Delta_N$ for simplicity. The first term can be decomposed by Cauchy-Schwartz inequality and non-amplifying of the filter functions as
 \begin{align}
 \left\| [\bbx_{l}^p]_j - f_l^p(x_j) \right\| \leq \sum_{q=1}^{F } \Delta_{N}   \| \bbx_{l-1}^q\| + \sum_{q=1}^{F } \| [\bbx_{l-1}^q]_j - f_{l-1}^{q}(x_j) \|.
 \end{align}
To solve this recursion, we need to compute the bound for $\|\bbx_l^p\|$. By normalized Lipschitz continuity of $\sigma$ and the fact that $\sigma(0)=0$, we can get
 \begin{align}
  \| \bbx_l^p \|\leq \left\| \sum_{q=1}^{F } \bbh_l^{pq}(\bbL_N) \bbx_{l-1}^{q}  \right\| \leq  \sum_{q=1}^{F }  \left\| \bbh_l^{pq}(\bbL_N)\right\|  \|\bbx_{l-1}^{q}  \|   \leq   \sum_{q=1}^{F }   \|\bbx_{l-1}^{q}  \| \leq F^{l-1} \| \bbx  \|.
 \end{align}
 Insert this conclusion back to solve the recursion, we can get
 \begin{align}
 \left\| [\bbx_{l}^p]_j - f_l^p(x_j) \right\| \leq l   F^{l-1} \Delta_{N}  \|\bbx \|.
 \end{align}
 Replace $l$ with $L$ we can obtain
 \begin{align}
 \|[\bm\Psi_\bbG(\ccalH,\bbL_N,\bbS_N g)]_j- \bm\Psi(\ccalH,\ccalL, g)(x_j)\|  \leq   L F^{L-1}\Delta_{N}     ,
 \end{align}
when the input graph signal is normalized.
By replacing $g= \bbI_N \bbx$, we can conclude the proof.
\end{proof}

\subsection{Local Lipschitz continuity of MNNs}
\label{app:lipschitz-mnn}
We utilize Proposition 3 in \citep{wangManifoldPerspectiveStatistical2024}, which shows that the outputs of MNN defined in \eqref{eqn:mnn} are locally Lipschitz continuous within a certain area, which is stated explicitly as follows.
\begin{proposition}(Local Lipschitz continuity of MNNs \citep{wangManifoldPerspectiveStatistical2024}[Proposition~3])\label{prop:mnn-continuity}
Assume that Assumptions~\ref{assm:manifold_signal}--\ref{assm:filter} hold. Let MNN be $L$ layers with $F$ features in each layer, suppose the manifold filters are nonamplifying with $|\hat{h}(\lambda)|\leq 1$ and the nonlinearities normalized Lipschitz continuous, then there exists a constant $C'$ such that 
\begin{equation}
    \label{eqn:continuity-mnn}
    |\bbPsi(\bbH,\ccalL,g)(x) - \bbPsi(\bbH,\ccalL,g)(y)|\leq F^L C' \text{dist}(x-y),\quad \text{for all }x,y \in B_r(\ccalM),
\end{equation}
where $B_r(\ccalM)$ is a ball with radius $r$ over $\ccalM$ with respect to the geodesic distance.
\end{proposition}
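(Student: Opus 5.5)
The plan is an induction over the layers of the MNN in \eqref{eqn:mnn}. The claim to propagate is that each feature $f_l^p$ is Lipschitz on $B_r(\ccalM)$ with constant $F^l C'$, with respect to geodesic distance. Three ingredients are needed.

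\emph{Input.} By Assumption~\ref{assm:manifold_signal}, the input $f_0^q = g^q$ is normalized Lipschitz in the ambient Euclidean distance. On a small ball $B_r(\ccalM)$ the Euclidean and geodesic distances are comparable: the ratio is controlled by the injectivity radius and the second fundamental form of $\ccalM$. So $g$ is Lipschitz in geodesic distance with a geometric constant.

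\emph{Nonlinearity.} By Assumption~\ref{assm:nonlinearity},
\[
|\sigma(a(x)) - \sigma(a(y))| \le |a(x) - a(y)|,
\]
so the nonlinearity never increases a Lipschitz constant.

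\emph{Filter bank.} A layer sums $F$ filtered inputs. By the triangle inequality its Lipschitz constant is at most $F$ times the largest Lipschitz constant of a single filtered feature $\bbh_l^{pq}(\ccalL) f_{l-1}^q$. Unrolling the recursion then yields the factor $F^L$, provided one filter application does not enlarge the Lipschitz constant beyond an absolute geometric constant, which is absorbed into $C'$.

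The core step is therefore a \textbf{single-filter lemma}: if $u$ is Lipschitz on $\ccalM$ and bounded in $L^2$, then $\bbh(\ccalL)u$ is Lipschitz with a constant depending only on $\ccalM$, $C_L$ and $\|u\|$. I would work spectrally, writing
\[
\bbh(\ccalL)u = \sum_i \hat h(\lambda_i)\,\langle u, \bm\phi_i\rangle_\ccalM\, \bm\phi_i ,
\]
so that
\[
|\bbh(\ccalL)u(x) - \bbh(\ccalL)u(y)| \le \sum_i |\hat h(\lambda_i)|\,|\langle u,\bm\phi_i\rangle|\,\|\nabla \bm\phi_i\|_\infty \,\mathrm{dist}(x,y).
\]
The two steps are:
\begin{enumerate}
\item Bound the eigenfunction gradients with standard sup-norm estimates on compact manifolds (H\"ormander/Sogge type, combined with elliptic regularity): $\|\nabla\bm\phi_i\|_\infty \lesssim \lambda_i^{(d+1)/4}$.
\item Use $|\langle u,\bm\phi_i\rangle| \le \|u\|$ together with the decay $|\hat h(\lambda)| = \mathcal O(\lambda^{-d})$ from Assumption~\ref{assm:filter} and Weyl's law $\lambda_i \sim i^{2/d}$. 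The summand is then $\mathcal O(i^{-2+(d+1)/(2d)})$, which is summable for $d\ge 2$.
\end{enumerate}
The $L^2$ norm of $f_{l-1}^q$ stays bounded by a power of $F$ because the filters are non-amplifying (Assumption~\ref{assm:nonamplifying}) and $\sigma(0)=0$. Hence the constants depend only on geometry and on $C_L$.

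The main obstacle is this single-filter lemma, specifically obtaining a summable series. Whether the exponent works depends on how sharp the gradient bound for eigenfunctions is relative to the assumed spectral decay. If the crude $L^2$ estimate does not suffice, the fallback is a localized argument. For the exponential form $e^{-k\ccalL}$ in \eqref{eqn:manifold-convolution}, I would instead use heat-kernel gradient estimates (Li--Yau type) on $B_r(\ccalM)$. These give $\|\nabla e^{-t\ccalL}u\|_\infty \le C\|\nabla u\|_\infty$ for bounded curvature. Because the estimate is local, the conclusion is stated only on balls $B_r(\ccalM)$.
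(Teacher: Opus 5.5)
The paper gives no proof of this proposition. It imports the statement verbatim from \citep{wangManifoldPerspectiveStatistical2024}, so there is no in-paper argument to match. Your main spectral route is correct, and it is the route the paper's assumptions are built for: the same eigen-expansion, $\mathcal{O}(\lambda^{-d})$ filter decay and Weyl's law drive the GNN--MNN proof in Appendix~\ref{app:gnn_mnn_cvg_proof}. With $\|\nabla\bm\phi_i\|_\infty\lesssim\lambda_i^{(d+1)/4}$, your termwise bound gives summands $\mathcal{O}(i^{-3/2+1/(2d)})$. This converges for $d\ge 2$, which covers the dimensions the paper considers.

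A few points would tighten the write-up.

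\begin{itemize}
\item \emph{The induction runs on $L^2$ norms, not Lipschitz constants.} Your single-filter bound depends only on $\|u\|_{L^2}$, not on the Lipschitz constant of $u$, so nothing Lipschitz is actually propagated. By $|\hat h|\le 1$ and $|\sigma(a)|\le|a|$ you get $\|f_l^p\|\le\sum_q\|f_{l-1}^q\|\le F^l\|g\|$. Only the last filter bank produces the Lipschitz bound, giving $F\cdot C\,F^{L-1}\|g\| = F^L C'$. In particular, your ``Input'' step (Assumption~\ref{assm:manifold_signal} and the Euclidean/geodesic comparison) is never used.

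\item \emph{The bound is global.} The estimate $|\bm\phi_i(x)-\bm\phi_i(y)|\le\|\nabla\bm\phi_i\|_\infty\,\mathrm{dist}(x,y)$ holds along minimizing geodesics on all of $\ccalM$. So you actually obtain a global Lipschitz bound, and neither the restriction to $B_r(\ccalM)$ nor your Li--Yau fallback is needed.

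\item \emph{Cauchy--Schwarz removes the dimension restriction.} Instead of bounding $|\langle u,\bm\phi_i\rangle|\le\|u\|$ term by term, apply Cauchy--Schwarz over $i$:
$\sum_i|\hat h(\lambda_i)|\,|\langle u,\bm\phi_i\rangle|\,\|\nabla\bm\phi_i\|_\infty\le\|u\|\bigl(\sum_i|\hat h(\lambda_i)|^2\|\nabla\bm\phi_i\|_\infty^2\bigr)^{1/2}$.
The terms are now $\mathcal{O}(i^{-3+1/d})$, which is summable for every $d\ge 1$. This settles your hedge about whether the exponent works.

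\item \emph{Regularity of the weighted operator.} The H\"ormander/Shi--Xu eigenfunction bounds are proved for the Laplace--Beltrami operator or for elliptic operators with smooth coefficients. Applying them to $\ccalL_\rho=-\frac{1}{2\rho}\mathrm{div}(\rho^2\nabla\,\cdot\,)$ therefore needs $\rho$ smoother than the merely Lipschitz density assumed in Section~\ref{sec:transference}. The imported result carries the same implicit requirement.
\end{itemize}
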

\subsection{Manifold decomposition and induced manifold signal}
\label{app:induce}

The graph $\bbG$ contains points $X_N = \{x_i\}_{i=1}^N$ sampled from the manifold $\ccalM$. The graph signal $\bbX \in \reals^{N \times D}$ can be viewed as a discretization of the continuous manifold signal $f$ evaluated at points $X_N$, that is

\begin{align}
    \bbS_N f = \bbX,
\end{align}

where $\bbS_N: L^2(\mu) \rightarrow L_2(X_N)$ is called the sampling operator. The sample $X_N$ induces a decomposition of the manifold \citep{trillosErrorEstimatesSpectral2018}, $\{V_i\}_{i=1}^N$ with respect to $X_N$, with $V_i \subset B_r(x_i)$ a ball of radius $r$ centered at $x_i$, with respect to the Euclidean distance in Euclidean ambient space.

Let $\mu_N=\frac{1}{N} \sum_{i=1}^N \delta x_i $ the empirical measure of the random sample. The decomposition is defined by the $\infty$-optimal transport map $T:\ccalM \rightarrow X_N$, defined by the $\infty$-optimal Transport Distance between $\mu$ and $\mu_N$,

\begin{align}
    d_\infty(\mu,\mu_N)= \min_{T:T\#\mu=\mu_N} \text{ess sup}_{x\in\ccalM} \delta(x,t(x)).
\end{align}

Here, $T\#\mu$ denotes that $\mu(T^{-1}(V)) = \mu_N(V)$ holds for every $V_i$ of the decomposition of $\ccalM$. 

The radius of the balls where the partitions are contained can be bounded as $r \leq A(\frac{\log N}{N})^{1/d}$ when $d\geq 3$ and as $r \leq A(\log N)^{3/4} / N^{1/2}$ when $d=2$, with $A$ being a constant related to the geometry of the manifold.


The manifold function induced by the signals of the sampled graph is a piecewise constant function defined by 
\begin{align}
    (\bbI_N\bbX)(x) = \sum_{i=1}^N [\bbX]_i \mathbbm{1}_{x \in V_i} ,
\end{align}
where $\bbI_N: L_2(X_N) \rightarrow L^2(\mu)$ denotes the interpolation operator.
\subsection{Lemmas and Propositions}\label{app:lemmas_props}
In this section we state a series of lemmas and propositions that will be useful for the proof of Theorem ~\ref{thm:gt_mt_convergence}. 

\begin{proposition} [Exponential Kernel] \label{prop:exp_kernel}
    Assumption~\ref{assm:kernel} holds for the kernel 
    $\gamma(x,y) = \gamma_{xy} = \exp \left[\langle \bbQ f(x),\bbK f(y)\rangle\right]$.
    \begin{proof}
        The exponential function is nonnegative, so (i) holds. For (ii), since the signals are bounded, we can bound the kernel by $\| \exp \left[\langle \bbQ f(x),\bbK f(x)\rangle\right] \| \leq e^BC_QC_K$, using the linear operator bounds and normalized Lipschitz of the positional encoding. Finally, for (iii), let $x,y,x',y' \in \ccalM$, then we have 
        
    \begin{align}
        &\| \exp \left[\langle \bbQ f(x),\bbK f(y)\rangle\right] - \exp \left[\langle \bbQ f(x'),\bbK f(y')\rangle\right]\|  \\
        &\quad\leq e^B  \Big| \left[\langle \bbQ f(x),\bbK f(y)\rangle\right] - \left[\langle \bbQ f(x'),\bbK f(y')\rangle\right]\Big| \label{eq:prop_kernel_1} \\
        &\quad= e^B  \Big| \left[\langle \bbQ (f(x) - f(x')),\bbK f(y)\rangle\right] - \left[\langle \bbQ f(x'),
        \bbK (f(y')-f(y))\rangle\right]\Big| \label{eq:prop_kernel_2} \\
        &\quad \leq e^B 2\big[ \|f(x) - f(x') \| + \|f(y)-f(y') \|\big]  \label{eq:prop_kernel_3}.
    \end{align}
    where in \eqref{eq:prop_kernel_1} we use the mean value theorem, in \eqref{eq:prop_kernel_2} we add and subtract $\langle \bbQ f(x'), \bbK f(y)\rangle$, and in \eqref{eq:prop_kernel_3} we bound the linear operators and apply the triangle inequality.
    \end{proof}
\end{proposition}

\begin{lemma} \label{lem:inner_product}
    Let $x \in \ccalM$ a point in the manifold, and $y \in V_j$, a point in partition $V_j \subset B_r(x_j)$. Then it holds that
    \begin{align}
        \left| \langle \bbQ f(x),\bbK f(x_j) \rangle  -  \langle \bbQ f(x),\bbK f(y) \rangle \right|  \leq C_QC_K\ r
    \end{align}
\end{lemma}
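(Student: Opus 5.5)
The plan is to reduce the difference of inner products to a difference of positional-encoding values, and then invoke the normalization in Assumption~\ref{assm:normalized_lipschitz_pe}. By bilinearity of the inner product, the two terms share the first argument $\bbQ f(x)$, so
\begin{align}
    \langle \bbQ f(x),\bbK f(x_j) \rangle - \langle \bbQ f(x),\bbK f(y) \rangle = \langle \bbQ f(x),\bbK \big(f(x_j) - f(y)\big) \rangle .
\end{align}
Cauchy--Schwarz then bounds the absolute value by $\|\bbQ f(x)\|\,\|\bbK(f(x_j)-f(y))\|$.

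Next I apply the operator bounds of Assumption~\ref{assm:linear_ops}. This gives $\|\bbQ f(x)\|\le C_Q\|f(x)\|$ and $\|\bbK(f(x_j)-f(y))\|\le C_K\|f(x_j)-f(y)\|$. Since $f=\bbPsi_\ccalM(\ccalH,\ccalL,g)$, the boundedness part of Assumption~\ref{assm:normalized_lipschitz_pe} yields $\|f(x)\|\le 1$. The normalized Lipschitz part yields $\|f(x_j)-f(y)\|\le |x_j-y|$.

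Finally, I use the geometry of the partition from Appendix~\ref{app:induce}. Because $y\in V_j\subset B_r(x_j)$, with the ball taken in the ambient Euclidean distance, we have $|x_j-y|\le r$. Chaining the three estimates gives $C_QC_K\,r$, as claimed.

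The only delicate point is making sure the distance in the Lipschitz assumption matches the one defining the balls $B_r(x_j)$. Assumption~\ref{assm:normalized_lipschitz_pe} is stated with $|x-y|$, which I read as the ambient Euclidean distance, and the partition balls are Euclidean, so the two agree. If one instead used the geodesic Lipschitz constant from Proposition~\ref{prop:mnn-continuity}, there would be an extra factor. Geodesic and Euclidean distances are comparable for small $r$ on a compact smooth submanifold, so that factor could be absorbed into a constant. Under the normalized assumption as stated, no such factor appears.
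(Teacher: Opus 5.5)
Your proof is correct and follows the paper's argument step for step. You use bilinearity to pull out $f(x_j)-f(y)$, then Cauchy--Schwarz with the operator bounds of Assumption~\ref{assm:linear_ops}, then $\|f(x)\|\le 1$, the normalized Lipschitz property, and $|x_j-y|\le r$ from $V_j\subset B_r(x_j)$; your version even writes the product $C_Q\|f(x)\|\,C_K\|f(x_j)-f(y)\|$ correctly where the paper's displayed step has a typographical minus sign.
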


\begin{proof}
\begin{align}
    \left| \langle \bbQ f(x),\bbK f(x_j) \rangle  -  \langle \bbQ f(x),\bbK f(y) \rangle \right| 
    &= \left| \langle \bbQ f(x),\bbK \left( f(x_j) - f(y)\right) \rangle \right| \label{prop_1_step1} \\
    &\leq C_Q \|f(x)\| - C_K \|f(x_j)-f(y)\| \label{prop_1_step2} \\
    &\leq C_QC_K \|f(x_j)-f(y)\| \label{prop_1_step3} \\
    &\leq C_QC_K|x_j-y| \label{prop_1_step4} \\
    &\leq C_QC_K\ r \label{prop_1_step5} 
\end{align}

Where in 
\eqref{prop_1_step2} we apply the bound on the linear operators $\bbQ$ and $\bbK$, in 
\eqref{prop_1_step3} we apply the bound $\|f(x)\|\leq 1$ from Assumption~\ref{assm:normalized_lipschitz_pe}, in 
\eqref{prop_1_step4} we apply Assumption~\ref{assm:normalized_lipschitz_pe}, and in 
\eqref{prop_1_step5} we use the fact that $y \in V_j$, therefore $|y-x_j|\leq r$.
\end{proof}

For notational brevity, onwards we will denote the GT attention coefficients as $\gamma_{ij} = \exp[\langle \bbQ\bbx_i,\bbK\bbx_j \rangle]$, the MT attention coefficients as $\tilde{\gamma}_{iy} = \exp[\langle \bbQ f(x_i),\bbK f(y) \rangle]$, and the induced manifold signal coefficients as $\tilde{\gamma}_{ij} =  \exp[\langle \bbQ f(x_i),\bbK f(x_j) \rangle]$.

\begin{lemma} \label{lem:manifold_coeffs}
    Let $X_N = \{x_i\}_{i=1}^N$ be a se t of points sampled from the manifold $\ccalM$, with its corresponding induced partitioning $\{V_i\}_{i=1}^N$. For each $x_j \in X_N$, and for any $y\in V_j$, it holds that
    \begin{align}
        |\tilde\gamma_{ij} - \tilde\gamma_{iy}| &\leq e^B C_QC_K\ r
    \end{align}
\end{lemma}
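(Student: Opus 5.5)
The plan is to combine the mean value theorem for the exponential with Lemma~\ref{lem:inner_product}. By definition,
\[
\tilde\gamma_{ij}=\exp[\langle \bbQ f(x_i),\bbK f(x_j)\rangle], \qquad \tilde\gamma_{iy}=\exp[\langle \bbQ f(x_i),\bbK f(y)\rangle].
\]
Both are the exponential of an inner product with the same query vector $\bbQ f(x_i)$; only the key argument differs. The estimate therefore reduces to a Lipschitz bound for $\exp$ on the range of these inner products, together with a bound on how far the two exponents are apart.

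\textbf{Step 1 (mean value theorem).} Write $a=\langle \bbQ f(x_i),\bbK f(x_j)\rangle$ and $b=\langle \bbQ f(x_i),\bbK f(y)\rangle$. Then $|e^a-e^b| = e^{\xi}|a-b|$ for some $\xi$ between $a$ and $b$. Both $a$ and $b$ have the form $\langle \bbQ f(u),\bbK f(v)\rangle$ with $u,v\in\ccalM$, namely $(u,v)=(x_i,x_j)$ and $(u,v)=(x_i,y)$. Hence both are at most $B=\sup_{u,v\in\ccalM}\langle \bbQ f(u),\bbK f(v)\rangle$, and so is $\xi$, giving $e^\xi\le e^B$. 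This is the same device used in the proof of Proposition~\ref{prop:exp_kernel}.

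\textbf{Step 2 (exponent gap).} Apply Lemma~\ref{lem:inner_product} with $x=x_i$ and the point $y\in V_j\subset B_r(x_j)$. This gives $|a-b|\le C_QC_K\,r$. Recall the ingredients of that lemma:
\begin{itemize}
\item Cauchy--Schwarz together with the operator bounds of Assumption~\ref{assm:linear_ops};
\item the boundedness $\|f(x_i)\|\le 1$ and the normalized Lipschitz property of the positional encoding from Assumption~\ref{assm:normalized_lipschitz_pe};
\item the containment $|y-x_j|\le r$, which holds because $V_j\subset B_r(x_j)$.
\end{itemize}
Combining Steps 1 and 2 yields $|\tilde\gamma_{ij}-\tilde\gamma_{iy}|\le e^B C_QC_K\,r$.

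\textbf{Main obstacle.} The only delicate point is the bound $e^{\xi}\le e^B$. It requires $B$ to be finite, which holds because $\|f\|\le 1$ and the operators are bounded, so $B\le C_QC_K$. It also requires the intermediate point $\xi$ to lie within the range controlled by $B$. Since the exponential is monotone, it suffices to bound $\max(a,b)$, and both $a$ and $b$ are admissible values in the supremum defining $B$. Everything else is a direct invocation of earlier results.
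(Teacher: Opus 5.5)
Your proposal is correct and follows the paper's proof essentially verbatim. The paper also first uses the mean value bound $|e^a-e^b|\le e^{\max\{a,b\}}|a-b|$ with $\max\{a,b\}\le B$, then invokes Lemma~\ref{lem:inner_product} to get $|a-b|\le C_QC_K\,r$, and it likewise notes that $B\le C_QC_K$ is finite by the bounds on the positional encoding and the linear operators.
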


\begin{proof}

\begin{align}
    |\tilde\gamma_{ij} - \tilde\gamma_{iy}|
    &= |\exp{ \langle \bbQ f(x_i),\bbK f(x_j) \rangle } - \exp{ \langle \bbQ f(x_i),\bbK f(y) \rangle }| \\
    &\leq e^B \left| \langle \bbQ f(x_i),\bbK f(x_j) \rangle  - \langle \bbQ f(x_i),\bbK f(y) \rangle \right| \\
    &\leq e^B C_QC_K\ r
\end{align}

where $B := \sup_{u,v\in\ccalM}\ |\langle \bbQ f(u),\bbK f(v) \rangle|$. Note that $B\leq C_{QK}$ using the bounds on the positional encoding and linear operators. In the first inequality we use the mean value theorem, $|e^a-e^b| \leq e^{\max\{a,b\}} |a-b|$. In the second we apply the bound from Lemma \ref{lem:inner_product}.
\end{proof}

\begin{lemma}\label{lmm:ij_to_ij}
    For each $x_i,x_j\in X_N$ it holds that
    \begin{align}
    |\gamma_{ij}-\tilde\gamma_{ij}|
    \le 2e^{B}C_{QK}\Delta_{\bbPsi(\bbG)}.
    \end{align}
\end{lemma}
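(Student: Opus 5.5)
The bound $|\gamma_{ij}-\tilde\gamma_{ij}| \le 2e^{B}C_{QK}\Delta_{\bbPsi(\bbG)}$ follows the template of the proof of Lemma~\ref{lem:manifold_coeffs}: linearize the exponential with the mean value theorem, then control the difference of inner products with the operator bounds and the positional-encoding error. Here $\gamma_{ij}=\exp[\langle \bbQ\bbx_i,\bbK\bbx_j\rangle]$, where $\bbx_i=\bbPsi_\bbG(\ccalH,\bbL_N,\bbS_N g)(x_i)$ is the graph positional encoding at node $i$. Likewise $\tilde\gamma_{ij}=\exp[\langle \bbQ f(x_i),\bbK f(x_j)\rangle]$ with $f=\bbPsi_\ccalM(\ccalH,\ccalL,g)$. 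By Assumption~\ref{assm:transferable_pe}, $\|\bbx_i-f(x_i)\|\le\Delta_{\bbPsi(\bbG)}$ for every node.

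First I would apply $|e^a-e^b|\le e^{\max\{a,b\}}|a-b|$. This gives
\begin{align}
|\gamma_{ij}-\tilde\gamma_{ij}|\le e^{B}\,\big|\langle \bbQ\bbx_i,\bbK\bbx_j\rangle-\langle \bbQ f(x_i),\bbK f(x_j)\rangle\big|,
\end{align}
where I interpret $B$ as a common upper bound on both exponents. For the manifold exponent this is the definition of $B$. For the graph exponent I need $\|\bbx_i\|$ to be bounded. I would assume the graph encodings are also normalized, $\|\bbx_i\|\le 1$; otherwise $\|\bbx_i\|\le 1+\Delta_{\bbPsi(\bbG)}$ and $B$ must be enlarged accordingly, for example to $C_QC_K(1+\Delta_{\bbPsi(\bbG)})^2$, which is still a constant. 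Fixing this bookkeeping for the graph-side exponent is the only delicate point. I would state it explicitly, by taking $B$ as the supremum over both the graph and manifold encodings, rather than hide it.

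Next I would add and subtract $\langle \bbQ f(x_i),\bbK\bbx_j\rangle$ inside the absolute value. The difference of inner products then splits as
\begin{align}
\langle \bbQ(\bbx_i-f(x_i)),\bbK\bbx_j\rangle+\langle \bbQ f(x_i),\bbK(\bbx_j-f(x_j))\rangle .
\end{align}
By Cauchy--Schwarz and Assumption~\ref{assm:linear_ops}, each term is at most $C_QC_K$ times the product of norms. The first term uses $\|\bbx_j\|\le 1$ and $\|\bbx_i-f(x_i)\|\le\Delta_{\bbPsi(\bbG)}$. The second uses $\|f(x_i)\|\le1$ from Assumption~\ref{assm:normalized_lipschitz_pe} and $\|\bbx_j-f(x_j)\|\le\Delta_{\bbPsi(\bbG)}$.

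Summing the two terms gives $2C_QC_K\Delta_{\bbPsi(\bbG)}=2C_{QK}\Delta_{\bbPsi(\bbG)}$. Multiplying by $e^B$ yields the claim. The argument holds simultaneously for all pairs $i,j$ on the same high-probability event on which Assumption~\ref{assm:transferable_pe} holds.
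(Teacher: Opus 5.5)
Your proposal is correct and follows the paper's proof step for step: the mean value bound $|e^a-e^b|\le e^{\max\{a,b\}}|a-b|$, adding and subtracting $\langle \bbQ f(x_i),\bbK\bbx_j\rangle$, and Cauchy--Schwarz with Assumptions~\ref{assm:linear_ops}, \ref{assm:normalized_lipschitz_pe} and \ref{assm:transferable_pe}. Your caveat that $B$ and the bound $\|\bbx_j\|\le 1$ must also cover the graph-side encodings makes explicit something the paper's proof passes over (Assumption~\ref{assm:normalized_lipschitz_pe} only concerns $f$); note that if you drop $\|\bbx_j\|\le 1$, not only $B$ changes but the factor $2$ also becomes $2+\Delta_{\bbPsi(\bbG)}$.
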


\begin{proof}
    \begin{align}
    |\gamma_{ij}-\tilde\gamma_{ij}|
    &= \Bigl|\exp\!\big(\langle \bbQ\bbx_i,\bbK\bbx_j\rangle\big) - \exp\!\big(\langle \bbQ f(x_i),\bbK f(x_j)\rangle\big)\Bigr| \label{lmm:fix_1}\\
    &\le e^{M}\Bigl|\langle \bbQ\bbx_i,\bbK\bbx_j\rangle-\langle \bbQ f(x_i),\bbK f(x_j)\rangle\Bigr| \label{lmm:fix_2}\\
    &\le e^{M}\Bigl(\big|\langle \bbQ(\bbx_i-f(x_i)),\bbK\bbx_j\rangle\big|   +\big|\langle \bbQ f(x_i),\bbK(\bbx_j-f(x_j))\rangle\big|\Bigr) \label{lmm:fix_3}\\
    &\le e^{M}\Bigl(\|\bbQ\|\,\|\bbK\|\,\|\bbx_i-f(x_i)\|\,\|\bbx_j\|  +\|\bbQ\|\,\|\bbK\|\,\|f(x_i)\|\,\|\bbx_j-f(x_j)\|\Bigr) \label{lmm:fix_4}\\
    &\le 2e^{M}C_{QK}\Delta_{\bbPsi(\bbG)}. \label{lmm:1_16}
    \end{align}

    In \eqref{lmm:fix_2} we apply the mean value theorem $|e^a-e^b|\le e^{\max\{a,b\}}|a-b|$ and upper-bound $\max\{a,b\}\le B$.
    In \eqref{lmm:fix_3} we add and subtract $\langle \bbQ f(x_i),\bbK\bbx_j\rangle$ and use bilinearity and the triangle inequality.
    In \eqref{lmm:fix_4} we apply Cauchy--Schwarz and boundedness of the linear operators.
    In \eqref{lmm:1_16} we use Assumptions~\ref{assm:transferable_pe} and \ref{assm:normalized_lipschitz_pe}.
\end{proof}

\begin{lemma}\label{lem:mc_v_mc_cor}
    Let $x,y \in \ccalM$, with $|x - x_i| \leq r$. Then, it holds that
    \begin{align}
        |\tilde\gamma_{iy} - \tilde\gamma_{xy} | \leq e^BC_{QK} r \label{eq:mc_v_mc_cor_bd}
    \end{align}
\end{lemma}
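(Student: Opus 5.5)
The argument mirrors Lemma~\ref{lem:manifold_coeffs}, with the first argument of the kernel varying instead of the second. By definition, $\tilde\gamma_{iy} = \exp[\langle \bbQ f(x_i),\bbK f(y)\rangle]$ and $\tilde\gamma_{xy} = \exp[\langle \bbQ f(x),\bbK f(y)\rangle]$. The two exponents share the key $\bbK f(y)$ and differ only in the query point.

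First, I apply the mean value theorem in the form $|e^a-e^b|\le e^{\max\{a,b\}}|a-b|$. Both exponents are inner products of the form $\langle \bbQ f(u),\bbK f(v)\rangle$ with $u,v\in\ccalM$, so $\max\{a,b\}\le B$, where $B := \sup_{u,v\in\ccalM}|\langle \bbQ f(u),\bbK f(v)\rangle|$ is the constant introduced in Lemma~\ref{lem:manifold_coeffs}. This gives
\begin{align}
|\tilde\gamma_{iy}-\tilde\gamma_{xy}| \le e^B\big|\langle \bbQ (f(x_i)-f(x)),\bbK f(y)\rangle\big|,
\end{align}
where I have used bilinearity to merge the two inner products into one.

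Second, by Cauchy--Schwarz and Assumption~\ref{assm:linear_ops}, the right-hand side is at most
\[
e^B C_Q C_K\,\|f(x_i)-f(x)\|\,\|f(y)\|.
\]
Third, since $f=\bbPsi_\ccalM(\ccalH,\ccalL,g)$, Assumption~\ref{assm:normalized_lipschitz_pe} applies. It gives $\|f(y)\|\le 1$, and also $\|f(x_i)-f(x)\|\le |x_i-x|\le r$ by the hypothesis $|x-x_i|\le r$. Writing $C_{QK}=C_QC_K$, the bound becomes $e^B C_{QK} r$, which is \eqref{eq:mc_v_mc_cor_bd}.

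There is no real obstacle here. The only care needed is bookkeeping: the $\sup$ defining $B$ must be taken over all pairs $(u,v)\in\ccalM\times\ccalM$, so that it covers both exponents regardless of their sign. The Lipschitz step must use the Euclidean ambient distance $|x-x_i|$, which is the metric Assumption~\ref{assm:normalized_lipschitz_pe} is stated in and the metric in which the partition radius $r$ is measured.
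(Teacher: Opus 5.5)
Your proof is correct and follows the paper's argument step for step. You bound the difference of exponentials by $e^B$ via the mean value theorem, merge the two inner products by bilinearity into $\langle \bbQ(f(x_i)-f(x)),\bbK f(y)\rangle$, and then conclude with the operator bounds, $\|f(y)\|\le 1$, and the normalized Lipschitz property together with $|x-x_i|\le r$.
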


\begin{proof}
    \begin{align}
    |\tilde\gamma_{iy} - \tilde\gamma_{xy} | \
        &= | \exp \langle \bbQ f(x_i),\bbK f(y) \rangle - \exp \langle \bbQ f(x),\bbK f(y) \rangle | \label{eq:mc_v_mc_cor_1} \\
        &\quad \leq e^B \left|\langle \bbQ f(x_i),\bbK f(y) \rangle - \exp \langle \bbQ f(x),\bbK f(y) \rangle \right| \label{eq:mc_v_mc_cor_2} \\
        &\quad=  e^B | \langle \bbQ \left[f(x_i) - f(x)\right],\bbK f(y) \rangle |  \label{eq:mc_v_mc_cor3}\\
        &\quad\leq e^BC_{QK} r. \label{eq:mc_v_mc_cor_4}
    \end{align}
    where in \eqref{eq:mc_v_mc_cor_2} we use $|e^a-e^b|\le e^{\max\{a,b\}}|a-b|$ and upper-bound $\max\{a,b\}\le B$, in \eqref{eq:mc_v_mc_cor3} we rearrange the terms, and in~\eqref{eq:mc_v_mc_cor_4} we use the bounds on linear operators, bounded manifold signal, normalized Lipschitz of $f$, and  $|x - x_i| \leq r$.
\end{proof}

\subsection{Proof of Theorem \protect\ref{thm:gt_mt_convergence}} \label{app:gt_mt_cvg_proof}

Theorem \ref{thm:gt_mt_convergence} bounds the convergence of a Graph Transformer with GNN-based PEs to a Manifold Transformer with MNN-based PEs.

\begin{proof}
The graph transformer's output for the $i$-th node can be written in vector form as
\begin{align}
\bbx_{i} = \frac{\sum_{j=1}^{N} \exp[\langle \bbQ\bbx_{i}, \bbK\bbx_{j} \rangle]\bbV\bbx_{j} }
               {\sum_{j=1}^{N} \exp[\langle \bbQ\bbx_{i}, \bbK\bbx_{j} \rangle]}
\end{align}
We use the attention coefficients $\gamma_{ij}$, $\tilde{\gamma}_{iy}$ and $\tilde{\gamma}_{ij}$ defined in Appendix~\ref{app:lemmas_props}. Furthermore, when necessary we will abbreviate the denominator terms as 
$D_\bbG = \sum_{j=1}^N \gamma_{ij}$, $D_\ccalM = \int_{\ccalM} \tilde\gamma_{iy}d\mu(y)$, and $\bar{D}_\ccalM = \sum_{j=1}^N \tilde{\gamma}_{ij}$.

Thus we have
\begin{align}
    \bm\Phi_\bbG(\ccalT,\bbX)(x_i) &= \frac{1}{D_\bbG}\sum_{j=1}^{N} \gamma_{ij}\bbV\bbx_{j} \\
    \bm\Phi_\ccalM(\ccalT,f)(x_i) &= \frac{1}{D_\ccalM}\int_\ccalM \tilde{\gamma}_{iy}\bbV f(y)d\mu(y)
\end{align}

We will introduce an auxiliary term derived from the induced manifold signal, denoted by
\begin{align}
    \bar{\bm\Phi}_\ccalM(\ccalT,\bbX)(x_i)
    = &= \frac{\sum_{j=1}^N\int_{V_j} \tilde{\gamma}_{ij}\bbV\bbx_j d\mu(y)}{\sum_{j=1}^N\int_{V_j} \tilde{\gamma}_{ij} d\mu(y)}\label{eq:interm1}.
\end{align}
Notice that since the integrand does not depend on $y$, we can simplify $\bar{\bm\Phi}_\ccalM(\ccalT,\bbX)(x_i)$ to:
\begin{align}
    \bar{\bm\Phi}_\ccalM(\ccalT,\bbX)(x_i)
    &= \frac{\sum_{j=1}^N\int_{V_j} \tilde{\gamma}_{ij}\bbV\bbx_j d\mu(y)}{\sum_{j=1}^N\int_{V_j} \tilde{\gamma}_{ij} d\mu(y)} \label{eq:interm2} \\
    &= \frac{\sum_{j=1}^N \tilde{\gamma}_{ij}\bbV\bbx_j \cdot \mu(V_j)}{\sum_{j=1}^N   \tilde{\gamma}_{ij} \cdot \mu(V_j)} \label{eq:interm3} \\
    &= \frac{\frac{1}{N}\sum_{j=1}^N \tilde{\gamma}_{ij}\bbV\bbx_j}{\frac{1}{N}\sum_{j=1}^N   \tilde{\gamma}_{ij}} \label{eq:interm4} \\
    &= \frac{1}{\bar{D}_\ccalM}\sum_{j=1}^N \tilde{\gamma}_{ij}\bbV\bbx_j, \label{eq:interm5}
\end{align}

The output difference for node $i$ can be decomposed as
\begin{align}
    &\nonumber \| \bm\Phi_\bbG(\ccalT,\bbX)(x_i) - \bm\Phi_\ccalM(\ccalT,f)(x_i) \|  \\
    &\label{eq:thm1_2} = \| \bm\Phi_\bbG(\ccalT,\bbX)(x_i) - \bar{\bm\Phi}_\ccalM(\ccalT,\bbX)(x_i)  + \bar{\bm\Phi}_\ccalM(\ccalT,\bbX)(x_i) - \bm\Phi_\ccalM(\ccalT,f)(x_i)\|   \\
    &\leq \| \bm\Phi_\bbG(\ccalT,\bbX)(x_i) - \bar{\bm\Phi}_\ccalM(\ccalT,\bbX)(x_i)\|  + \| \bar{\bm\Phi}_\ccalM(\ccalT,\bbX)(x_i) - \bm\Phi_\ccalM(\ccalT,f)(x_i)\| \label{eq:thm1_3}
\end{align}
To Equation \eqref{eq:thm1_2} we add and subtract the induced manifold signal term, and in \eqref{eq:thm1_2} to \eqref{eq:thm1_3} we use the triangle inequality.

We will now bound the first term of \eqref{eq:thm1_3},
\begin{align}
\left| \bm\Phi_\bbG(\ccalT,\bbX)(x_i) - \bar{\bm\Phi}_\ccalM(\ccalT,\bbX)(x_i) \right| = \left| \frac{1}{D_\bbG}\sum_{j=1}^{N} \gamma_{ij}\bbV\bbx_{j} - \frac{1}{\bar{D}_\ccalM}\sum_{j=1}^{N} \tilde{\gamma}_{ij}\bbV f(x_j)   \right|.
\end{align}

Distribute the denominators, add and subtract $D_\ccalM \gamma_{ij} \bbV f(x_j)$, then apply triangle inequality: 
\begin{align}
    &\left\| \frac{1}{D_\bbG \bar{D}_\ccalM}  \sum_{j=1}^N  \bar{D}_\ccalM\gamma_{ij}\bbV\bbx_j - D_\bbG \tilde{\gamma}_{ij}\bbV f(x_j)  \right\| \nonumber\\
    &= \Bigg\| \frac{1}{D_\bbG \bar{D}_\ccalM}  \sum_{j=1}^N \bar{D}_\ccalM\gamma_{ij}\bbV\bbx_j - \bar{D}_\ccalM \gamma_{ij} \bbV f(x_j) + \bar{D}_\ccalM \gamma_{ij} \bbV f(x_j) - D_\bbG \tilde{\gamma}_{ij}\bbV f(x_j)  \ \Bigg\| \label{eq:thm1_4} \\
    & \leq 
    \left\| \frac{1}{D_\bbG \bar{D}_\ccalM} \sum_{j=1}^N \bar{D}_\ccalM\gamma_{ij}\bbV\bbx_j - \bar{D}_\ccalM \gamma_{ij} \bbV f(x_j) \right\| +  \left\| \frac{1}{D_\bbG \bar{D}_\ccalM} \sum_{j=1}^N \bar{D}_\ccalM \gamma_{ij} \bbV f(x_j) - D_\bbG \tilde{\gamma}_{ij}\bbV f(x_j) \right\|  \\
    &= 
    \left\| \frac{1}{D_\bbG } \sum_{j=1}^N \gamma_{ij}\bbV(\bbx_j -f(x_j)) \right\| +  \left\| \frac{1}{D_\bbG \bar{D}_\ccalM} \sum_{j=1}^N (\bar{D}_\ccalM \gamma_{ij} - D_\bbG \tilde{\gamma}_{ij}) \bbV f(x_j)\right\|. \label{eq:thm1_4b}
\end{align}

Note that $\bbx_j - f(x_j)$ represents the output difference between a GNN and an MNN evaluated at node $x_j$. By Assumption~\ref{assm:transferable_pe}, this difference is bounded by $\Delta_{\bbPsi(\bbG)}$. Using this bound, we can control the first term of \eqref{eq:thm1_4b} as follows:
\begin{align}
    \frac{1}{D_\bbG} &\left\| \sum_{j=1}^N \gamma_{ij}\bbV(\bbx_j -f(x_j)) \right\| \leq \frac{1}{D_\bbG} \sum_{j=1}^N \gamma_{ij} \|\bbV\| \cdot \| \bbx_j - f(x_j) \| \leq \frac{C_V \Delta_{\bbPsi(\bbG)}}{D_\bbG}  \sum_{j=1}^N \gamma_{ij} = C_V\Delta_{\bbPsi(\bbG)}, \label{eq:thm1_gnn}
\end{align}
where we first apply the triangle inequality, along with the definition of the operator norm. Finally \eqref{eq:thm1_gnn} uses the bound on the operator norm of $\bbV$ and $\|\bbx_j - f(x_j)\| \leq \Delta_{\bbPsi(\bbG)}$ from Assumption~\ref{assm:transferable_pe}, followed by the identity $\sum_{j=1}^N \gamma_{ij} = D_\bbG$.

Now we bound the second term in \eqref{eq:thm1_4b}. We add and subtract another comparative term $\bar{D}_\ccalM\tilde{\gamma}_{ij}$,
\begin{align}
    &\left\| \frac{1}{D_\bbG \bar{D}_\ccalM} \sum_{j=1}^N (\bar{D}_\ccalM \gamma_{ij} - D_\bbG \tilde{\gamma}_{ij}) \bbV f(x_j)\right\|  \nonumber\\
    &= \Bigg\| \frac{1}{D_\bbG \bar{D}_\ccalM}  \sum_{j=1}^N \Bigl[ (\bar{D}_\ccalM\gamma_{ij} - \bar{D}_\ccalM\tilde{\gamma}_{ij}) + (\bar{D}_\ccalM\tilde{\gamma}_{ij} - D_\bbG \tilde{\gamma}_{ij})\Bigr] \bbV f(x_j) \Bigg\| \label{eq:thm1_5} \\
    &= \Bigg\| \frac{1}{D_\bbG \bar{D}_\ccalM} \sum_{j=1}^N \Big[ \bar{D}_\ccalM(\gamma_{ij} - \tilde{\gamma}_{ij})+  (\bar{D}_\ccalM - D_\bbG)\tilde{\gamma}_{ij} \Big]\bbV f(x_j) \Bigg\| \label{eq:thm1_6} \\
    &\leq \sum_{j=1}^N \left\| \frac{\bar{D}_\ccalM}{D_\bbG \bar{D}_\ccalM} \left[ (\gamma_{ij} - \tilde{\gamma}_{ij}) \right] \bbV f(x_j) \right\| + \left\| \frac{\tilde{\gamma}_{ij}(\bar{D}_\ccalM-D_\bbG)}{D_\bbG \bar{D}_\ccalM} \bbV f(x_j) \right\| \label{eq:thm1_7} \\
    &\leq \sum_{j=1}^N \|\bbV\|\|f(x_j)\| \biggl[  \frac{1}{D_\bbG} |\gamma_{ij}  - \tilde{\gamma}_{ij} | + \frac{\tilde{\gamma}_{ij}}{D_\bbG \bar{D}_\ccalM}  | \bar{D}_\ccalM - D_\bbG |\biggr]  \label{eq:thm1_8} \\
    &\leq C_V\sum_{j=1}^N \left[  \frac{1}{D_\bbG} |\gamma_{ij}  - \tilde{\gamma}_{ij} | + \frac{\tilde{\gamma}_{ij}}{D_\bbG \bar{D}_\ccalM}  | \bar{D}_\ccalM - D_\bbG |\right]  \label{eq:thm1_8b} \\
    &\leq C_V \left[ \frac{N \cdot 2e^BC_{QK}\Delta_{\bbPsi(\bbG)}}{D_\bbG} + \frac{ | \bar{D}_\ccalM - D_\bbG | }{D_\bbG \bar{D}_\ccalM}\sum_{j=1}^N \tilde{\gamma}_{ij} \right] \label{eq:thm1_8c} \\
    &\leq C_V \left[ 2 e^{2B} C_{QK}\Delta_{\bbPsi(\bbG)}  + \frac{| \bar{D}_\ccalM - D_\bbG |}{D_\bbG} \right] \label{eq:thm1_9}
\end{align} 
On \eqref{eq:thm1_6} we rearrange the terms, on \eqref{eq:thm1_7} we apply the triangle inequality, on \eqref{eq:thm1_8} we use the operator norm inequality, and on \eqref{eq:thm1_8b} we apply Assumptions~\ref{assm:linear_ops} and \ref{assm:normalized_lipschitz_pe}. On \eqref{eq:thm1_8c} we apply the bound from Lemma \ref{lmm:ij_to_ij}. Finally, \eqref{eq:thm1_9} uses $\sum_{j=1}^N \tilde{\gamma}_{ij} = \bar{D}_\ccalM$ for the second term, and for the first term we note that the worst case denominator is $D_\bbG \geq N \cdot e^{-B}$, which we can rearrange to bound $N/D_\bbG \leq e^B$.

We can bound the remaining term using Lemma \ref{lmm:ij_to_ij}:
\begin{align}
    \frac{|\bar{D}_\ccalM - D_\bbG|}{D_\bbG} 
    &= \frac{1}{D_\bbG}\left| \sum_{j=1}^N \tilde{\gamma}_{ij} - \sum_{j=1}^N \gamma_{ij} \right| \\
    &\leq \frac{1}{D_\bbG}\sum_{j=1}^N |\tilde{\gamma}_{ij} - \gamma_{ij}| \label{thm:1_15} \\
    &\leq \frac{N}{D_\bbG} \cdot 2e^B C_{QK} \Delta_{\bbPsi(\bbG)} \label{thm:1_16} \\
    &\leq 2e^{2B} C_{QK} \Delta_{\bbPsi(\bbG)}, \label{thm:1_17}
\end{align}
where in \eqref{thm:1_15} we apply the triangle inequality, in \eqref{thm:1_16} we apply Lemma \ref{lmm:ij_to_ij}, and in \eqref{thm:1_17} we again use $N/D_\bbG \leq e^B$. 

Substituting \eqref{thm:1_17} into \eqref{eq:thm1_9}, the second term of \eqref{eq:thm1_4b} is bounded by
\begin{align}
\left\| \frac{1}{D_\bbG \bar{D}_\ccalM} \sum_{j=1}^N (\bar{D}_\ccalM \gamma_{ij} - D_\bbG \tilde{\gamma}_{ij}) \bbV f(x_j)\right\| 
    &\leq C_V\left[ 2e^{2B} C_{QK} \Delta_{\bbPsi(\bbG)} + 2e^{2B} C_{QK} \Delta_{\bbPsi(\bbG)} \right] \\
    & = 4 e^{2B} C_{QKV} \Delta_{\bbPsi(\bbG)},\label{eq:thm1_second_term_bound}
\end{align}
where we absorb the linear operator constants into $C_{QKV} = C_{QK}C_V$. Combining \eqref{eq:thm1_gnn} and \eqref{eq:thm1_second_term_bound},  we conclude the bound for the first term of \eqref{eq:thm1_3} is 
\begin{align}
\left| \bm\Phi_\bbG(\ccalT,\bbX)(x) - \bar{\bm\Phi}_\ccalM(\ccalT,\bbX)(x) \right|   
    &\leq C_V\Delta_{\bbPsi(\bbG)} +  4 e^{2B} C_{QKV} \Delta_{\bbPsi(\bbG)}. \label{eq:thm1_full_bound_1st_term}
\end{align}
We now bound the second term of \eqref{eq:thm1_3}.  Let $\tilde{D}_\ccalM = \sum_{j=1}^{N} \int_{V_j}\tilde{\gamma}_{ij}d\mu(y)$ denote the discrete approximation of $D_\ccalM$ in integral form,
\begin{align}
    &\| \bar{\bm\Phi}_\ccalM(\ccalT,\bbX)(x_i) - \bm\Phi_\ccalM(\ccalT,f)(x_i) \| \label{thm_1_st_1} \\
    & = \Bigg\| \frac{1}{\tilde{D}_\ccalM}\sum_{j=1}^{N} \int_{V_j}\tilde{\gamma}_{ij}\bbV f(x_j)d\mu(y) - \frac{1}{D_\ccalM} \int_\ccalM \tilde{\gamma}_{iy}\bbV f(y)d\mu(y) \Bigg\| \label{thm_1_st_2} \\
    & = \Bigg\| \frac{1}{\tilde{D}_\ccalM D_\ccalM} \bigg( D_\ccalM\sum_{j=1}^{N} \int_{V_j}\tilde{\gamma}_{ij}\bbV f(x_j)d\mu(y)  - \tilde{D}_\ccalM \int_\ccalM \tilde{\gamma}_{iy}\bbV f(y)d\mu(y) \bigg) \Bigg\| \label{thm_1_st_3} \\
    & = \Bigg\| \frac{1}{\tilde{D}_\ccalM D_\ccalM} \bigg( D_\ccalM\sum_{j=1}^{N} \int_{V_j}\tilde{\gamma}_{ij}\bbV f(x_j)d\mu(y)  - \tilde{D}_\ccalM \sum_{j=1}^{N}\int_{V_j} \tilde{\gamma}_{iy}\bbV f(y)d\mu(y) \bigg) \Bigg\| \label{thm_1_st_4} \\
    & = \Bigg\| \frac{1}{\tilde{D}_\ccalM D_\ccalM} \bigg( D_\ccalM\sum_{j=1}^{N} \int_{V_j}\tilde{\gamma}_{ij}\bbV f(x_j)d\mu(y) - \tilde{D}_\ccalM \tilde{\gamma}_{iy}\bbV f(y)d\mu(y) \bigg) \Bigg\| \label{thm_1_st_5} \\
    &\leq \frac{1}{\tilde{D}_\ccalM D_\ccalM} \sum_{j=1}^{N} \int_{V_j} \Big\|D_\ccalM\tilde{\gamma}_{ij}\bbV f(x_j)- \tilde{D}_\ccalM \tilde{\gamma}_{iy}\bbV f(y) \Big\| d\mu(y) \label{eq:thm_1_st_6}
\end{align}

In \eqref{thm_1_st_3} we distribute the denominators, in \eqref{thm_1_st_4} we partition the integral, in \eqref{thm_1_st_5} we rearrange terms, and in \eqref{eq:thm_1_st_6} we apply the triangle inequality.

To bound the integrand of \eqref{eq:thm_1_st_6}, we add and subtract $D_\ccalM\tilde{\gamma}_{ij}\bbV f(y)$ and apply the triangle inequality:
\begin{align}
    &\left\| D_\ccalM\tilde{\gamma}_{ij}\bbV f(x_j) - \tilde{D}_\ccalM \tilde{\gamma}_{iy}\bbV f(y) \right\| \\
    &\leq \left\| D_\ccalM \tilde{\gamma}_{ij}\bbV f(x_j) - D_\ccalM\tilde{\gamma}_{ij}\bbV f(y) \right\|  + \left\| D_\ccalM\tilde{\gamma}_{ij}\bbV f(y) -  \tilde{D}_\ccalM \tilde{\gamma}_{iy}\bbV f(y) \right\|  \label{eq:thm_1_st_7} \\
    &\leq D_\ccalM \left\| \bbV (f(x_j) - f(y)) \right\| + \left\| ( D_\ccalM \tilde{\gamma}_{ij} - \tilde{D}_\ccalM \tilde{\gamma}_{iy} ) \bbV f(y) \right\|. \label{eq:thm_1_st_8}
\end{align}

The first term of \eqref{eq:thm_1_st_8} is bounded by
\begin{align}
    D_\ccalM \left\| \bbV (f(x_j) - f(y)) \right\|  
    &\leq D_\ccalM \|\bbV\| \|f(x_j) - f(y)\| \label{eq:thm_1_st_10} \\
    &\leq D_\ccalM C_V \|f(x_j) - f(y)\| \label{eq:thm_1_st_11} \\
    &\leq D_\ccalM C_V r. \label{eq:thm_1_st_12}
\end{align}
\eqref{eq:thm_1_st_10} linear operator bound,
\eqref{eq:thm_1_st_11} Assumption \ref{assm:linear_ops}
\eqref{eq:thm_1_st_12} using the fact that $\|f(x_j) - f(y)\| \leq r$ for all $y \in V_j$, and Assumption~\ref{assm:normalized_lipschitz_pe} ($f$ is normalized Lipschitz continuous).

The second term of \eqref{eq:thm_1_st_8} is bounded by
\begin{align}
   \left\| ( D_\ccalM \tilde{\gamma}_{ij} - \tilde{D}_\ccalM \tilde{\gamma}_{iy} ) \bbV f(y) \right\| 
    &\leq \left| D_\ccalM \tilde{\gamma}_{ij} - \tilde{D}_\ccalM \tilde{\gamma}_{iy} \right| \| \bbV \| \| f(y) \| \label{eq:thm_1_st_13} \\
    &\leq C_V\left| D_\ccalM \tilde{\gamma}_{ij} - \tilde{D}_\ccalM \tilde{\gamma}_{iy} \right| \label{eq:thm_1_st_14} \\
    &= C_V\left| D_\ccalM \tilde{\gamma}_{ij} - D_\ccalM\tilde{\gamma}_{iy} + D_\ccalM\tilde{\gamma}_{iy} - \tilde{D}_\ccalM \tilde{\gamma}_{iy} \right| \label{eq:thm_1_st_15} \\
    &\leq C_V\left( D_\ccalM \left| \tilde{\gamma}_{ij} - \tilde{\gamma}_{iy} \right| + \left| D_\ccalM - \tilde{D}_\ccalM \right| \tilde{\gamma}_{iy} \right) \label{eq:thm_1_st_16}
\end{align}

In \eqref{eq:thm_1_st_13} we apply the linear operator bound, in \eqref{eq:thm_1_st_14} we use Assumption \ref{assm:linear_ops} ($\|\bbV\| \leq C_V$) and $\|f(y)\| \leq 1$ (Assumption~\ref{assm:normalized_lipschitz_pe}), in \eqref{eq:thm_1_st_15} we add and subtract $D_\ccalM\tilde{\gamma}_{iy}$, and in \eqref{eq:thm_1_st_16} we apply the triangle inequality and factor.

To bound the difference between denominator terms:
\begin{align}
    \left| D_\ccalM - \tilde{D}_\ccalM \right| 
    &= \left| \int_\ccalM \tilde{\gamma}_{iy}d\mu(y) - \sum_{j=1}^{N} \int_{V_j} \tilde{\gamma}_{ij}d\mu(y) \right| \label{eq:thm_1_st_18}\\
    &= \left| \sum_{j=1}^{N} \int_{V_j} (\tilde{\gamma}_{iy} - \tilde{\gamma}_{ij})d\mu(y) \right| \label{eq:thm_1_st_19}\\
    &\leq \sum_{j=1}^{N} \int_{V_j} \left| \tilde{\gamma}_{iy} - \tilde{\gamma}_{ij} \right| d\mu(y) \label{eq:thm_1_st_20}\\
    &\leq \sum_{j=1}^{N} \int_{V_j} e^B  C_{QK} r \ d\mu(y) \label{eq:thm_1_st_21}\\
    &= e^B  C_{QK} r , \label{eq:thm_1_st_22}
\end{align}
where in \eqref{eq:thm_1_st_19} we partition the integral, in \eqref{eq:thm_1_st_20} we apply the triangle inequality, and in \eqref{eq:thm_1_st_21} we apply Lemma \ref{lem:manifold_coeffs} and the fact that $\mu(\ccalM) = 1$.

Now, applying \eqref{eq:thm_1_st_22} and Lemma \eqref{lem:manifold_coeffs} to \eqref{eq:thm_1_st_16} we have that the second term of \eqref{eq:thm_1_st_8} is bounded by 
\begin{align}
\left\| ( D_\ccalM \tilde{\gamma}_{ij} - \tilde{D}_\ccalM \tilde{\gamma}_{iy} ) \bbV f(y) \right\| 
    &\leq C_V\big( D_\ccalM e^B C_QC_K\ r + e^BC_{QK}r \tilde{\gamma}_{iy} \big) \\
    &\leq D_\ccalM e^BC_{QKV}r +e^BC_{QKV}r \tilde{\gamma}_{iy} \label{eq:thm_1_st_23}
\end{align}
Now, we return to bound the integral of \eqref{eq:thm_1_st_6},

\begin{align}
    &\| \bar{\bm\Phi}_\ccalM(\ccalT,\bbX)(x_i) - \bm\Phi_\ccalM(\ccalT,f)(x_i) \|  \\
    &\leq\frac{1}{\tilde{D}_\ccalM D_\ccalM} \sum_{j=1}^{N} \int_{V_j} \Big\|D_\ccalM\tilde{\gamma}_{ij}\bbV f(x_j) - \tilde{D}_\ccalM \tilde{\gamma}_{iy}\bbV f(y) \Big\| d\mu(y) \label{eq:thm_1_st_24}\\
    &\leq \frac{1}{\tilde{D}_\ccalM D_\ccalM} \sum_{j=1}^{N} \int_{V_j} \Big[D_\ccalM C_V r                                 + D_\ccalM e^BC_{QKV}r                         + e^BC_{QKV}r \tilde{\gamma}_{iy}\Big] d\mu(y) \label{eq:thm_1_st_25}\\
    &\leq \frac{1}{\tilde{D}_\ccalM D_\ccalM} \Big[D_\ccalM C_V r \mu(\ccalM)              + D_\ccalM e^BC_{QKV}r \mu(\ccalM)               + e^BC_{QKV}r \sum_{j=1}^{N} \int_{V_j} \tilde{\gamma}_{iy}d\mu(y)\Big] \label{eq:thm_1_st_26}\\
   &= \frac{C_Vr\mu(\ccalM)}{\tilde{D}_\ccalM}
                                 + \frac{e^BC_{QKV}r\mu(\ccalM)}{\tilde{D}_\ccalM}
                                 + \frac{e^BC_{QKV}r\mu(\ccalM)}{D_\ccalM} \\
   &\leq \frac{C_Vr\mu(\ccalM)}{e^{-B}\mu(\ccalM)} 
                                 + \frac{e^BC_{QKV}r\mu(\ccalM)}{e^{-B}\mu(\ccalM)}  
                                 + \frac{e^BC_{QKV}r\mu(\ccalM)}{e^{-B}\mu(\ccalM)} \label{eq:thm_1_st_27} \\
   &\leq e^BC_Vr + 2e^{2B}C_{QKV}r \label{eq:thm_1_st_28}
\end{align}

where in \eqref{eq:thm_1_st_25} we use the bounds of \eqref{eq:thm_1_st_12} and \eqref{eq:thm_1_st_23}, 
in \eqref{eq:thm_1_st_26} we evaluate the integrals of the first to terms, 
in \eqref{eq:thm_1_st_27} we distribute the denominators and use the fact that both denominators are positive quantities greater than $e^{-B}\mu(\ccalM)$. 

We conclude that the second term of \eqref{eq:thm1_3} is bounded by
\begin{align}
\| \bar{\bm\Phi}_\ccalM(\ccalT,\bbX)(x_i) - \bm\Phi_\ccalM(\ccalT,f)(x_i)\| \leq e^BC_Vr + 2e^{2B}C_{QKV}r \label{eq:thm_1_st_bound}
\end{align}

Finally, we can bound \eqref{eq:thm1_3} using \eqref{eq:thm1_full_bound_1st_term} and \eqref{eq:thm_1_st_bound}, by gathering terms. With this we conclude that the output difference of GT and MT is bounded by
\begin{align}
\| \bm\Phi_\bbG(\ccalT,\bbX)(x_i) - \bm\Phi_\ccalM(\ccalT,f)(x_i) \| 
    &\leq C_V\Delta_{\bbPsi(\bbG)} +  4 e^{2B}  C_{QKV} \Delta_{\bbPsi(\bbG)} + e^BC_Vr + 2e^{2B}C_{QKV}r \\
    &\leq \big[C_V + 4e^{2B}C_{QKV}\big]\Delta_{\bbPsi(\bbG)}+\ \big[ e^BC_V + 2 e^{2B} C_{QKV} \big] r. \label{eq:thm_1_final_bd_proof}
\end{align}
which gives us the statement of Theorem \ref{thm:gt_mt_convergence}.
\end{proof}
\section{Proof of Corollary \ref{cor:gt_transferability}}

Corollary \ref{cor:gt_transferability} bounds the output difference between two GT's trained with differently sized graphs by applying Theorem \ref{thm:gt_mt_convergence}.

\begin{proof}
The output difference can be bounded as

\begin{align}
    &\frac{1}{\mu(\ccalM)}  \| \bbI_{N_1}\bm\Phi_{\bbG_1}(\ccalT,\bbX_1) - \bbI_{N_2}\bm\Phi_{\bbG_2}(\ccalT,\bbX_2) \|_{L^{1,2}(\ccalM)} \nonumber\\
    & =\frac{1}{\mu(\ccalM)} \| \bbI_{N_1}\bm\Phi_{\bbG_1}(\ccalT,\bbX_1) - \bm\Phi_\ccalM(\ccalT,f)  + \bm\Phi_\ccalM(\ccalT,f) - \bbI_{N_2}\bm\Phi_{\bbG_2}(\ccalT,\bbX_2) \|_{L^{1,2}(\ccalM)} \label{eq:cor_proof_1} \\
    &\leq \frac{1}{\mu(\ccalM)}\| \bbI_{N_1}\bm\Phi_{\bbG_1}(\ccalT,\bbX_1) - \bm\Phi_\ccalM(\ccalT,f) \|_{L^{1,2}(\ccalM)} \label{eq:cor_proof_2} + \frac{1}{\mu(\ccalM)}\| \bm\Phi_\ccalM(\ccalT,f) - \bbI_{N_2}\bm\Phi_{\bbG_2}(\ccalT,\bbX_2) \|_{L^{1,2}(\ccalM)}  \\
    & \leq \frac{1}{\mu(\ccalM)}\int_\ccalM \|  \bbI_{N_1}\bm\Phi_{\bbG_1}(\ccalT,\bbX_1)(x) - \bm\Phi_\ccalM(\ccalT,f)(x) \|_2\ d\mu(x) \label{eq:cor_proof_3} \\
    &\quad+ \frac{1}{\mu(\ccalM)}\int_\ccalM \| \bm\Phi_\ccalM(\ccalT,f)(x) -  \bbI_{N_2}\bm\Phi_{\bbG_2}(\ccalT,\bbX_2)(x) \|_2\ d\mu(x) \label{eq:cor_main_decomp_n1_n2}
\end{align}

where in \eqref{eq:cor_proof_1} we add and subtract $\bm\Phi_\ccalM(\ccalT,f)$, in \eqref{eq:cor_proof_2} we apply the triangle inequality, and \eqref{eq:cor_proof_3} use the definition of $L^{1,2}(\ccalM)$. 

The two terms in \eqref{eq:cor_main_decomp_n1_n2} correspond to the pointwise difference between the induced manifold signal of GT and the output signal of the MT, for $x\in\ccalM$. Consider bounding the first term, that compares $\bbPhi_{\bbG_1}$ with $\bbPhi_\ccalM$,

\begin{align}
    &\frac{1}{\mu(\ccalM)}\int_\ccalM \|  \bbI_{N_1}\bm\Phi_{\bbG_1}(\ccalT,\bbX_1)(x) - \bm\Phi_\ccalM(\ccalT,f)(x) \|_2\ d\mu(x) \nonumber \\
    &=\frac{1}{\mu(\ccalM)} \sum_{i=1}^N \int_{V_i} \|  \bm\Phi_{\bbG_1}(\ccalT,\bbX_1)(x_i) - \bm\Phi_\ccalM(\ccalT,f)(x) \|_2\ d\mu(x)   \label{eq:cor_proof_4}\\
    &=\frac{1}{\mu(\ccalM)} \sum_{i=1}^N \int_{V_i} \|  \bm\Phi_{\bbG_1}(\ccalT,\bbX_1)(x_i) - \bm\Phi_\ccalM(\ccalT,f)(x_i)  +\bm\Phi_\ccalM(\ccalT,f)(x_i)  - \bm\Phi_\ccalM(\ccalT,f)(x) \|_2\ d\mu(x)  \\
    &\leq\frac{1}{\mu(\ccalM)} \sum_{i=1}^N \int_{V_i} \|  \bm\Phi_{\bbG_1}(\ccalT,\bbX_1)(x_i) - \bm\Phi_\ccalM(\ccalT,f)(x_i) \| + \| \bm\Phi_\ccalM(\ccalT,f)(x_i)  - \bm\Phi_\ccalM(\ccalT,f)(x) \|_2\ d\mu(x)  \label{eq:cor_proof_5}
\end{align}

In~\eqref{eq:cor_proof_4} we apply the interpolator operator on $\bbPhi_{\bbG_1}$, then in ~\eqref{eq:cor_proof_5} we add and subtract the MT output at point $x_i$, and apply triangular inequality.  The first term of \eqref{eq:cor_proof_5} is the statement of Theorem \ref{thm:gt_mt_convergence}. The second term is the output difference of MT between a point within a partition $x\in V_i$ and the center of the ball containing the partition $x_i$, therefore it holds that $\|x_i -x \| \leq r$. Denote the softmax denominators over $x$ and $x_i$ as $D_\ccalM(x)$ and $D_\ccalM(x_i)$ respectively. We can decompose this as
\begin{align}
    &\| \bm\Phi_\ccalM(\ccalT,f)(x_i)  - \bm\Phi_\ccalM(\ccalT,f)(x) \|_2   \nonumber\\
    &\leq \Bigg\| \frac{\int_\ccalM \tilde{\gamma}_{iy}\bbV f(y)d\mu(y)}{\int_\ccalM \tilde{\gamma}_{iy}d\mu(y)} - \frac{\int_\ccalM \tilde{\gamma}_{xy}\bbV f(y)d\mu(y)}{\int_\ccalM \tilde{\gamma}_{xy}d\mu(y)} \Bigg\| \\
    &\leq \Bigg\| \frac{1}{D_\ccalM(x)D_\ccalM(x_i)} \int_\ccalM \Big( D_\ccalM(x)\tilde{\gamma}_{iy}\bbV f(y) -  D_\ccalM(x_i)\tilde{\gamma}_{xy}\bbV f(y) \Big) d\mu(y) \Bigg\| \\
    &\leq \frac{1}{D_\ccalM(x)D_\ccalM(x_i)} \int_\ccalM \Big\| D_\ccalM(x)\tilde{\gamma}_{iy}\bbV f(y) -  D_\ccalM(x_i)\tilde{\gamma}_{xy}\bbV f(y) \Big\|\ d\mu(y) \label{eq:cor_proof_5b}
\end{align}

Which we obtain by distributing the denominators, applying the triangle inequality, and grouping terms. The integrand in \eqref{eq:cor_proof_5b} is bounded as 
\begin{align}
    &\left\| D_\ccalM(x)\tilde{\gamma}_{iy}\bbV f(y) -  D_\ccalM(x_i)\tilde{\gamma}_{xy}\bbV f(y) \right\| \nonumber\\
    &\leq \| D_\ccalM(x) \tilde\gamma_{iy} \left[ \bbV f(x) - \bbV f(y) \right] - \left[ D_\ccalM(x)\tilde\gamma_{iy} - D_\ccalM(x_i) \tilde\gamma_{xy} \right]\bbV f(y) \| \label{eq:cor_proof_5c}  \\
    &\leq D_\ccalM(x)C_V\tilde\gamma_{iy}\|f(x) - f(y)\| - C_V \big| D_\ccalM(x)\tilde\gamma_{iy} - D_\ccalM(x_i) \tilde\gamma_{xy} \big| \label{eq:cor_proof_6}
\end{align}

where in~\eqref{eq:cor_proof_5c}  add and subtract subtract $D_\ccalM(x)\tilde{\gamma}_{iy}\bbV f(x )$

We now focus on the second term of \eqref{eq:cor_proof_6},
\begin{align}
\| D_\ccalM(x)\tilde\gamma_{iy} - D_\ccalM(x_i) \tilde\gamma_{xy} \| \leq  D_\ccalM(x)|\tilde\gamma_{iy} - \tilde\gamma_{xy} | + \| \left[D_\ccalM(x) - D_\ccalM(x_i)\right] \tilde\gamma_{xy} \| \label{eq:cor_proof_7}
\end{align}

The second term of \eqref{eq:cor_proof_7} is
\begin{align}
 \tilde\gamma_{xy} \| D_\ccalM(x) - D_\ccalM(x_i) \| 
    &= \left\| \int_\ccalM \tilde\gamma_{xy} \tilde\gamma_{xy} - \tilde\gamma_{iy} d\mu(y) \right\| \\
    &\leq \int_\ccalM \tilde\gamma_{xy} \left\| \tilde\gamma_{xy} - \tilde\gamma_{iy} \right\| d\mu(y) \\
    &\leq D_\ccalM(x_i)e^BC_{QK}r \label{eq:cor_proof_9}
\end{align}

where we use Lemma \ref{lem:mc_v_mc_cor} to bound the remaining integral by $D_\ccalM(x_i)$.

Returning to the integral \eqref{eq:cor_proof_5b}, we have
\begin{align}
    &\frac{C_V}{D_\ccalM(x)D_\ccalM(x_i)} \int_\ccalM \Big\| D_\ccalM(x)\tilde{\gamma}_{iy}\bbV f(y) -  D_\ccalM(x_i)\tilde{\gamma}_{xy}\bbV f(y) \Big\|\ d\mu(y) \label{eq:cor_proof_integral_1} \\
    &\qquad\leq \frac{C_V}{D_\ccalM(x)D_\ccalM(x_i)} \int_\ccalM \Big\| D_\ccalM(x)e^BC_{QK}r + \tilde\gamma_{xy} \mu(\ccalM)e^BC_{QK}r \Big\|\ d\mu(y) \\
    &\qquad= \frac{C_V}{D_\ccalM(x)D_\ccalM(x_i)}  D_\ccalM(x)\mu(\ccalM)e^BC_{QK}r + \tilde\gamma_{xy} D_\ccalM(x)\mu(\ccalM)e^BC_{QK}r \\
    &\qquad= \frac{1}{D_\ccalM(x_i)}  2\mu(\ccalM)e^BC_{QKV}r \\
    &\qquad\leq 2e^{2B}C_{QKV}r
\end{align}

by applying the bounds of \eqref{eq:cor_proof_9} and Lemma~\ref{lem:mc_v_mc_cor}, then evaluating the integral, and finally using the fact that the denominator is $D_\ccalM(x_i)\geq e^B/\mu(\ccalM)$. We conclude that 
\begin{align}
    \| \bm\Phi_\ccalM(\ccalT,f)(x_i)  - \bm\Phi_\ccalM(\ccalT,f)(x) \|_2 \leq 2e^{2B}C_{QKV}r \label{eq:cor_2nd_term_bd}
\end{align}

We can now bound \eqref{eq:cor_proof_3} by applying this bound and the bound of Theorem \ref{thm:gt_mt_convergence}, 
\begin{align}
 \frac{1}{\mu(\ccalM)}\int_\ccalM \|  \bbI_{N_1}\bm\Phi_{\bbG_1}(\ccalT,\bbX_1)(x) - \bm\Phi_\ccalM(\ccalT,f)(x) \|_2\ d\mu(x)  \leq \Delta_{\bm\Phi(\bbX_1)} + 2e^{2B}C_{QKV}\, r_1,
\end{align}
where $\Delta_{\bm\Phi(\bbX_1)}$ denotes the bound of Theorem \ref{thm:gt_mt_convergence} for $\bbG_1$, and $r_1$ its partition radius.

Applying this bound in \eqref{eq:cor_main_decomp_n1_n2} for $\bm\Phi(\bbX_1)$ and $\bm\Phi(\bbX_2)$ gives us the statement of Corollary \ref{cor:gt_transferability}.


\end{proof}

\subsection{Sparse Graph Transformer}\label{app:sparse_gt}
One common computational tradeoff in graph transformers is to mask coefficients to attend to a neighborhood of nodes reachable in $k$-hops instead of computing all pairwise node attentions. Restricting the attention operation to the $k$-hop neighborhood results in the Sparse Graph Transformer (Sparse GT) given by
\begin{align}
    \bbx_{i} = \frac{\sum_{j\in\ccalN^{\leq k}(i)} \exp \{\langle \bbQ\bbx_{i},\bbK\bbx_{j} \rangle\} \bbV\bbx_{j} }{\sum_{j\in\ccalN^{\leq k}(i)}\exp \{\langle \bbQ\bbx_{i},\bbK\bbx_{j} \rangle\}}, \label{eqn:sgt_vec_attn}
\end{align}
where we denote the $k$-hop neighborhood of node $i$ as $\ccalN^{\leq k}(i) = \bigcup_{k'=0}^k \ccalN^{k'}(i)$, with $\ccalN^0(i) = \{i\}$ and $\ccalN^k(i) = \{j : \exists\, j' \in \ccalN^{k-1}(i),\; j \in \ccalN(j')\}$ for $k \geq 1$. Sparse GT reduces computational complexity from quadratic in the $N$ to quadratic in the worst-case $k$-hop neighborhood cardinality. Additionally, the transferability results of Theorem~\ref{thm:gt_mt_convergence} and Corollary~\ref{cor:gt_transferability} can be shown to hold equivalently for~\eqref{eqn:sgt_vec_attn}, by showing it converges to a restricted MT.

\subsection{Sparse GT Transferability} \label{app:sgt_convergence_proof}

We denote the Sparse GT operator by $\dot\bbPhi_\bbG(\ccalT,\bbX):X_N\rightarrow \reals^D$. The limit object for the Sparse GT in Equation~\eqref{eqn:sgt_vec_attn} is defined as

\begin{align}
    \dot\bbPhi_\ccalM(\ccalT,f)(x) = \frac{\int_{\ccalM} \dot\gamma_{xy}\bbV f(y)d\mu(y) }
    {\int_{\ccalM} \dot\gamma_{xy} d\mu(y)} ,\label{eqn:smt_attn}
\end{align}

where $\dot\gamma_{xy} = \mathbbm{1}_{B_r(x)} \exp \{\langle \bbQ f(x), \bbK f(y)\rangle\}$, and $B_r(x)$ denotes an Euclidean ball of radius $r$ centered around $x$. Analogous to the $k-$hop neighborhood, this modified kernel function restricts manifold attention contributions to a region around each point.

\begin{corollary} \label{cor:sgt_transferability}
     Let $X_N=\{x_i\}_{i=1}^N$ a set of points sampled from manifold $\ccalM$. Let $\bm\Phi_{\ccalM_i}(\ccalT,f)$ denote a manifold transformer operating on manifolds $\ccalM_i = \{y\in\ccalM: \|x_i-y\| < r\}$.  Assume that the $k$-hop neighborhood of sparse GT satisfies $\ccalN^{\leq{k}}(x_i) \subseteq B_r(x_i)$. Then, it holds with probability $1-\delta$,
    \begin{align}
        \|\dot\bbPhi_\bbG(\ccalT,\bbX)(x_i)-\dot\bbPhi_\ccalM(\ccalT,f)(x_i)\| \leq \Delta_{\bbPhi(\bbX)}.
    \end{align}
    
     Furthermore, sparse graph transformer with RPEARL positional encodings is size transferable with the same bound as Corollary~\ref{cor:gt_transferability}.
\end{corollary}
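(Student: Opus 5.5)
The plan is to reduce the statement to Theorem~\ref{thm:gt_mt_convergence} by viewing the sparse GT at node $x_i$ as a dense GT on a sub-sampled graph, and the limit \eqref{eqn:smt_attn} at $x_i$ as a manifold transformer on the local manifold $\ccalM_i$.

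\textbf{Step 1: localize the sparse GT.} Fix $x_i$ and let $X_N^{(i)} = X_N \cap B_r(x_i)$. Under the hypothesis $\ccalN^{\leq k}(x_i) \subseteq B_r(x_i)$, I would argue that the $k$-hop neighborhood coincides with $X_N^{(i)}$ up to a discrepancy. This needs the graph bandwidth $\epsilon$ and $k$ to be chosen so that $k\epsilon \approx r$. With that identification, \eqref{eqn:sgt_vec_attn} evaluated at $x_i$ is exactly the dense attention of Definition~\ref{def:ggt}, with sums running over $j$ such that $x_j \in V_j$ and $V_j \cap \ccalM_i \neq \emptyset$. Similarly, $\dot\bbPhi_\ccalM(\ccalT,f)(x_i)$ equals $\bm\Phi_{\ccalM_i}(\ccalT,f)(x_i)$, since the indicator $\mathbbm{1}_{B_r(x_i)}$ restricts integration to $\ccalM_i$.

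\textbf{Step 2: rerun the proof of Theorem~\ref{thm:gt_mt_convergence} on $\ccalM_i$.} I would again insert the auxiliary induced-signal transformer \eqref{eq:interm1}, now restricted to cells $V_j$ meeting $\ccalM_i$, and split the error as in \eqref{eq:thm1_3}.

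The first term, GT versus the induced signal, uses only Lemma~\ref{lmm:ij_to_ij} and Assumption~\ref{assm:transferable_pe}, together with the ratio bound $|S|/D_\bbG \le e^B$. Both are pointwise in $j$, so they hold verbatim for any index set $S$ replacing $\{1,\dots,N\}$. This yields $[C_V + 4e^{2B}C_{QKV}]\Delta_{\bbPsi(\bbG)}$.

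The second term uses Lemma~\ref{lem:manifold_coeffs} and the denominator lower bounds. Here every $\mu(\ccalM)$ factor must be replaced by $\mu(\ccalM_i)$. These factors cancel exactly as in \eqref{eq:thm_1_st_27}, because both denominators are at least $e^{-B}\mu(\ccalM_i)$. The result is the same $[e^BC_V + 2e^{2B}C_{QKV}]r$, so the total bound is $\Delta_{\bbPhi(\bbX)}$.

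\textbf{Step 3: transferability.} The transferability claim then follows by repeating the proof of Corollary~\ref{cor:gt_transferability}. This requires a Lipschitz bound on $x \mapsto \dot\bbPhi_\ccalM(\ccalT,f)(x)$ within a cell.

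\textbf{Main obstacle: boundary effects of the mask.} The cells $V_j$ straddling $\partial B_r(x_i)$ are only partially inside $\ccalM_i$, so the identity $\sum_j \int_{V_j \cap \ccalM_i} = \int_{\ccalM_i}$ fails to match the graph sum. The same boundary issue arises in Step 3: moving $x$ to $x_i$ also moves the indicator, so $\dot\gamma$ is discontinuous and Assumption~\ref{assm:kernel}(iii) fails.

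I would first try to control this by the $\mu$-mass of the shell $B_{r+r_N}(x_i)\setminus B_{r-r_N}(x_i)$, where $r_N$ is the partition radius. This mass is $O(r^{d-1}r_N)$. Dividing by the denominator, which is at least $e^{-B}\mu(\ccalM_i) \gtrsim r^d$, gives an extra term of order $r_N/r$. That term vanishes as $N\to\infty$ when the mask radius is held fixed. If an exact match with $\Delta_{\bbPhi(\bbX)}$ is required, I would instead define the graph neighborhood as the union of cells meeting $\ccalM_i$, so that the shell term disappears by construction.
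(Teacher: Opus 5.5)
Your route is the paper's. Its proof of this corollary consists only of the observation that, once the softmax sum is restricted to $\ccalN^{\le k}(x_i)$ and the integral to the mask region, ``the same proof applies and we arrive at the same bound.'' The obstacles you raise are real, and the paper does not address them.

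For a mask radius held fixed, as your shell estimate presupposes, the one-sided hypothesis $\ccalN^{\le k}(x_i)\subseteq B_r(x_i)$ is genuinely insufficient. With $k=0$ the sparse GT returns $\bbV\bbx_i$. By contrast, $\dot\bbPhi_\ccalM(\ccalT,f)(x_i)$ is a weighted average of $\bbV f$ over $B_r(x_i)$ that generically stays bounded away from $\bbV f(x_i)$, while $\Delta_{\bbPhi(\bbX)}\to 0$. So the approximate reverse inclusion you add in Step~1 is a necessary extra hypothesis, not a technicality.

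Your bookkeeping of $\mu(\ccalM_i)$ is also needed. The step \eqref{eq:thm_1_st_22} silently uses $\mu(\ccalM)=1$. Transplanting it verbatim would leave a spurious factor $1/\mu(\ccalM_i)$ after dividing by $D_\ccalM\ge e^{-B}\mu(\ccalM_i)$.

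Your main route therefore establishes $\Delta_{\bbPhi(\bbX)}$ plus a correction of order $(w+r_N)/r$. Here, in your notation, $r$ is the mask radius, $r_N$ the partition radius, and $w$ the width of the mismatch between $\ccalN^{\le k}(x_i)$ and $X_N\cap B_r(x_i)$. Note that $w$ is not automatically $O(r_N)$. Hop neighborhoods track geodesic rather than Euclidean distance, so with the Euclidean mask of \eqref{eqn:smt_attn} on a curved $\ccalM$ it need not even vanish. Hence $w\to0$ must be part of your added hypothesis. The resulting bound is sound, but, as you recognize, it is not the stated one.

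The genuine gap is your fallback for an exact match, which does not work as phrased. With $S=\{j: V_j\cap\ccalM_i\neq\emptyset\}$ one has $\bigcup_{j\in S}V_j\supsetneq\ccalM_i$. The auxiliary term \eqref{eq:interm1} still gives each straddling cell its full mass $1/N$, while the restricted MT integrates only over $\ccalM_i$. So the outer part of the shell survives, both in the numerator and in $|D_\ccalM-\tilde D_\ccalM|$.

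What does make the proof of Theorem~\ref{thm:gt_mt_convergence} go through verbatim is to take the local domain to be $\bigcup_{j\in\ccalN^{\le k}(x_i)}V_j$ itself. This is effectively what the paper's sketch does when it places the restricted integral on partition cells ($y\in V_i\subset B_r(x_i)$). But that makes the limit sample-dependent, which is incompatible with the transferability step. Comparing two graphs requires one common limit, and for the common ball-based limit the indicator's discontinuity in $x$ forces the same shell term. So for a fixed mask radius this argument cannot remove the correction. The honest conclusion is the statement with $O((w+r_N)/r)$ added, which still vanishes as $N\to\infty$.

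You could also read the mask radius as the partition radius, as the shared letter $r$ and the paper's $V_i\subset B_r(x_i)$ suggest. Then your shell term is $O(1)$, but no attention analysis is needed at all. Both outputs are convex combinations of vectors lying within $C_V(\Delta_{\bbPsi(\bbG)}+r)$, respectively $C_Vr$, of $\bbV f(x_i)$. This gives $C_V\Delta_{\bbPsi(\bbG)}+2C_Vr$ from the one-sided inclusion alone.
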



\begin{proof}
The proof of Theorem~\ref{thm:gt_mt_convergence} can be adapted to the Sparse GT case. Notice that with the definition of the $k$-hop neighborhood, the only terms contributing to the softmax are $|\ccalN^{\leq k}(x_i)|$. Similarly for the restricted MT, the only nonzero terms contributing to the integral are those contained in partitions $y\in V_i \subset B_r(x_i)$, i.e.  $D_\ccalM = \int_{V_i} \tilde{\gamma}_{iy}\bbV f(y)d\mu(y)$. With those modifications, the same proof applies and we arrive at the same bound.
\end{proof}

\section{Extended Experiments}
This section contains an extended discussion of the experimental methodology for the transferability experiment, as well as additional results including additional models, additional PEs, and ablation analyses.

\subsection{Experimental details}

\textbf{Datasets.} SNAP-Patents is a network for patents granted between 1963 and 1999 in the US. Each node is a patent, and a directed edge connects a patent to another patent that it cited. The prediction task is to classify each patent into one of five time intervals. ArXiv-year is a paper citation network on the arXiv papers. Each node represents a paper, and a directed edge connects a paper to another paper that it cited. The task is to predict the posting time of each paper, which is classified into one of five time intervals between 2013 and 2020. Both SNAP-Patents and arXiv-year are heterophilic graph datasets. OGBN-MAG is a heterogeneous network composed of a subset of the Microsoft Academic Graph (MAG), capturing the relationships among papers, authors, institutions and topics. The node classification task is to predict the venue of each paper. Reddit-Binary consists of 2000 graphs, each representing a subreddit community. The graph-level binary classification task is to identify each community as either a Q\&A community or discussion-based community using graph structures.

\begin{table}[h]
\centering
\caption{Datasets used for transferability experiments}
\resizebox{0.95\textwidth}{!}{
\begin{tabular}{rcccccc}
\toprule
\textbf{Dataset}  & Nodes & Edges & Max Train/Test Nodes & Classes & Graphs & Feature Dim \\
\midrule
SNAP-Patents & 2,923,922 & 13,975,788 & 1,315,764 & 5 & 1 & 269 \\
ArXiv-year & 169,343 & 1,166,243 & 76,203 & 5 & 1 & 128 \\
OGBN-MAG & 736,389 & 5,416,271 & 2,923,922 & 349 & 1  & 128 \\
Reddit-Binary & avg. 429.61 & avg. 497.75 & / & 2 & 2000 & 0 \\
\bottomrule
\end{tabular}
}
\label{tab:datasets}
\end{table}

\paragraph{Dataset preparation.} Each dataset is split into train/val/test fractions of 45\%-10\%-45\% respectively. For datasets that have pre-established train/test masks, we discard the masks in favor of our partition proportions. The training and testing partitions are the sources for the graph subsampling procedure explained in Section \ref{sec:experiments}, where the largest testing graph $\alpha=1.0$ corresponds to using the full 45\% training fraction and 45\% of the testing fraction. 

\paragraph{Training procedure and transferability evaluation.} For each model architecture, we train multiple models with graphs of increasing sizes and evaluate each model on testing graphs of different sizes. For single-graph datasets, the training graphs and testing graphs are constructed by subsampling a fraction of nodes from the training split and the testing split respectively. For multi-graph datasets, we construct the training graphs and testing graphs by subsampling the nodes in each training graph and each testing graph respectively by a specific fraction. We do not create graph batches, but rather train with the full subsampled graph.

\paragraph{Hyperparameters.} For each dataset, we tuned hyperparameters by training each model on the full training graph. Then, we held the selected hyperparameters fixed across all smaller training graph sizes. In general, We tuned learning rate over the interval $[1 \times 10^{-5}, 1 \times 10^{-2}]$, dropout and attention dropout over $[0, 0.5]$, the feedforward hidden dimension in $\{128, 256, 512\}$, the number of transformer heads in $\{4, 8\}$, the number of GNN/Transformer layers in $\{3, 4, 5, 6, 7\}$, the number of GNN/Sparse GT hops in $\{2, 3, 4\}$, and the Expander degree for Exphormer in $\{3, 4\}$. The full set of hyperparameters used for each model and dataset is available in Table \ref{tab:hyperparameters}. 

\paragraph{Computational resources}
The experiments reported in this paper were executed on NVIDIA DGX B200 instances, using a single per experiment (180GB of HBM3e Memory per process). Preliminary hyperparameter tuning was performed on a machine with two NVIDIA GeForce RTX 3090 GPUs, with 24GB of GPU memory each.
\begin{table}[htbp]
\centering
\caption{Hyperparameters for different model architectures across datasets}
\begin{tabular}{lcccc}
\toprule
\textbf{Model/Hyperparameters}  & snap-patents & arXiv-year & OGBN-MAG & REDDIT-BINARY \\
\midrule
\textbf{General} & & & & \\
\quad Max epochs & 300 & 300 & 700 & 350 \\
\midrule
\textbf{GNN} & & & & \\
\quad Learning rate & $1 \times 10^{-3}$ & $1 \times 10^{-2}$ & $1 \times 10^{-2}$ & $3 \times 10^{-3}$ \\
\quad Dropout & 0.25 & 0.5 & 0.25 & 0 \\
\quad Hidden channels & 256 & 256 & 128 & 512 \\
\quad Number of hop & 3 & 3 & 3 & 3\\
\quad Number of layers & 7 & 3 & 7 & 2 \\
\midrule
\textbf{Sparse GT + RPEARL}  & & & & \\
\quad Learning rate & $5 \times 10^{-4}$ & $5 \times 10^{-4}$ & $5 \times 10^{-4}$ & $1 \times 10^{-5}$ \\
\quad Dropout & 0.01 & 0.01 & 0.5 & 0.01 \\
\quad Attention dropout & 0.01 & 0.01 & 0.05 & 0.01 \\
\quad Dim model & 256 & 256 & 128 & 128 \\
\quad Heads & 8 & 8 & 4 & 8 \\
\quad Number of hops & 2 & 2 & 2 & 3 \\
\quad Number of layers & 8 & 3 & 3 & 3 \\
\midrule
\textbf{Dense GT} & & & & \\
\quad Learning rate & $5 \times 10^{-4}$ & $5 \times 10^{-4}$ & $3 \times 10^{-4}$ & $5 \times 10^{-4}$ \\
\quad Dropout & 0.05 & 0.05 & 0.05 & 0.1 \\
\quad Attention dropout & 0.15 & 0.15 & 0.15 & 0.1 \\
\quad Transformer Heads & 4 & 4 & 4 & 8 \\
\quad Transformer dim feedforward & 128 & 128 & 128 & 512 \\
\quad Transformer dim model & 256 & 256 & 128 & 128 \\
\quad Transformer number of layers & 3 & 3 & 3 & 3 \\
\quad Features & Data & Random & Data & Random \\
\quad PE hidden channels & 128 & 128 & 128 & 512 \\
\quad PE number of layers & 8 & 8 & 8 & 4 \\
\midrule
\textbf{Exphormer} & & & & \\
\quad Learning rate & $1 \times 10^{-3}$ & $1 \times 10^{-3}$ & $1 \times 10^{-3}$ & $5 \times 10^{-4}$ \\
\quad Dropout & 0.5 & 0.5 & 0.5 & 0.01 \\
\quad Dim model & 256 & 128 & 256 & 512 \\
\quad Dim feedforward & 512 & 512 & 512 & 512 \\
\quad Expander algorithm & Random-d & Random-d & Random-d & Random-d \\
\quad Expander degree & 3 & 3 & 3 & 4 \\
\quad Heads & 8 & 8 & 8 & 4 \\
\quad Number of layers & 2 & 2 & 2 & 5 \\
\midrule
\textbf{MLP} & & & & \\
\quad Learning rate & $1 \times 10^{-3}$ & $1 \times 10^{-3}$ & $1 \times 10^{-3}$ & $5 \times 10^{-4}$ \\
\quad Dropout & 0.5 & 0.5 & 0.5 & 0 \\
\quad Hidden Dim & 256 & 256 & 128 & 512 \\
\quad Number of layers & 3 & 3 & 5 & 3 \\
\bottomrule
\end{tabular}
\label{tab:hyperparameters}
\end{table}

\subsection{Extended Results} \label{app:additional_results}
In this section, we provide additional results, including transferability analysis for other reference architectures, as well as the full heatmaps for every model-dataset combination.

\paragraph{Model implementation details.} For both dense and sparse GTs, we implement multiheaded, scaled dot-product self-attention~\citep{vaswaniAttention2017} with feedforward layers. RPEARL's GNNs are implemented as in Equation~\eqref{eqn:gnn_matrix_form} using TAGConv filters~\citep{duTopologyAdaptiveGraph2018}.\todo{expand and update}

\paragraph{Additional models.} In addition to Dense GT and Sparse GT,  we consider MLP and GNN~\citep{duTopologyAdaptiveGraph2018} baselines, GraphGPS~\citep{rampavsek2022recipe}, a GNN-transformer hybrid with Laplacian eigenvector PEs, and Exphormer~\citep{shirzad2023exphormer}, a sparse variant of graph transformer that computes attention coefficients for (i) one-hop neighbors, (ii) random edges from an expander graph, (iii) $N$ additional attention coefficients between each node and a virtual global node. Crucially, Exphormer has no positional encodings beyond its masking.

\paragraph{Transferability and comparison between models.} Figure~\ref{fig:all_transferability} contains transferability plots for all architectures. We observe transferability patterns for Dense GT, Sparse GT and GNN in most datasets. Exphormer shows transferable behavior in MAG and SNAP.  In all datasets, GT is either comparable (MAG,Reddit) or significantly superior (SNAP, Arxiv-year) to GNN. This performance gap is consistent with previous work's observations of the success of global attention in heterophilic datasets 
\citep{dwivedi2020generalization}. The trend of Exphormer in ArXiV-year also shows a more pronounced monotonic increase, possibly due to the random sampling procedure being more beneficial on larger graphs. In MAG and Reddit, GNN and GTs show comparable performance and transferability patterns. As the training fraction increases, the total number of nodes decreases. In the cases of SNAP-patents and MAG, Exphormers presents a monotonically increasing performance as training graph size increases.  

\begin{figure}[htbp]
    \centering
    \includegraphics[width=0.95\linewidth]{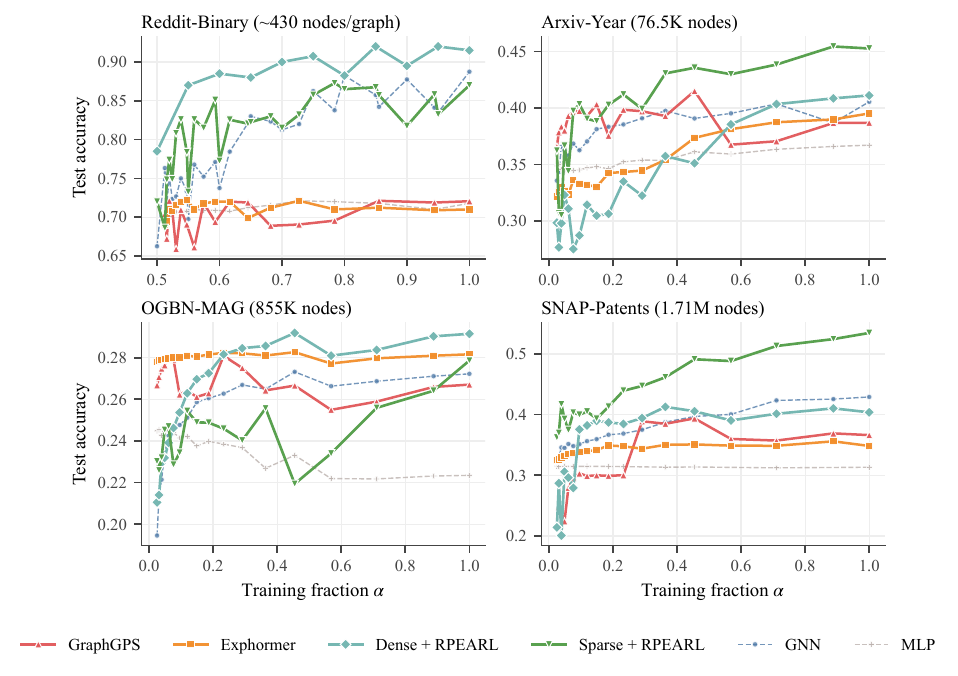}
    \caption{Transferability plots for all datasets and architectures. For each dataset, the $x$ axis represents the train graph sizes as a proportion of the largest graph $(\alpha)$, and the $y$ axis is the test accuracy at the full-sized graph. The titles show dataset name and largest graph size.}
    \label{fig:all_transferability}
\end{figure}
\todo{decide if add the heatmaps}

\paragraph{Transference to smaller test fractions.} Transferability also implies that model performance transfers to graph sizes smaller than the training graph size. Figure~\ref{fig:heatmaps_sparse_dense_and_sparse_gt} illustrates this effect with a heatmap of test accuracy at various combinations of training (row) and testing (columm) graph sizes. Both sparse and dense GT exhibit competitive accuracy at various settings.

\begin{figure}
    \centering
    \includegraphics[width=\textwidth]{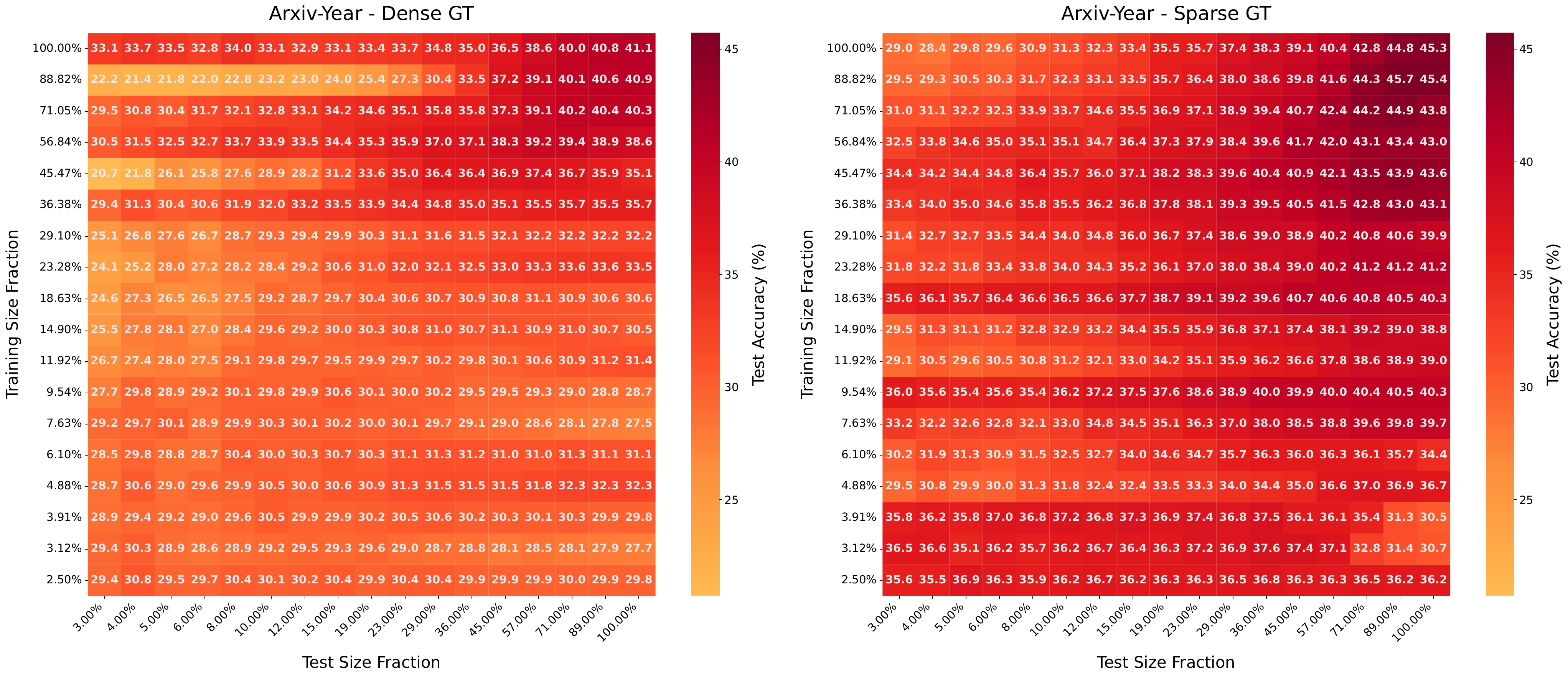}
    \caption{Transferability heatmaps on Arxiv-year. The $x$ axis correspond to train graph sizes as a proportion of the largest graph, and the $y$ axis are test graph size fractions. The color corresponds to the test accuracy at each setting.}
    \label{fig:heatmaps_sparse_dense_and_sparse_gt}
\end{figure}

\subsection{Ablation analysis and effect of masking} \label{app:ablation} \label{sec:experiments_ablation}

\paragraph{RPEARL + Masking achieves the best transference on SNAP.} We perform an ablation analysis of SGT's structural components on the SNAP Dataset to gain further insight of the effect of each mechanism on its transferability. For this, we train each architecture on a $30\%$ downsample (513K nodes) of the largest training set, and evaluate on a 1.71M node test set. In addition to ablating RPEARL and Mask, we also consider random edges (RE) using random expander graphs of degree 3~\citep{shirzad2023exphormer}.  On Table~\ref{tab:ablation}, we can observe that RPEARL encodings enhance the baseline GT performance by 10.53\%. Adding the sparse encoding mask improves performance by 26.68\%.  Combining both encodings yields a performance gain of 58.63\%, reaching the best performance across all settings. Finally, random edges do not seem to improve performance.  Our findings coincide with observations from previous works that this attention mask may be a useful inductive bias~\citep{linUnifiedGTUniversalFramework2024}. 

\paragraph{K-hop parameter ablation.}  The $k$-hop neighborhood mask of sparse GT trades off computational complexity and global information, resulting up to 100x training speedups relative to DGT, while retaining comparable performance (see Table~\ref{tab:runtimes}). Beyond this tradeoff, we also observed improved test performance in the previous ablation. In our experiments, we find all values of $k$ to be similarly transferable, with the best performance in $k\in\{3,4\}$. In Figure~\ref{fig:transferability_k_hop_ablation}, we observe little sensitivity to the value of $K$. In this particular case, the optimal value is at $k=4$.

\begin{figure}[b]
    \centering
    \includegraphics[width=0.6\textwidth]{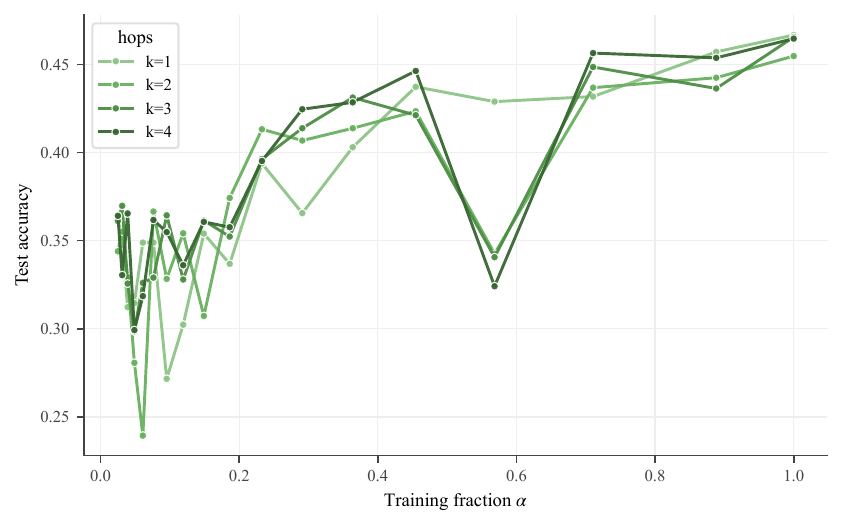}
    \caption{Transferability plot for sparse GT on ArXiv-year on various }
    \label{fig:transferability_k_hop_ablation}
\end{figure}

\begin{table}
    \centering
    
    \caption{Test accuracy of GT trained with $\alpha=0.3$, with different components in Snap-patents (1.71M nodes).}
     \begin{tabular}{l c c} \label{tab:ablation}
    \textbf{Architecture} & \textbf{Accuracy} & \textbf{\% vs. GT} \\ 
    \midrule
    GT (no PE) & 31.33 & -- \\
    GT + RPEARL & 34.63 & +10.53\% \\
    GT + Mask & 39.69 & +26.68\% \\
    GT + Mask + RPEARL & \textbf{49.70} & \textbf{+58.63\%} \\
    GT + Mask + RE & 31.01 & -1.02\% \\
    GT + Mask + RPEARL + RE & 46.55 & +48.58\% \\
    \bottomrule
    \end{tabular}

\end{table}

\subsection{Runtime analysis} \label{app:runtime}
Our transferability results imply that it is possible to train with small graphs and attain competitive test performance on larger graphs, resulting in shorter training times. In addition to this, Sparse GT reduces  the computational complexity from quadratic in the number of nodes to quadratic in the $k$-hop neighborhood cardinality. Table~\ref{tab:runtimes} shows the training times of Dense and Sparse GT for multiple fractions. We observe that Sparse GT is one or two orders of magnitude faster than Dense GT in all settings. It is worth highlighting that Sparse GT is both faster and achieves superior performance than Dense GT in ArXiv-year, and SNAP-Patents.

\begin{table}[]
    \centering
     \caption{Runtimes in minutes for three subsample sizes of Sparse and Dense GT}
    \begin{tabular}{llrrr}
        \toprule
        \textbf{Dataset} & \textbf{Model} & \textbf{6.10\%} & \textbf{56.84\%} & \textbf{100.00\%} \\
        \midrule
        ArXiv-Year & Dense GT & 51.6 & 10.0 & 19.8 \\
        ArXiv-Year & Sparse GT & 8.8 & 1.4 & 2.0 \\
        OGBN-MAG & Dense GT & 41.8 & 103.3 & 366.6 \\
        OGBN-MAG & Sparse GT & 22.4 & 7.4 & 15.0 \\
        SNAP-Patents & Dense GT & 223.3 & 945.7 & 2763.6 \\
        SNAP-Patents & Sparse GT & 28.3 & 13.5 & 23.2 \\
        \bottomrule
    \end{tabular}
    \label{tab:runtimes}
\end{table}

\subsection{Alternative Positional Encodings} \label{app:other_pes}

Our experiments from Section~\ref{sec:experiments} focused on analyzing the transferability of a variety of combinations of GT backbone and PE architecture. In this section we provide additional results on the transferability of multiple PE variants for a fixed GT backbone. In Figure~\ref{fig:rpearl_ablation}, we examine the transferability with different GNN architectures, all using RPEARL-style random features as described in Section~\ref{sec:RPEAL}. We observe similar transferability in all cases, with the exception of GIN, where we observe a slight degradation across training sizes. 

We also compare the effect of different families of PE in Figure~\ref{fig:posenc_comparison}. We observe similar performance across training fractions for RWPE and Laplacian-based PEs. Relative to these, RPEARL PEs exhibits improved performance across all training fractions.

\begin{figure}
    \centering
    \begin{subfigure}{0.49\linewidth}
        \centering
        \includegraphics[width=\linewidth]{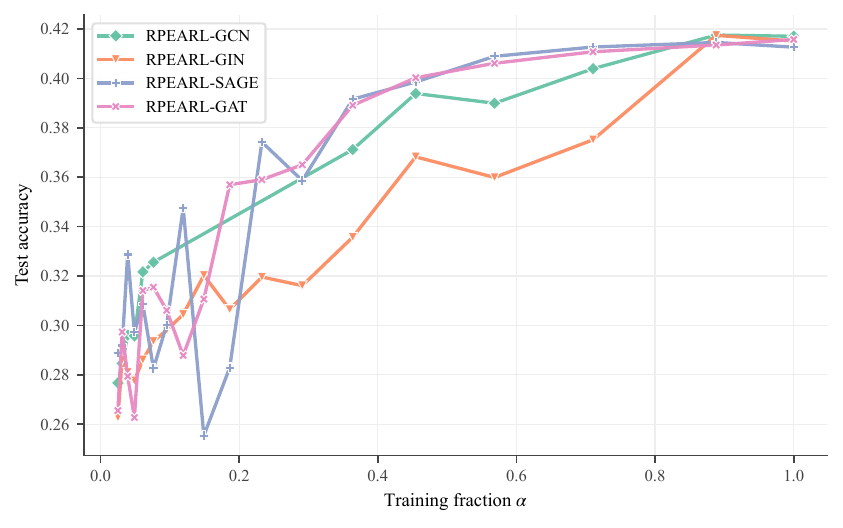}
        \caption{Transferability of RPEARL with different GNNs for the positional encoding backbones. Dataset: ArXiv-year.}
        \label{fig:rpearl_ablation}
    \end{subfigure}
    \hfill
    \begin{subfigure}{0.49\linewidth}
        \centering
        \includegraphics[width=\linewidth]{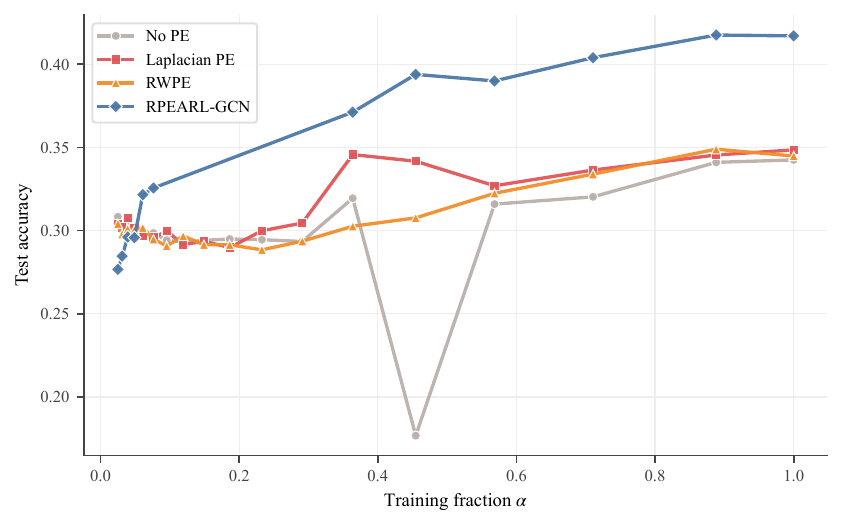}
        \caption{Transferability of different positional encodings with the same GT backbone (dense). Dataset: ArXiv-year.}
        \label{fig:posenc_comparison}
    \end{subfigure}
    \label{fig:posenc_combined}
\end{figure}

\subsection{Terrain graphs} \label{app:terrain_graphs}
In Section~\ref{sec:experiments_terrain_graph} we presented a practical implication of our theoretical results using terrain graphs, which are approximations of terrain manifolds. Sparse GT provides an efficient, scalable, and expressive architecture that can be applied to the metric learning problem in terrain graphs. Furthermore, the transferability guarantees of Corollary~\ref{sec:transference} imply that we need not train with the highest possible resolution of terrain graph in order to achieve competitive performance. Here we provide implementation details and an additional result on a second terrain graph dataset.

\todo{Add TG Hparams}

\textbf{Metric learning.} The setup of ~\citep{chenDecoupledNeuroGFShortest2025} is to use imitation learning to learn a latent space that resembles true shortest-path distance between points. For two points $x_i,x_j$ in the terrain graph, we train to minimize $\| L_1(\phi(x_i)-\phi(x_j)) - SPD(x_i,x_j)\|_2$, where $\phi$ is an embedding function, in our case either GNN or SGT. On~\citep{chenDecoupledNeuroGFShortest2025}, a subsequent stage consists of freezing the embedding function's parameters and finetuning an additional MLP on a larger set of sample points. In our experiments, we omit this second stage. During training, we bias the point pair sampling to increase the frequency of difficult paths (points with higher altitude variance and shortest distance).

\paragraph{Datasets.} The terrain graphs we use are Digital Elevation Model (DEM) datasets. The results of Section~\ref{sec:experiments_terrain_graph} used a DEM of the Troms region of Norway~\citep{kartverket_dem_2025}. The Norway dataset is available in a $2000\times2000$ (4M nodes) resolution. For our experiments we downsample so that the largest grid graph is $250\times250$. 

\paragraph{Graph and train/test data constructions.} To test transferability, we generate graphs of evenly spaced points based on the high resolution graph. The downsampling parameter $r$ controls the space between generated points, i.e. $r=1$ is the full resolution, $r=2$ takes every second point (resulting in a $1000\times1000$ graph, etc. The edges of the graph are taken to be the 8 nearest neighbors. For training datasets, we subsample 500 sources and 100 targets per source, for a total of $50,000$ pairs. \todo{explain the training dataset sampling weight.} For testing, a total of $10,000$ source points are selected based on maximum height, and the SPD is computed between these points and every other point in the graph.


\newpage
\section*{NeurIPS Paper Checklist}
\begin{enumerate}

\item {\bf Claims}
    \item[] Question: Do the main claims made in the abstract and introduction accurately reflect the paper's contributions and scope?
    \item[] Answer: \answerYes{}{} 
    \item[] Justification: We claim that GTs can inherit the transferability of their positional encodings. We provide theoretical guarantees and empirical support for this claim. 
    \item[] Guidelines:
    \begin{itemize}
        \item The answer \answerNA{} means that the abstract and introduction do not include the claims made in the paper.
        \item The abstract and/or introduction should clearly state the claims made, including the contributions made in the paper and important assumptions and limitations. A \answerNo{} or \answerNA{} answer to this question will not be perceived well by the reviewers. 
        \item The claims made should match theoretical and experimental results, and reflect how much the results can be expected to generalize to other settings. 
        \item It is fine to include aspirational goals as motivation as long as it is clear that these goals are not attained by the paper. 
    \end{itemize}

\item {\bf Limitations}
    \item[] Question: Does the paper discuss the limitations of the work performed by the authors?
    \item[] Answer: \answerYes{} 
    \item[] Justification: We include a Limitation section where we discuss the potential limitations of this work.
    \item[] Guidelines:
    \begin{itemize}
        \item The answer \answerNA{} means that the paper has no limitation while the answer \answerNo{} means that the paper has limitations, but those are not discussed in the paper. 
        \item The authors are encouraged to create a separate ``Limitations'' section in their paper.
        \item The paper should point out any strong assumptions and how robust the results are to violations of these assumptions (e.g., independence assumptions, noiseless settings, model well-specification, asymptotic approximations only holding locally). The authors should reflect on how these assumptions might be violated in practice and what the implications would be.
        \item The authors should reflect on the scope of the claims made, e.g., if the approach was only tested on a few datasets or with a few runs. In general, empirical results often depend on implicit assumptions, which should be articulated.
        \item The authors should reflect on the factors that influence the performance of the approach. For example, a facial recognition algorithm may perform poorly when image resolution is low or images are taken in low lighting. Or a speech-to-text system might not be used reliably to provide closed captions for online lectures because it fails to handle technical jargon.
        \item The authors should discuss the computational efficiency of the proposed algorithms and how they scale with dataset size.
        \item If applicable, the authors should discuss possible limitations of their approach to address problems of privacy and fairness.
        \item While the authors might fear that complete honesty about limitations might be used by reviewers as grounds for rejection, a worse outcome might be that reviewers discover limitations that aren't acknowledged in the paper. The authors should use their best judgment and recognize that individual actions in favor of transparency play an important role in developing norms that preserve the integrity of the community. Reviewers will be specifically instructed to not penalize honesty concerning limitations.
    \end{itemize}

\item {\bf Theory assumptions and proofs}
    \item[] Question: For each theoretical result, does the paper provide the full set of assumptions and a complete (and correct) proof?
    \item[] Answer: \answerYes{} 
    \item[] Justification: Proofs for our Theorems can be found in the supplementary material. Assumptions are clearly stated in the main body. 
    \item[] Guidelines:
    \begin{itemize}
        \item The answer \answerNA{} means that the paper does not include theoretical results. 
        \item All the theorems, formulas, and proofs in the paper should be numbered and cross-referenced.
        \item All assumptions should be clearly stated or referenced in the statement of any theorems.
        \item The proofs can either appear in the main paper or the supplemental material, but if they appear in the supplemental material, the authors are encouraged to provide a short proof sketch to provide intuition. 
        \item Inversely, any informal proof provided in the core of the paper should be complemented by formal proofs provided in appendix or supplemental material.
        \item Theorems and Lemmas that the proof relies upon should be properly referenced. 
    \end{itemize}

    \item {\bf Experimental result reproducibility}
    \item[] Question: Does the paper fully disclose all the information needed to reproduce the main experimental results of the paper to the extent that it affects the main claims and/or conclusions of the paper (regardless of whether the code and data are provided or not)?
    \item[] Answer: \answerYes{} 
    \item[] Justification: We provide thorough experimental details in the appendix, and provide the code necessary to reproduce the experiments. 
    \item[] Guidelines:
    \begin{itemize}
        \item The answer \answerNA{} means that the paper does not include experiments.
        \item If the paper includes experiments, a \answerNo{} answer to this question will not be perceived well by the reviewers: Making the paper reproducible is important, regardless of whether the code and data are provided or not.
        \item If the contribution is a dataset and\slash or model, the authors should describe the steps taken to make their results reproducible or verifiable. 
        \item Depending on the contribution, reproducibility can be accomplished in various ways. For example, if the contribution is a novel architecture, describing the architecture fully might suffice, or if the contribution is a specific model and empirical evaluation, it may be necessary to either make it possible for others to replicate the model with the same dataset, or provide access to the model. In general. releasing code and data is often one good way to accomplish this, but reproducibility can also be provided via detailed instructions for how to replicate the results, access to a hosted model (e.g., in the case of a large language model), releasing of a model checkpoint, or other means that are appropriate to the research performed.
        \item While NeurIPS does not require releasing code, the conference does require all submissions to provide some reasonable avenue for reproducibility, which may depend on the nature of the contribution. For example
        \begin{enumerate}
            \item If the contribution is primarily a new algorithm, the paper should make it clear how to reproduce that algorithm.
            \item If the contribution is primarily a new model architecture, the paper should describe the architecture clearly and fully.
            \item If the contribution is a new model (e.g., a large language model), then there should either be a way to access this model for reproducing the results or a way to reproduce the model (e.g., with an open-source dataset or instructions for how to construct the dataset).
            \item We recognize that reproducibility may be tricky in some cases, in which case authors are welcome to describe the particular way they provide for reproducibility. In the case of closed-source models, it may be that access to the model is limited in some way (e.g., to registered users), but it should be possible for other researchers to have some path to reproducing or verifying the results.
        \end{enumerate}
    \end{itemize}

\item {\bf Open access to data and code}
    \item[] Question: Does the paper provide open access to the data and code, with sufficient instructions to faithfully reproduce the main experimental results, as described in supplemental material?
    \item[] Answer: \answerYes{} 
    \item[] Justification: The code is provided within the supplementary material. Data sources are also documented. 
    \item[] Guidelines:
    \begin{itemize}
        \item The answer \answerNA{} means that paper does not include experiments requiring code.
        \item Please see the NeurIPS code and data submission guidelines (\url{https://neurips.cc/public/guides/CodeSubmissionPolicy}) for more details.
        \item While we encourage the release of code and data, we understand that this might not be possible, so \answerNo{} is an acceptable answer. Papers cannot be rejected simply for not including code, unless this is central to the contribution (e.g., for a new open-source benchmark).
        \item The instructions should contain the exact command and environment needed to run to reproduce the results. See the NeurIPS code and data submission guidelines (\url{https://neurips.cc/public/guides/CodeSubmissionPolicy}) for more details.
        \item The authors should provide instructions on data access and preparation, including how to access the raw data, preprocessed data, intermediate data, and generated data, etc.
        \item The authors should provide scripts to reproduce all experimental results for the new proposed method and baselines. If only a subset of experiments are reproducible, they should state which ones are omitted from the script and why.
        \item At submission time, to preserve anonymity, the authors should release anonymized versions (if applicable).
        \item Providing as much information as possible in supplemental material (appended to the paper) is recommended, but including URLs to data and code is permitted.
    \end{itemize}

\item {\bf Experimental setting/details}
    \item[] Question: Does the paper specify all the training and test details (e.g., data splits, hyperparameters, how they were chosen, type of optimizer) necessary to understand the results?
    \item[] Answer: \answerYes{} 
    \item[] Justification: We explain each experiment in the main body,  provide additional experimental details in the appendix, and provide the code to reproduce them. 
    \item[] Guidelines:
    \begin{itemize}
        \item The answer \answerNA{} means that the paper does not include experiments.
        \item The experimental setting should be presented in the core of the paper to a level of detail that is necessary to appreciate the results and make sense of them.
        \item The full details can be provided either with the code, in appendix, or as supplemental material.
    \end{itemize}

\item {\bf Experiment statistical significance}
    \item[] Question: Does the paper report error bars suitably and correctly defined or other appropriate information about the statistical significance of the experiments?
    \item[] Answer: \answerNo{} 
    \item[] Justification: We do not produce error bars for our experiments, since large-scale graph training is prohibitively expensive to execute multiple times (some trainings exceed 24h of runtime). 
    \item[] Guidelines:
    \begin{itemize}
        \item The answer \answerNA{} means that the paper does not include experiments.
        \item The authors should answer \answerYes{} if the results are accompanied by error bars, confidence intervals, or statistical significance tests, at least for the experiments that support the main claims of the paper.
        \item The factors of variability that the error bars are capturing should be clearly stated (for example, train/test split, initialization, random drawing of some parameter, or overall run with given experimental conditions).
        \item The method for calculating the error bars should be explained (closed form formula, call to a library function, bootstrap, etc.)
        \item The assumptions made should be given (e.g., Normally distributed errors).
        \item It should be clear whether the error bar is the standard deviation or the standard error of the mean.
        \item It is OK to report 1-sigma error bars, but one should state it. The authors should preferably report a 2-sigma error bar than state that they have a 96\% CI, if the hypothesis of Normality of errors is not verified.
        \item For asymmetric distributions, the authors should be careful not to show in tables or figures symmetric error bars that would yield results that are out of range (e.g., negative error rates).
        \item If error bars are reported in tables or plots, the authors should explain in the text how they were calculated and reference the corresponding figures or tables in the text.
    \end{itemize}

\item {\bf Experiments compute resources}
    \item[] Question: For each experiment, does the paper provide sufficient information on the computer resources (type of compute workers, memory, time of execution) needed to reproduce the experiments?
    \item[] Answer: \answerYes{} 
    \item[] Justification: We outline the different runtimes used for our experiments in the Appendix. 
    \item[] Guidelines:
    \begin{itemize}
        \item The answer \answerNA{} means that the paper does not include experiments.
        \item The paper should indicate the type of compute workers CPU or GPU, internal cluster, or cloud provider, including relevant memory and storage.
        \item The paper should provide the amount of compute required for each of the individual experimental runs as well as estimate the total compute. 
        \item The paper should disclose whether the full research project required more compute than the experiments reported in the paper (e.g., preliminary or failed experiments that didn't make it into the paper). 
    \end{itemize}
    
\item {\bf Code of ethics}
    \item[] Question: Does the research conducted in the paper conform, in every respect, with the NeurIPS Code of Ethics \url{https://neurips.cc/public/EthicsGuidelines}?
    \item[] Answer: \answerYes{} 
    \item[] Justification: We have reviewed the Code of Ethics, and have adhered by it.
    \item[] Guidelines:
    \begin{itemize}
        \item The answer \answerNA{} means that the authors have not reviewed the NeurIPS Code of Ethics.
        \item If the authors answer \answerNo, they should explain the special circumstances that require a deviation from the Code of Ethics.
        \item The authors should make sure to preserve anonymity (e.g., if there is a special consideration due to laws or regulations in their jurisdiction).
    \end{itemize}

\item {\bf Broader impacts}
    \item[] Question: Does the paper discuss both potential positive societal impacts and negative societal impacts of the work performed?
    \item[] Answer: \answerNA{} 
    \item[] Justification: We do not believe that our results imply any particular societal impacts beyond those inherently tied to graph machine learning and machine learning in general.
    \item[] Guidelines:
    \begin{itemize}
        \item The answer \answerNA{} means that there is no societal impact of the work performed.
        \item If the authors answer \answerNA{} or \answerNo, they should explain why their work has no societal impact or why the paper does not address societal impact.
        \item Examples of negative societal impacts include potential malicious or unintended uses (e.g., disinformation, generating fake profiles, surveillance), fairness considerations (e.g., deployment of technologies that could make decisions that unfairly impact specific groups), privacy considerations, and security considerations.
        \item The conference expects that many papers will be foundational research and not tied to particular applications, let alone deployments. However, if there is a direct path to any negative applications, the authors should point it out. For example, it is legitimate to point out that an improvement in the quality of generative models could be used to generate Deepfakes for disinformation. On the other hand, it is not needed to point out that a generic algorithm for optimizing neural networks could enable people to train models that generate Deepfakes faster.
        \item The authors should consider possible harms that could arise when the technology is being used as intended and functioning correctly, harms that could arise when the technology is being used as intended but gives incorrect results, and harms following from (intentional or unintentional) misuse of the technology.
        \item If there are negative societal impacts, the authors could also discuss possible mitigation strategies (e.g., gated release of models, providing defenses in addition to attacks, mechanisms for monitoring misuse, mechanisms to monitor how a system learns from feedback over time, improving the efficiency and accessibility of ML).
    \end{itemize}
    
\item {\bf Safeguards}
    \item[] Question: Does the paper describe safeguards that have been put in place for responsible release of data or models that have a high risk for misuse (e.g., pre-trained language models, image generators, or scraped datasets)?
    \item[] Answer: \answerNA{} 
    \item[] Justification: We do not release models that require any safeguards.
    \item[] Guidelines:
    \begin{itemize}
        \item The answer \answerNA{} means that the paper poses no such risks.
        \item Released models that have a high risk for misuse or dual-use should be released with necessary safeguards to allow for controlled use of the model, for example by requiring that users adhere to usage guidelines or restrictions to access the model or implementing safety filters. 
        \item Datasets that have been scraped from the Internet could pose safety risks. The authors should describe how they avoided releasing unsafe images.
        \item We recognize that providing effective safeguards is challenging, and many papers do not require this, but we encourage authors to take this into account and make a best faith effort.
    \end{itemize}

\item {\bf Licenses for existing assets}
    \item[] Question: Are the creators or original owners of assets (e.g., code, data, models), used in the paper, properly credited and are the license and terms of use explicitly mentioned and properly respected?
    \item[] Answer: \answerYes{} 
    \item[] Justification: The code will be released under an MIT license, and the dataset sources are properly credited.
    \item[] Guidelines:
    \begin{itemize}
        \item The answer \answerNA{} means that the paper does not use existing assets.
        \item The authors should cite the original paper that produced the code package or dataset.
        \item The authors should state which version of the asset is used and, if possible, include a URL.
        \item The name of the license (e.g., CC-BY 4.0) should be included for each asset.
        \item For scraped data from a particular source (e.g., website), the copyright and terms of service of that source should be provided.
        \item If assets are released, the license, copyright information, and terms of use in the package should be provided. For popular datasets, \url{paperswithcode.com/datasets} has curated licenses for some datasets. Their licensing guide can help determine the license of a dataset.
        \item For existing datasets that are re-packaged, both the original license and the license of the derived asset (if it has changed) should be provided.
        \item If this information is not available online, the authors are encouraged to reach out to the asset's creators.
    \end{itemize}

\item {\bf New assets}
    \item[] Question: Are new assets introduced in the paper well documented and is the documentation provided alongside the assets?
    \item[] Answer: \answerYes{} 
    \item[] Justification: We provide documentation alongside our code assets in order to reproduce the results.
    \item[] Guidelines:
    \begin{itemize}
        \item The answer \answerNA{} means that the paper does not release new assets.
        \item Researchers should communicate the details of the dataset\slash code\slash model as part of their submissions via structured templates. This includes details about training, license, limitations, etc. 
        \item The paper should discuss whether and how consent was obtained from people whose asset is used.
        \item At submission time, remember to anonymize your assets (if applicable). You can either create an anonymized URL or include an anonymized zip file.
    \end{itemize}

\item {\bf Crowdsourcing and research with human subjects}
    \item[] Question: For crowdsourcing experiments and research with human subjects, does the paper include the full text of instructions given to participants and screenshots, if applicable, as well as details about compensation (if any)? 
    \item[] Answer: \answerNA{} 
    \item[] Justification: Our research does not involve crowdsourcing.
    \item[] Guidelines:
    \begin{itemize}
        \item The answer \answerNA{} means that the paper does not involve crowdsourcing nor research with human subjects.
        \item Including this information in the supplemental material is fine, but if the main contribution of the paper involves human subjects, then as much detail as possible should be included in the main paper. 
        \item According to the NeurIPS Code of Ethics, workers involved in data collection, curation, or other labor should be paid at least the minimum wage in the country of the data collector. 
    \end{itemize}

\item {\bf Institutional review board (IRB) approvals or equivalent for research with human subjects}
    \item[] Question: Does the paper describe potential risks incurred by study participants, whether such risks were disclosed to the subjects, and whether Institutional Review Board (IRB) approvals (or an equivalent approval/review based on the requirements of your country or institution) were obtained?
    \item[] Answer: \answerNA{} 
    \item[] Justification: The paper does not involve crowdsourcing nor research with human subjects.
    \item[] Guidelines:
    \begin{itemize}
        \item The answer \answerNA{} means that the paper does not involve crowdsourcing nor research with human subjects.
        \item Depending on the country in which research is conducted, IRB approval (or equivalent) may be required for any human subjects research. If you obtained IRB approval, you should clearly state this in the paper. 
        \item We recognize that the procedures for this may vary significantly between institutions and locations, and we expect authors to adhere to the NeurIPS Code of Ethics and the guidelines for their institution. 
        \item For initial submissions, do not include any information that would break anonymity (if applicable), such as the institution conducting the review.
    \end{itemize}

\item {\bf Declaration of LLM usage}
    \item[] Question: Does the paper describe the usage of LLMs if it is an important, original, or non-standard component of the core methods in this research? Note that if the LLM is used only for writing, editing, or formatting purposes and does \emph{not} impact the core methodology, scientific rigor, or originality of the research, declaration is not required.
    \item[] Answer: \answerNA{} 
    \item[] Justification: The core method development in this research does not involve LLMs as any important, original, or non-standard components
    \item[] Guidelines:
    \begin{itemize}
        \item The answer \answerNA{} means that the core method development in this research does not involve LLMs as any important, original, or non-standard components.
        \item Please refer to our LLM policy in the NeurIPS handbook for what should or should not be described.
    \end{itemize}

\end{enumerate}

\end{document}